\titleformat*{\subsubsection}{\small\bfseries}
\newcommand{\cov}[2]{\text{Cov}\left[#1, #2\right]}
\newcommand{\var}{{\bf var}}
\newcommand{\g}{{\bf g}}
\newcommand{\w}{{\bf w}}
\renewcommand{\u}{{\bf u}}
\renewcommand{\d}{{\bf d}}
\newcommand{\pw}{\tilde{\w}}
\newcommand{\x}{{\bf x}}
\newcommand{\y}{{\bf y}}
\renewcommand{\v}{{\bf v}}
\newcommand{\z}{{\bf z}}
\newcommand{\q}{{\bf q}}
\newcommand{\V}{{\cal V}}
\newcommand{\bigo}{{\cal O}}
\def\Bm{{\bf B}}
\def\Qm{{\bf Q}}
\def\Im{{\bf I}}
\newcommand{\A}{{\bf A}}
\newcommand{\B}{{\bf B}}
\newcommand{\W}{{\bf W}}
\newcommand{\U}{{\bf U}}
\renewcommand{\V}{{\bf V}}
\newcommand{\D}{{\bf D}}
\renewcommand{\cov}{{\bf S}}
\newcommand{\balpha}{\boldsymbol{\alpha}}
\newcommand{\zero}{{\bf 0}}
\newcommand{\C}{{\bf C}}
\newcommand{\E}{{\mathbf E}}
\newcommand{\Z}{{\mathbf Z}}
\newcommand{\N}{{\mathbf N}}
\newcommand{\R}{{\mathbb{R}}}
\newcommand{\cosapow}[4]{\cos^{#1} \angle_{#2} (#3,#4)}
\newcommand{\sinapow}[4]{\sin^{#1} \angle_{#2} (#3,#4)}
\newcommand{\cosasq}[3]{\cos^2 \angle_{#1} (#2,#3)}
\newcommand{\sinasq}[3]{\sin^2 \angle_{#1} (#2,#3)}
\renewcommand{\rq}{\rho}
\newcommand{\fls}{f_{\text{{\sc ols}}}}
\newcommand{\flh}{f_{\text{{\sc lh}}}}
\newcommand{\fnn}{f_{\text{{\sc nn}}}}
\newcommand{\tdir}{T_{\text{d}}}
\newcommand{\tgd}{T_{\text{gd}}}
\newcommand{\tsc}{T_{\text{s}}}
\newtheorem{lemma}{Lemma}
\newtheorem{proposition}{Proposition}
\theoremstyle{definition}
\newcommand{\comment}[1]{}
\begin{document}

%

%

\twocolumn[

\aistatstitle{Exponential convergence rates for Batch Normalization: \\ \large{The power of length-direction decoupling in non-convex optimization}}

\aistatsauthor{ Jonas Kohler*\And Hadi Daneshmand* \And  Aurelien Lucchi \And Thomas Hofmann}

\aistatsaddress{ ETH Zurich \And ETH Zurich \And ETH Zurich \And ETH Zurich}

\aistatsauthor{ Ming Zhou \And Klaus Neymeyr}
\aistatsaddress{Universit\"at Rostock \And Universit\"at Rostock } ]

\begin{abstract}
  Normalization techniques such as Batch Normalization have been applied successfully for training deep neural networks. Yet, despite its apparent empirical benefits, the reasons behind the success of Batch Normalization are mostly hypothetical. We here aim to provide a more thorough theoretical understanding from a classical optimization perspective. Our main contribution towards this goal is the identification of various problem instances in the realm of machine learning where 
  Batch Normalization can provably accelerate optimization. We argue that this acceleration is due to the fact that Batch Normalization splits the optimization task into optimizing length and direction of the parameters separately. This allows gradient-based methods to leverage a favourable global structure in the loss landscape that we prove to exist in Learning Halfspace problems and neural network training with Gaussian inputs. We thereby turn Batch Normalization from an effective practical heuristic into a provably converging algorithm for these settings. Furthermore, we substantiate our analysis with empirical evidence that suggests the validity of our theoretical results in a broader context.
\end{abstract}

\section{INTRODUCTION}

One of the most important recent innovations for optimizing deep neural networks is Batch Normalization ({\sc Bn})~\citep{ioffe15batch}. This technique has been proven to successfully stabilize and accelerate training of deep neural networks and is thus by now standard in many state-of-the art architectures such as ResNets \citep{he2016deep} and the latest Inception Nets \citep{szegedy2017inception}. 
The success of Batch Normalization has promoted its key idea that normalizing the inner layers of a neural network stabilizes training which recently led to the development of many such normalization methods such as~\citep{arpit2016normalization,klambauer2017self,salimans2016weight} and \citep{ba2016layer} to name just a few.

Yet, despite the ever more important role of Batch Normalization for training deep neural networks, the Machine Learning community is mostly relying on empirical evidence and thus lacking a thorough theoretical understanding that can explain such success. Indeed -- to the best of our knowledge -- there exists no theoretical result which provably shows faster convergence rates for this technique on any problem instance. So far, there only exists competing hypotheses that we briefly summarize below.

\subsection{Related work}
\paragraph{Internal Covariate Shift}
The most widespread idea is that Batch Normalization accelerates training by reducing the so-called internal covariate shift, defined as the change in the distribution of layer inputs while the conditional distribution of outputs is unchanged.
This change can be significant especially for deep neural networks where the successive composition of layers drives the activation distribution away from the initial input distribution. \cite{ioffe15batch} argue that Batch Normalization reduces the internal covariate shift by employing a normalization technique that enforces the input distribution of each activation layer to be whitened - i.e. enforced to have zero means and unit variances - and decorrelated . Yet, as pointed out by \cite{lipton2018troubling}, the covariate shift phenomenon itself is not rigorously shown to be the reason behind the performance of Batch Normalization. Furthermore, a recent empirical study published by \citep{santurkar2018does} provides strong evidence supporting the hypothesis that the performance gain of Batch Normilization is not explained by the reduction of internal covariate shift.

\paragraph{Smoothing of objective function}
Recently, \cite{santurkar2018does} argue that under certain assumptions a normalization layer simplifies optimization by smoothing the loss landscape of the optimization problem of the preceding layer. Yet, we note that this effect may - at best - only improve the constant factor of the convergence rate of Gradient Descent and not the rate itself (e.g. from sub-linear to linear). Furthermore, the analysis treats only the largest eigenvalue and thus one direction in the landscape (at any given point) and keeps the (usually trainable) {\sc BN} parameters fixed to zero-mean and unit variance. For a thorough conclusion about the overall landscape, a look at the entire eigenspectrum (including negative and zero eigenvalues) would be needed. Yet, this is particularly hard to do as soon as one allows for learnable mean and variance parameters since the effect of their interplay on the distribution of eigenvalues is highly non-trivial.

\paragraph{Length-direction decoupling}
Finally, a different perspective was brought up by another normalization technique termed Weight Normalization ({\sc Wn})~\citep{salimans2016weight}. This technique performs a very simple normalization that is independent of any data statistics with the goal of decoupling the length of the weight vector from its direction. The optimization of the training objective is then performed by training the two parts separately. As discussed in Section \ref{sec:background}, {\sc Bn} and {\sc Wn} differ in how the weights are normalized but share the above mentioned decoupling effect. Interestingly, weight normalization has been shown empirically to benefit from similar acceleration properties as Batch Normalization~\citep{gitman2017comparison,salimans2016weight}. This raises the obvious question whether the empirical success of training with Batch Normalization can (at least partially) be attributed to its length-direction decoupling aspect. 

\subsection{Contribution and organization}

We contribute to a better theoretical understanding of Batch Normalization by analyzing it from an optimization perspective. In this regard, we particularly address the following question:\\

\begin{center}
   \textit{Can we find a setting in which Batch Normalization provably accelerates optimization with Gradient Descent and does the length-direction decoupling play a role in this phenomenon?}
\end{center}

We answer both questions affirmatively. In particular, we show that the specific variance transformation of {\sc Bn} decouples the length and directional components of the weight vectors in such a way that allows local search methods to exploit certain global properties of the optimization landscape (present in the directional component of the optimal weight vector). Using this fact and endowing the optimization method with an adaptive stepsize scheme, we obtain an \textit{exponential} (or as more commonly termed \textit{linear}) convergence rate for Batch Norm Gradient Descent on the (possibly) \textit{non-convex} problem of Learning Halfspaces with Gaussian inputs (Section \ref{sec:learning_halfspaces}), which is a prominent problem in machine learning \cite{erdogdu2016scaled}. We thereby turn {\sc Bn} from an effective practical heuristic into a provably converging algorithm. Additionally we show that the length-direction decoupling can be considered as a non-linear reparametrization of the weight space, which may be beneficial for even simple convex optimization tasks such as logistic regressions. Interestingly, non-linear weightspace transformations have received little to no attention within the optimization community (see \citep{mikhalevich1988minimization} for an exception). 

Finally, in Section \ref{sec:neural_networks} we analyze the effect of {\sc Bn} for training a multilayer neural network (MLP) and prove -- again under a similar Gaussianity assumption -- that {\sc Bn} acts in such a way that the cross dependencies between layers are reduced and thus the curvature structure of the network is simplified. Again, this is due to a certain global property in the directional part of the optimization landscape, which {\sc BN} can exploit via the length-direction decoupling. As a result, gradient-based optimization in reparametrized coordinates (and with an adaptive stepsize policy) can enjoy a linear convergence rate on each individual unit. We substantiate both findings with experimental results on real world datasets that confirm the validity of our analysis outside the setting of our theoretical assumptions that cannot be certified to always hold in practice. 
\section{BACKGROUND} \label{sec:background}

\subsection{Assumptions on data distribution}
Suppose that $\x \in \R^d$ is a random input vector and $y \in \{ \pm 1 \}$ is the corresponding output variable. Throughout this paper we recurrently use the following statistics
\begin{align} \label{eq:stats}
\u:= \E \left[- y \x \right] , \;\; \cov := \E \left[ \x \x^\top \right]
\end{align}
and make the following (weak) assumption.
\begin{restatable}{assumption}{weakassumption} [Weak assumption on data distribution]\label{as:weak_distribution_assumption}
We assume that $\E \left[ \x \right] = 0$. We further assume that the spectrum of the matrix $\cov$ is bounded as 
\begin{align}  \label{eq:covariance_spectral_bound}
0< \mu := \lambda_{\min} \left( \cov \right), L := \lambda_{\max} \left(\cov \right) < \infty  .
\end{align} 
As a result, $\cov$ is the symmetric positive definite covariance matrix of $\x$.
\end{restatable}
The part of our analysis presented in Section \ref{sec:learning_halfspaces} and \ref{sec:neural_networks} relies on a stronger assumption on the data distribution. In this regard we consider the combined random variable
\begin{equation}\label{eq:def_z}
\z:= -y \x, 
\end{equation}
whose mean vector and covariance matrix are $\u$ and $\cov$ as defined above in Eq.~\eqref{eq:stats}. 
\begin{restatable}{assumption}{strongassumption} \label{as:strong_distribution_assumption} [Normality assumption on data distribution]
 We assume that $\z$
 is a multivariate normal random variable distributed with mean $\E \left[ \z \right] = \E \left[ - y \x \right] = \u $ and second-moment $\E \left[ \z \z^\top \right]- \E[\z]\E[\z]^\top = \E \left[ \x \x^\top \right] -\u\u^\top = \cov-\u\u^\top $.
\end{restatable} 

In the absence of further knowledge, assuming Gaussian data is plausible from an information-theoretic point of view since the Gaussian distribution maximizes the entropy over the set of all absolutely continuous distributions with fixed first and second moment~\citep{dowson1973maximum}. Thus, many recent studies on neural networks make this assumption on $\x$ (see e.g. \citep{brutzkus2017globally,du2018power}). Here we assume Gaussianity on $y\x$ instead which is even less restrictive in some cases\footnote{For example, suppose that conditional distribution $P(\x|y=1)$ is gaussian with mean $\mu$ for positive labels and $-\mu$ for negative labels (mixture of gaussians). If the covariance matrix of these marginal distributions are the same, $z = y\x$ is Gaussian while $\x$ is not.}.

\subsection{Batch normalization as a reparameterization of the weight space} \label{sec:batchnorm_as_reparam}
In neural networks a {\sc Bn} layer normalizes the input of each unit of the following layer. This is done on the basis of data statistics in a training batch, but for the sake of analyzablility we will work directly with population statistics. In particular, the output $f$ of a specific unit, which projects an input $\x$ to its weight vector $\w$ and applies a sufficiently smooth activation function $\varphi: \R \to \R$ as follows

\begin{equation}\label{eq:unit_output}
   f(\w)= \E_{\x} \left[ \varphi\left( \x^\top \w \right) \right]
\end{equation}

is normalized on the pre-activation level. That is, the input-output mapping of this unit becomes

\begin{equation}\label{eq:batch_norm_orginal}
    f_{{\sc BN}}(\w,g,\gamma)=\E_{\x} \left[ \varphi\left({\sc BN}(\x^\top \w) \right) \right].
\end{equation}

As stated (in finite-sum terms) in Algorithm 1 of \citep{ioffe15batch} the normalization operation amounts to computing
\begin{equation}
    {\sc BN}(\x^\top \w) =  g\frac{\x^\top \w - \E_{\x} [\x^\top \w]}{\var_{\x}[\x^\top \w]^{1/2}} + \gamma,
\end{equation}

where $g \in \mathbb{R}$ and $\gamma \in \mathbb{R}$ are (trainable) mean and variance adjustment parameters. In the following, we assume that $\x$ is zero mean (Assumption \ref{as:weak_distribution_assumption}) and omit $\gamma$.\footnote{In the (non-compositional) models of Section \ref{sec:OLS} and \ref{sec:learning_halfspaces}, fulfilling the assumption that $\x$ is zero-mean is as simple as centering the dataset. Yet, we also omit centering a neurons input as well as learning $\gamma$ for the neural network analysis in Section~\ref{sec:neural_networks}. This is done for the sake of simplicity but note that \cite{salimans2016weight} found that these aspects of {\sc Bn} yield no empirical improvements for optimization.} Then the variance can be written as follows 
\begin{equation}\label{eq:var}
    \begin{aligned}
    \var_{\x}[\x^\top\w] &= \E_{\x} \left[ (\x^\top \w)^2 \right]=\E_{\x} \left[ (\w^\top \x)(\x^\top \w) \right] \\
    &= \w^\top \E_{\x} \left[ \x \x^\top \right] \w = \w^\top \cov \w
\end{aligned}
\end{equation}

and replacing this expression into the batch normalized output of Eq.\eqref{eq:batch_norm_orginal} yields 
\begin{align}\label{eq:unit_output_BN}
    f_{{\sc BN}}(\w, g) =  \E_{\x} \left[\varphi( g \frac{\x^\top \w}{(\w^\top \cov \w)^{1/2}}) \right]. 
\end{align}
In order to keep concise notations, we will often use the induced norm of the positive definite matrix $\cov$ defined as 
$\| \w \|_\cov := \left( \w^\top \cov \w \right)^{1/2}$. Comparing Eq. \eqref{eq:unit_output} and \eqref{eq:unit_output_BN} it becomes apparent that {\sc Bn} can be considered as a reparameterization of the weight space. We thus define
\begin{align}\label{eq:reparametrization}
  \Tilde{\w} := g \frac{\w}{\| \w \|_\cov}
\end{align}
and note that $\w$ accounts for the direction and $g$ for the length of $\tilde{\w}$. As a result, the batch normalized output can then be written as
\begin{align}
    f_{{\sc BN}}(\w, g) =  \E_{\x} \left[\varphi(\x^\top \tilde{\w} ) \right].
\end{align}
Note that Weight Normalization~({\sc Wn}) is another instance of the above reparametrization, where the covariance matrix $\cov$ is replaced by the identity matrix $\mathbf{I}$ \citep{salimans2016weight}. In both cases, the objective becomes invariant to linear scaling of $\w$. From a geometry perspective, the directional part of {\sc Wn} can be understood as performing optimization on the unit sphere while {\sc Bn} operates on the $\cov$-sphere (ellipsoid) \citep{cho17rieman}. Note that one can compute the variance term \eqref{eq:var} in a matrix-free manner, i.e. $\cov$ never needs to be computed explicitly for {\sc Bn}.

Of course, this type of reparametrization is not exclusive to applications in neural networks. In the following two sections we first show how reparametrizing the weight space of linear models can be advantageous from a classical optimization point of view. In Section \ref{sec:neural_networks} we extend this analysis to training Batch Normalized neural networks with adaptive-stepsize Gradient Descent and show that the length-direction split induces an interesting decoupling effect of the individual network layers which simplifies the curvature structure.  


\section{ORDINARY LEAST SQUARES}\label{sec:OLS}
As a preparation for subsequent analyses, we start with the simple convex quadratic objective encountered when minimizing an ordinary least squares problem
\begin{equation}
\begin{aligned}  \label{eq:least_squares_objective}
&\min_{\Tilde{\w}\in\mathbb{R}^d} \left(\; \fls(\Tilde{\w}) := \E_{\x,y} \left[ \left(y- \x^\top \Tilde{\w}\right)^2 \right] \right)\\ \stackrel{(\text{A\ref{as:weak_distribution_assumption}})}{\Leftrightarrow}  & \min_{\Tilde{\w}\in\mathbb{R}^d} \left( 2 \u^\top \Tilde{\w} + \Tilde{\w}^\top \cov \Tilde{\w} \right).
\end{aligned}
\end{equation}

One can think of this as a linear neural network with just one neuron and a quadratic loss. Thus, applying {\sc Bn} resembles reparametrizing $\Tilde{\w}$ according to Eq.~\eqref{eq:reparametrization} and the objective turns into the \textit{non-convex} problem 
\begin{align} \label{eq:normalized_least_squares}
\min_{\w\in\mathbb{R}^d\setminus \{0\},g \in \mathbb{R}} \left(\fls(\w,g) := 2 g \frac{\u^\top \w}{\| \w \|_\cov } + g^2  \right).
\end{align} 
Despite the non-convexity of this new objective, we will prove that Gradient Descent ({\sc Gd}) enjoys a \textit{linear} convergence rate. Interestingly, our analysis establishes a link between $\fls$ in reparametrized coordinates (Eq.~\eqref{eq:normalized_least_squares}) and the well-studied task of minimizing (generalized) Rayleigh quotients as it is commonly encountered in eigenvalue problems \citep{argentati2017convergence}.

\subsection{Convergence analysis}
To simplify the analysis, note that, for a given $\w$, the objective of Eq.~\eqref{eq:normalized_least_squares} is convex w.r.t. the scalar $g$ and thus the optimal value $g_\w^*$ can be found by setting $\frac{\partial \fls}{\partial g} = 0$, which gives $g_\w^* := -\left( \u^\top \w\right) /\| \w \|_\cov$.
Replacing this closed-form solution into Eq.~\eqref{eq:normalized_least_squares} yields the following optimization problem 
\begin{align} \label{eq:LS_rayleigh_formulation}
   \min_{\w \in \mathbb{R}^d \setminus \{0\}} \left(\rq(\w) := - \frac{\w^\top \u \u^\top \w }{ \w^\top \cov \w}\right),
\end{align}
which -- as discussed in Appendix \ref{sec:backgroundA} -- is a special case of minimizing the generalized Rayleigh quotient for which an extensive literature exists \citep{knyazev1998preconditioned,d1995optimization}. Here, we particularly consider solving \eqref{eq:LS_rayleigh_formulation} with {\sc Gd}, which applies the following iterative updates to the parameters
\begin{align} \label{eq:gd_least_squares}
\w_{t+1} := \w_t + 2 \eta_t \frac{\left( (\u^\top \w_t)\u + \rq(\w_t) \cov \w_t \right)}{\w^\top_t \cov \w_t}.
\end{align}
Based upon existing results, the next theorem establishes a linear convergence rate for the above iterates to the minimizer $\w^*$ in the normalized coordinates.

\begin{restatable}{theorem}{leastsquaresconvergence}{[Convergence rate on least squares]}  \label{thm:least_squares_convergence}
Suppose that the (weak) Assumption~\ref{as:weak_distribution_assumption} on the data distribution holds. Consider the {\sc Gd} iterates $\{\w_t\}_{t\in \mathbb{N}^+}$ given in Eq.~\eqref{eq:gd_least_squares} with the stepsize $\eta_t =  \w_t^\top \cov \w_t/(2L|\rq(\w_t)|)$ and starting from $\rq(\w_0) \neq 0$. Then, 
    \begin{align}\label{eq:suboptimality_convergence_thm}
       \Delta \rho_t \leq \left(1- \frac{\mu}{L}\right)^{2t} \Delta \rho_0,
    \end{align}
where $\Delta \rho_t :=\rq(\w_t) - \rq(\w^*)$.
 Furthermore, the $\cov^{-1}$-norm of the gradient $\nabla \rq(\w_t)$ relates to the suboptimality as 
\begin{align} \label{eq:suboptimality_gradientnorm_thm}
 \| \w_t \|_{\cov}^2 \| \nabla \rq(\w_t) \|^2_{\cov^{-1}}/|4\rq(\w_t)|  = \Delta \rho_t.
\end{align} 
\end{restatable}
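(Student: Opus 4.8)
The plan is to treat the iteration in Eq.~\eqref{eq:gd_least_squares} as plain gradient descent on the generalized Rayleigh quotient $\rq$ and to exploit the rank-one structure of the numerator $\u\u^\top$. First I would verify that the update is indeed $\w_{t+1} = \w_t - \eta_t \nabla \rq(\w_t)$ by differentiating $\rq(\w) = -(\u^\top\w)^2/(\w^\top\cov\w)$; the quotient rule gives
\begin{equation*}
  \nabla \rq(\w) = -\frac{2\left( (\u^\top\w)\u + \rq(\w)\cov\w\right)}{\w^\top\cov\w},
\end{equation*}
which matches the bracket in Eq.~\eqref{eq:gd_least_squares}. Next I would pin down the minimizer: since $\u\u^\top$ is rank one, a Cauchy--Schwarz argument in the $\cov$-inner product (equivalently, the generalized eigenproblem $\u\u^\top\w = \lambda\cov\w$) shows that $\rq$ is minimized along $\wstar = \cov^{-1}\u$ with optimal value $\rq(\wstar) = -\u^\top\cov^{-1}\u$.

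The second step is a purely geometric reformulation of the suboptimality. Using $\wstar = \cov^{-1}\u$ one computes $\w^\top\cov\wstar = \u^\top\w$ and $\wstar{}^\top\cov\wstar = \u^\top\cov^{-1}\u$, so that
\begin{equation*}
  \cosasq{\cov}{\w}{\wstar} = \frac{(\u^\top\w)^2}{(\w^\top\cov\w)(\u^\top\cov^{-1}\u)} = \frac{\rq(\w)}{\rq(\wstar)}.
\end{equation*}
Hence $\Delta \rho_t = \rq(\w_t) - \rq(\wstar) = |\rq(\wstar)|\,\sinasq{\cov}{\w_t}{\wstar}$; that is, the suboptimality equals, up to the constant $|\rq(\wstar)|$, the squared sine of the $\cov$-angle between the iterate and the optimal direction. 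The rate in Eq.~\eqref{eq:suboptimality_convergence_thm} therefore reduces to the per-step angle contraction $\sina{\cov}{\w_{t+1}}{\wstar} \le (1-\mu/L)\,\sina{\cov}{\w_t}{\wstar}$, which after squaring and iterating yields the factor $(1-\mu/L)^{2t}$.

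For this contraction I would substitute the prescribed stepsize $\eta_t = \w_t^\top\cov\w_t/(2L|\rq(\w_t)|)$ into the update, which collapses it to the preconditioned eigensolver iteration
\begin{equation*}
  \w_{t+1} = \left(\Im - \tfrac{1}{L}\cov\right)\w_t + \frac{\u^\top\w_t}{L|\rq(\w_t)|}\,\u ,
\end{equation*}
i.e.\ a preconditioned gradient (inverse-iteration) step for the generalized eigenproblem $\u\u^\top\w=\lambda\cov\w$ with the scaled-identity preconditioner $\tfrac{1}{L}\Im$. At this point I would invoke the sharp convergence theory for preconditioned eigensolvers of \citep{knyazev1998preconditioned,argentati2017convergence}: because $\Im - \tfrac1L\cov$ has spectral norm $1-\mu/L$ and because the rank-one numerator makes the spectral gap maximal (a single nonzero generalized eigenvalue, all others zero), the generic estimate specializes to exactly the factor $1-\mu/L$ per iteration. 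Verifying that the chosen stepsize realizes the $\tfrac1L\Im$ preconditioner and that this extreme gap makes the sharp bound tight is the main obstacle; the rest is bookkeeping.

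Finally, the gradient-norm identity in Eq.~\eqref{eq:suboptimality_gradientnorm_thm} I would prove by direct computation. Abbreviating $p=\u^\top\w$ and $D=\w^\top\cov\w$ (so $\rq=-p^2/D$), expanding $\|\nabla\rq\|_{\cov^{-1}}^2$ and using $\cov^{-1}$ to cancel the cross terms gives
\begin{equation*}
  \left[(\u^\top\w)\u+\rq\,\cov\w\right]^\top\cov^{-1}\left[(\u^\top\w)\u+\rq\,\cov\w\right] = p^2\left(\rq - \rq(\wstar)\right) = p^2\,\Delta\rho_t ,
\end{equation*}
whence $\|\w_t\|_\cov^2\,\|\nabla\rq(\w_t)\|_{\cov^{-1}}^2 = 4|\rq(\w_t)|\,\Delta\rho_t$, and dividing by $|4\rq(\w_t)|$ closes the identity. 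This last part needs only the relation $p^2 = -\rq\,D$ together with the value of $\rq(\wstar)$.
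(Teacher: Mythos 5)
Your reductions are sound and track the paper's own architecture closely: the gradient formula, the Cauchy--Schwarz identification of the minimizer, the identity $\rq(\w)=\rq(\wstar)\cosasq{\cov}{\w}{\wstar}$ (this is exactly Proposition~\ref{pro:function_value_expansion}, which the paper derives instead via a $\cov$-orthogonal eigenbasis), the collapsed form of the update under the prescribed stepsize (equivalent to the paper's key identity~\eqref{eq:myequation}), and your direct expansion proving the gradient-norm identity~\eqref{eq:suboptimality_gradientnorm_thm} -- the last of these is in fact more economical than the paper's route through Propositions~\ref{pro:gradient_expansion} and~\ref{pro:gradient_norm_expansion}.

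The genuine gap is the per-step contraction $\sina{\cov}{\w_{t+1}}{\wstar}\le(1-\mu/L)\,\sina{\cov}{\w_t}{\wstar}$, which carries the entire linear rate and which you delegate to the sharp theory of preconditioned eigensolvers. That citation does not go through off the shelf: those estimates are formulated for pencils of symmetric positive \emph{definite} matrices and for convergence to the \emph{smallest} eigenvalue, with preconditioner quality measured against the stiffness matrix. Here the numerator $\B=\u\u^\top$ is singular (rank one) and the target is the \emph{largest} generalized eigenvalue, so one must first shift the pencil to $(c\cov-\B,\cov)$ with $c>\lambda_1$; after the shift the gap ratio is $(c-\lambda_1)/c\neq 0$, the effective preconditioner $T=L|\rq(\w_t)|\,\Im$ is iteration dependent and must be assessed in the $(c\cov-\B)$-geometry rather than through $\|\Im-\cov/L\|_\cov\le 1-\mu/L$, and the resulting factor is no longer visibly $1-\mu/L$. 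Moreover those bounds control ratios of the form $(\lambda_1-\tilde{\rho})/(\tilde{\rho}-\lambda_2)$, i.e.\ $\tan^2$ rather than $\sin^2$ of the angle, so even a successful invocation yields an extra factor $1/\cosasq{\cov}{\w_0}{\wstar}$ relative to the clean bound~\eqref{eq:suboptimality_convergence_thm}. This is precisely why the paper (whose authors include the architects of that theory) states in Appendix~\ref{sec:backgroundA} that it does \emph{not} invoke those results and instead exploits the rank-one structure directly: Lemma~\ref{lem:convergence_metric_decay} shows that $-\rq(\w_t)\w_t$ is the $\cov$-orthogonal projection of $\cov^{-1}\B\w_t$ onto $\text{span}\{\w_t\}$, so $\sina{\cov}{\v_1}{\w_t}=\|\cov^{-1}\B\w_t+\rq(\w_t)\w_t\|_\cov/\|\cov^{-1}\B\w_t\|_\cov$, bounds the corresponding quantity at $t+1$ via the Pythagorean theorem, and then applies your collapsed update in the form $\cov^{-1}\B\w_t+\rq(\w_t)\w_{t+1}=(\Im-\cov/L)\left(\cov^{-1}\B\w_t+\rq(\w_t)\w_t\right)$ together with $\|\Im-\cov/L\|_\cov\le 1-\mu/L$. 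So you are one projection-plus-Pythagoras argument away from a complete proof, but that argument is the core of the theorem, not bookkeeping. A smaller omission: your stepsize and the sign manipulation $\rq(\w_t)/|\rq(\w_t)|=-1$ presuppose $\rq(\w_t)<0$ at every iteration; the paper closes this loop by induction (the contraction itself forces $\rq(\w_{t+1})\le\rq(\w_t)<0$), and your argument needs the same.
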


This convergence rate is of the same order as the rate of standard {\sc Gd} on the original objective $\fls$ of Eq.~\eqref{eq:least_squares_objective} \citep{nesterov2013introductory}. Yet, it is interesting to see that the non-convexity of the normalized objective does not slow gradient-based optimization down.
In the following, we will repeatedly invoke this result to analyze more complex objectives for which {\sc Gd} only achieves a \textit{sublinear} convergence rate in the original coordinate space but is provably accelerated after using Batch Normalization.
\section{LEARNING HALFSPACES}
\label{sec:learning_halfspaces}

We now turn our attention to the problem of Learning Halfspaces, which encompasses training the simplest possible neural network: the Perceptron. This optimization problem can be written as
\begin{equation}
\label{eq:halfspace_problem}
\begin{aligned} 
\min_{\Tilde{\w}\in\mathbb{R}^d} &\left( \flh(\Tilde{\w})  := \E_{y,\x} \left[ \varphi(\z^\top \Tilde{\w})\right]\right)
\end{aligned} 
\end{equation}
where  $\z := -y \x$ and $\varphi: \R \to \R^+$ is a loss function. Common choices for $\varphi$ include the zero-one, piece-wise linear, logistic and sigmoidal loss. We here tailor our analysis to the following choice of loss function. 
\begin{restatable}{assumption}{regassumption}[Assumptions on loss function] \label{as:reg_assumptions}
We assume that the loss function $\varphi: \R \to \R$ is infinitely differentiable, i.e. $\varphi \in C^\infty(\mathbb{R},\mathbb{R})$, with a bounded derivative, i.e. $\exists \Phi>0$ such that $| \varphi^{(1)}(\beta)| \leq \Phi, \forall \beta \in \R $. 
\end{restatable}
Furthermore, we need $\flh$ to be sufficiently smooth.
\begin{restatable}{assumption}{smoothassumption}[Smoothness assumption] \label{as:smooth_assumptions}
We assume that the objective $f: \R^d \to \R$ is $\zeta$-smooth if it is differentiable on $\R$ and its gradient is $\zeta$-Lipschitz. Furthermore, we assume that a solution  $\alpha^* := \arg\min_{\alpha} \| \nabla f(\alpha \w )\|^2$ exists that is bounded in the sense that $\forall \w \in \R^d, -\infty<\alpha^*< \infty.$\footnote{This is a rather technical but not so restrictive assumption. For example, it always holds for the sigmoid loss unless the classification error of $\w$ is already zero.}
\end{restatable}

Since globally optimizing \eqref{eq:halfspace_problem} is in general NP-hard \citep{guruswami2009hardness}, we instead focus on understanding the effect of the normalized parameterization when searching for a stationary point. Towards this end we now assume that $\z$ is a multivariate normal random variable (see Assumption~\ref{as:strong_distribution_assumption} and discussion there). 



\subsection{Global characterization of the objective}\label{sec:global_halfspace_property}
The learning halfspaces objective $\flh$ -- on Gaussian inputs -- has a remarkable property: all critical points lie on the same line, independent of the choice of the loss function $\varphi$. We formalize this claim in the next lemma. 
\begin{restatable}{lemma}{lhcharaterization} \label{lemm:lh_characterization}
Under Assumptions~\ref{as:weak_distribution_assumption} and~\ref{as:strong_distribution_assumption}, all bounded critical points $\Tilde{\w}_*$ of $\flh$ have the general form 
\begin{align*} 
\Tilde{\w}_* = g_* \cov^{-1} \u,
\end{align*} 
where the scalar $g_* \in \R$ depends on $\Tilde{\w}_*$ and the choice of the loss function $\varphi$. 
\end{restatable}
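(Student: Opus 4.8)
The plan is to locate all critical points by setting $\nabla \flh = 0$ and exploiting the fact that, under the Gaussian Assumption~\ref{as:strong_distribution_assumption}, the objective depends on $\Tilde{\w}$ through only \emph{two} scalar quantities. First I would differentiate under the expectation --- justified because the uniform bound $|\varphi^{(1)}|\le\Phi$ of Assumption~\ref{as:reg_assumptions} furnishes an integrable dominating function --- to obtain
\begin{equation}
\nabla \flh(\Tilde{\w}) = \E\left[ \varphi^{(1)}(\z^\top \Tilde{\w})\,\z \right].
\end{equation}

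The key structural observation is that the scalar $\z^\top\Tilde{\w}$ is Gaussian, so its law is completely determined by its mean $a := \u^\top\Tilde{\w}$ and its variance $\Tilde{\w}^\top(\cov-\u\u^\top)\Tilde{\w} = b - a^2$, where $b := \Tilde{\w}^\top\cov\Tilde{\w} = \|\Tilde{\w}\|_\cov^2 \ge a^2$ (positivity holds since $\cov-\u\u^\top$ is a covariance matrix). Hence $\flh(\Tilde{\w}) = F(a,b)$ for the scalar function $F(a,b) := \E_{\beta\sim\mathcal{N}(a,\,b-a^2)}[\varphi(\beta)]$, and the chain rule with $\nabla_{\Tilde{\w}} a = \u$, $\nabla_{\Tilde{\w}} b = 2\cov\Tilde{\w}$ yields $\nabla\flh = F_a\,\u + 2F_b\,\cov\Tilde{\w}$, where $F_a,F_b$ denote the partials of $F$. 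Equivalently, this same decomposition follows from Gaussian integration by parts (Stein's lemma) applied directly to the gradient above, giving $\nabla\flh = \E[\varphi^{(1)}]\,\u + \E[\varphi^{(2)}]\,(\cov-\u\u^\top)\Tilde{\w}$, from which one reads off $2F_b = \E[\varphi^{(2)}]$.

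Setting $\nabla\flh(\Tilde{\w}_*) = 0$ then reads $F_a\,\u + 2F_b\,\cov\Tilde{\w}_* = 0$. Whenever the coefficient $F_b$ (equivalently $\E[\varphi^{(2)}(\z^\top\Tilde{\w}_*)]$) is nonzero, I solve for $\Tilde{\w}_*$ by left-multiplying with $\cov^{-1}$, which is invertible by Assumption~\ref{as:weak_distribution_assumption}, obtaining
\begin{equation}
\Tilde{\w}_* = -\frac{F_a}{2F_b}\,\cov^{-1}\u =: g_*\,\cov^{-1}\u,
\end{equation}
exactly the claimed form, with the scalar $g_*$ manifestly depending on $\Tilde{\w}_*$ and on $\varphi$ through the partials of $F$. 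This is the routine part of the argument.

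The main obstacle is the degenerate case $F_b = \E[\varphi^{(2)}(\z^\top\Tilde{\w}_*)] = 0$, where the critical-point equation collapses to $F_a\,\u = 0$; since $\u\neq 0$ this forces $F_a=0$ as well, and the equation no longer constrains the \emph{direction} of $\Tilde{\w}_*$. I would resolve this geometrically: the curve $\{g\,\cov^{-1}\u : g\in\R\}$ maps to the parabola $b = a^2/(\u^\top\cov^{-1}\u)$ in $(a,b)$-space, and this parabola is precisely the boundary $\min\{\Tilde{\w}^\top\cov\Tilde{\w} : \u^\top\Tilde{\w}=a\}$ of the attainable region, so the claim is equivalent to excluding critical points lying strictly inside $b > a^2/(\u^\top\cov^{-1}\u)$. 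Making that exclusion rigorous under Assumptions~\ref{as:reg_assumptions}--\ref{as:smooth_assumptions} --- rather than the linear algebra of the generic case --- is where the real difficulty lies.
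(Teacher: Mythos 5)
Your generic case ($F_b \neq 0$) is, in substance, the paper's \emph{entire} proof. The paper first records the Stein's-lemma decomposition $\nabla_{\Tilde{\w}}\flh(\Tilde{\w}) = c_1(\Tilde{\w})\,\u + c_2(\Tilde{\w})\,\cov\Tilde{\w}$ with $c_1 = \E_\z[\varphi^{(1)}(\z^\top\Tilde{\w})] - \E_\z[\varphi^{(2)}(\z^\top\Tilde{\w})]\,(\u^\top\Tilde{\w})$ and $c_2 = \E_\z[\varphi^{(2)}(\z^\top\Tilde{\w})]$ (Lemma~\ref{lem:gradient_expression_lh}, restated from \citep{erdogdu2016scaled}), and then proves the present lemma with the single sentence that setting this expression to zero ``directly gives the result'' --- i.e.\ it multiplies by $\cov^{-1}$ and divides by $c_2$, which is exactly your computation with $F_a = c_1$ and $2F_b = c_2$. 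So your $F(a,b)$ repackaging is equivalent, and up to your second display you have reproduced the paper's argument in full.

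The degenerate case $c_2 = 0$ that you single out is therefore not a place where you fall short of the paper: the paper silently ignores it. But be aware that your plan for closing it cannot succeed, because the exclusion you want is false under the stated assumptions. Your geometric picture is correct (by Cauchy--Schwarz in the $\cov$-inner product, the line $\{g\,\cov^{-1}\u : g \in \R\}$ maps exactly onto the boundary parabola $b = a^2/(\u^\top\cov^{-1}\u)$), and since $\flh$ depends on $\Tilde{\w}$ only through $(a,b)$, any interior pair at which $F_a = F_b = 0$ yields a whole $(d-2)$-dimensional manifold of bounded critical points off the line whenever $d \geq 2$. Such pairs do exist for losses satisfying Assumption~\ref{as:reg_assumptions}: take $\varphi(\beta) = \cos\beta - \tfrac{1}{4}e^{3\sigma_0^2/2}\cos(2\beta)$, which is $C^\infty$ with bounded derivative; for $\beta \sim \mathcal{N}(0,\sigma_0^2)$ one gets $\E[\varphi^{(1)}(\beta)] = 0$ by oddness and $\E[\varphi^{(2)}(\beta)] = -e^{-\sigma_0^2/2} + e^{3\sigma_0^2/2}e^{-2\sigma_0^2} = 0$ by the choice of coefficient, so every $\Tilde{\w}$ with $\u^\top\Tilde{\w} = 0$ and $\|\Tilde{\w}\|_\cov^2 = \sigma_0^2$ is a bounded critical point, and none of these lie on the line when $\u \neq \zero$ (a point of the line with $\u^\top\Tilde{\w} = 0$ must be $\Tilde{\w} = \zero$). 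The lemma is thus only true with an additional hypothesis that the paper uses implicitly, and the honest fix is to add one rather than to attempt your geometric exclusion: strict convexity of $\varphi$ (softplus) gives $c_2 > 0$ everywhere, so only your generic case occurs; alternatively, strict monotonicity $\varphi^{(1)} > 0$ together with $\u \neq \zero$ (sigmoid) makes the degenerate case vacuous, since $c_2 = 0$ at a critical point would force $0 = c_1 = \E_\z[\varphi^{(1)}(\z^\top\Tilde{\w}_*)] > 0$, a contradiction.
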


Interestingly, the optimal direction of these critical points spans the same line as the solution of a corresponding least squares regression problem (see Eq.~\eqref{eq:LS_mininizer} in Appendix A). In the context of convex optimization of generalized linear models, this fact was first pointed out in~\citep{brillinger2012generalized}. Although the global optima of the two objectives are aligned, classical optimization methods - which perform updates based on local information - are generally blind to such global properties of the objective function. This is unfortunate since Gradient Descent converges linearly in the quadratic least-squares setting but only sublinearly on general Learning Halfspace problems \citep{zhang2015learning}. 

To accelerate the convergence of Gradient Descent, \cite{erdogdu2016scaled} thus proposed a two-step global optimization procedure for solving generalized linear models, which first involves finding the optimal direction by optimizing a least squares regression as a surrogate objective and secondly searching for a proper scaling factor of that minimizer. Here, we show that running {\sc Gd} in coordinates reparameterized as in Eq.~\eqref{eq:reparametrization} makes this two-step procedure redundant. More specifically, splitting the optimization problem into searching for the optimal direction and scaling separately, allows even local optimization methods to exploit the property of global minima alignment. Thus - without having to solve a least squares problem in the first place - the directional updates on the Learning Halfspace problem can mimic the least squares dynamics and thereby inherit the linear convergence rate. Combined with a fast (one dimensional) search for the optimal scaling in each step the overall convergence stays linear.

As an illustration, Figure \ref{fig:path} shows the level sets as well as the optimal direction of a least squares-, a logistic- and a sigmoidal regression problem on the same Gaussian dataset. Furthermore, it shows iterates of {\sc Gd} in original coordinates and a sequential version of {\sc Gd}  in normalized coordinates that first optimizes the direction and then the scaling of its parameters ({\sc Gdnp}$_{seq}$). Both methods start at the same point and run with an infinitesimally small stepsize. It can be seen that, while {\sc Gd} takes completely different paths towards the optimal points of each problem instance, the dynamics of {\sc Gdnp}$_{seq}$ are exactly the same until the optimal direction\footnote{Depicted by the dotted red line. Note that -- as a result of Lemma 1 -- this line is identical in all problems.} is found and differ only in the final scaling.

\begin{figure}[h!]
\centering          
          \begin{tabular}{c@{}c@{}}
            \adjincludegraphics[width=0.485\linewidth, trim={22pt 22pt 30pt 30pt},clip]{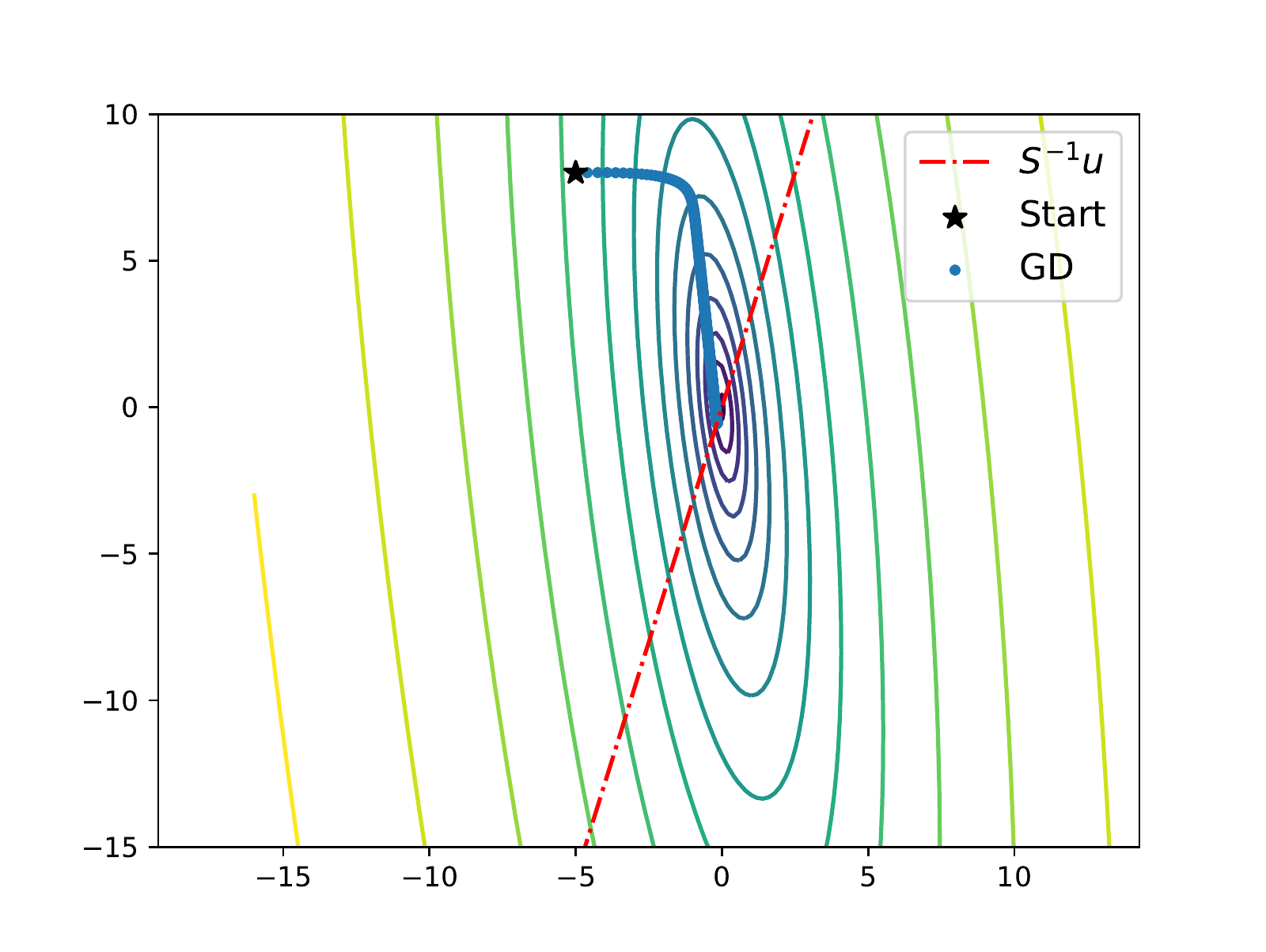} &
             
                \adjincludegraphics[width=0.485\linewidth, trim={22pt 22pt 30pt 30pt},clip]{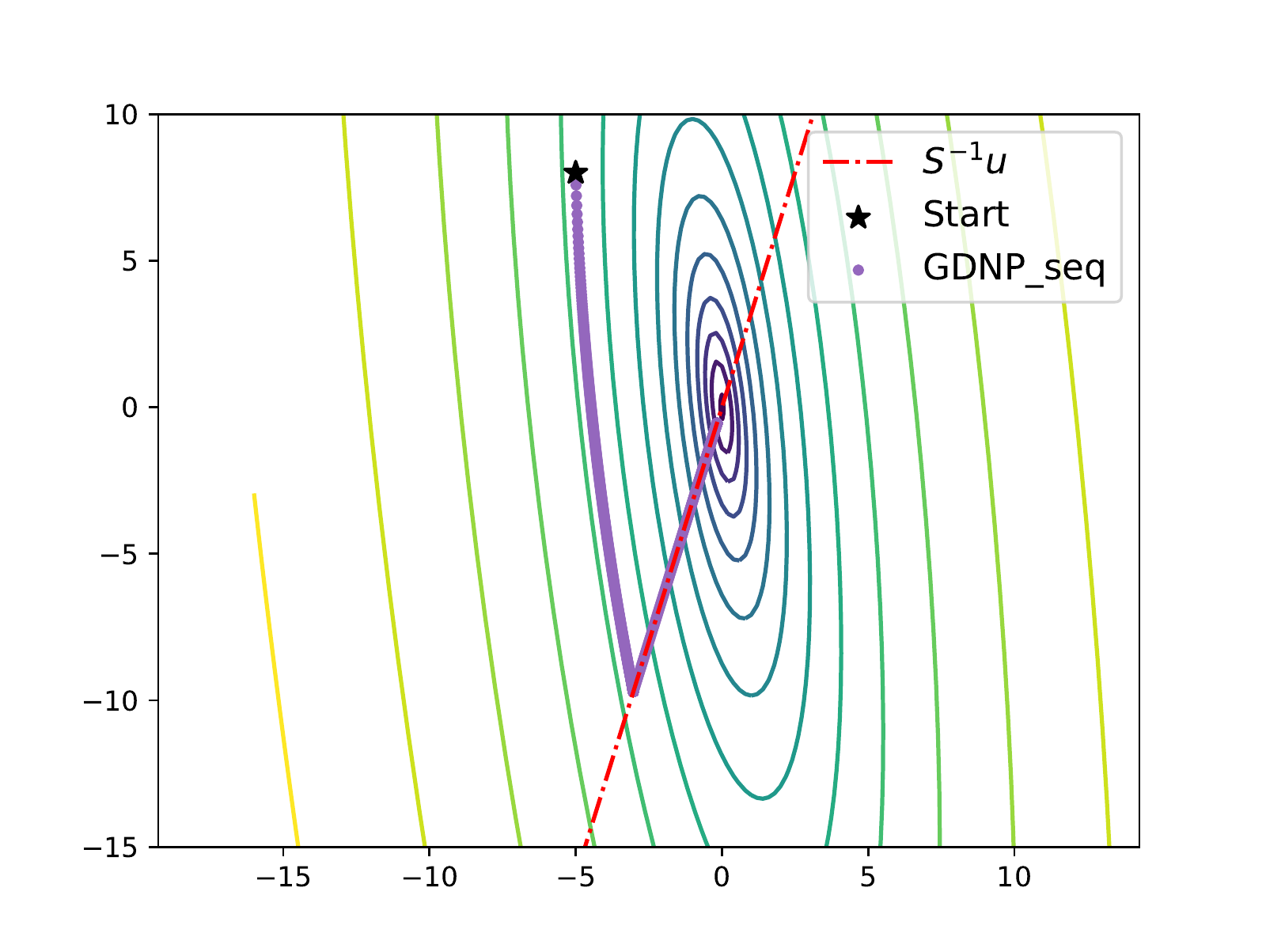} \\
                   \adjincludegraphics[width=0.485\linewidth, trim={22pt 22pt 30pt 30pt},clip]{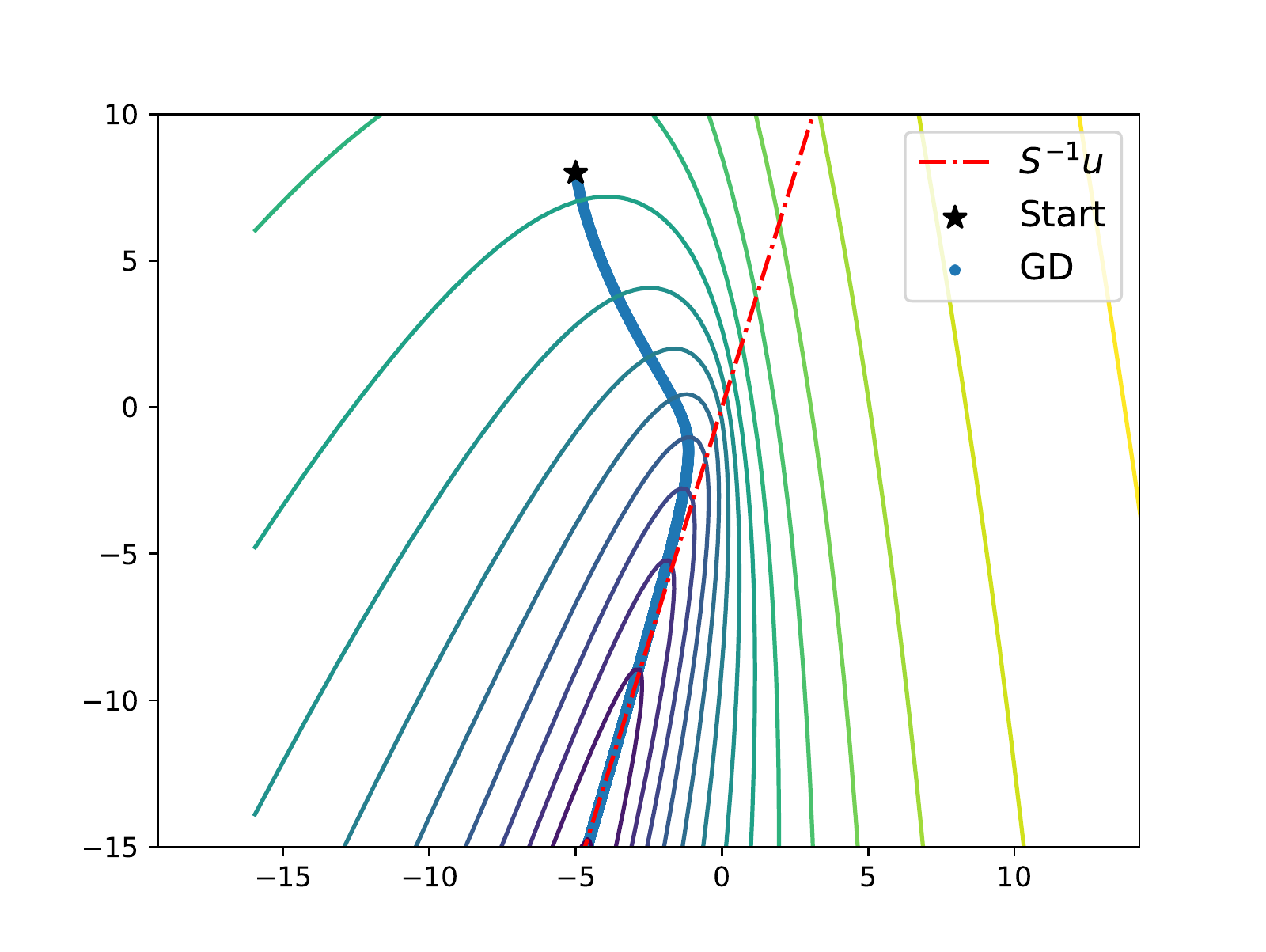} &
             
                \adjincludegraphics[width=0.485\linewidth, trim={22pt 22pt 30pt 30pt},clip]{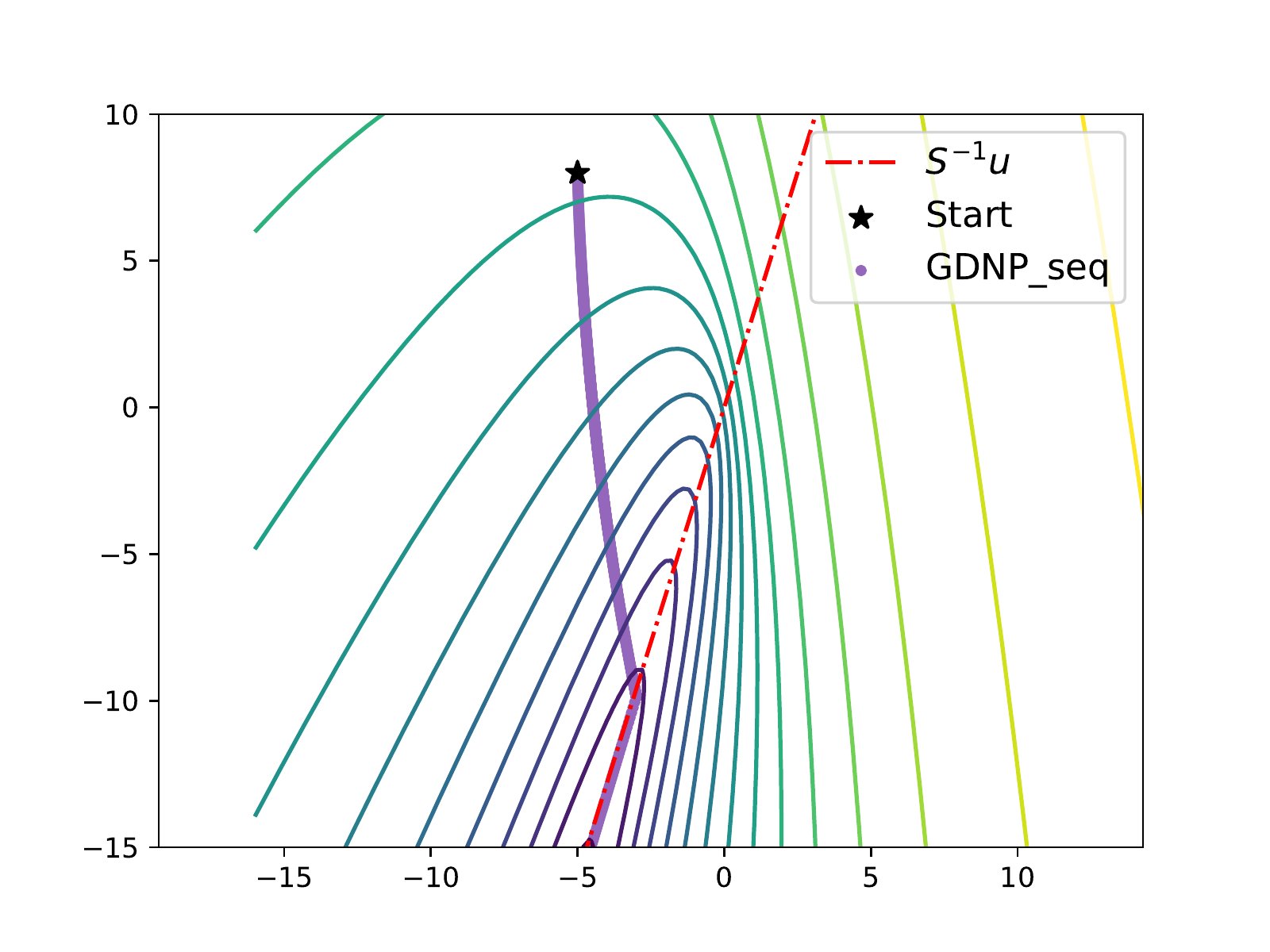} \\
                \adjincludegraphics[width=0.485\linewidth, trim={22pt 22pt 30pt 30pt},clip]{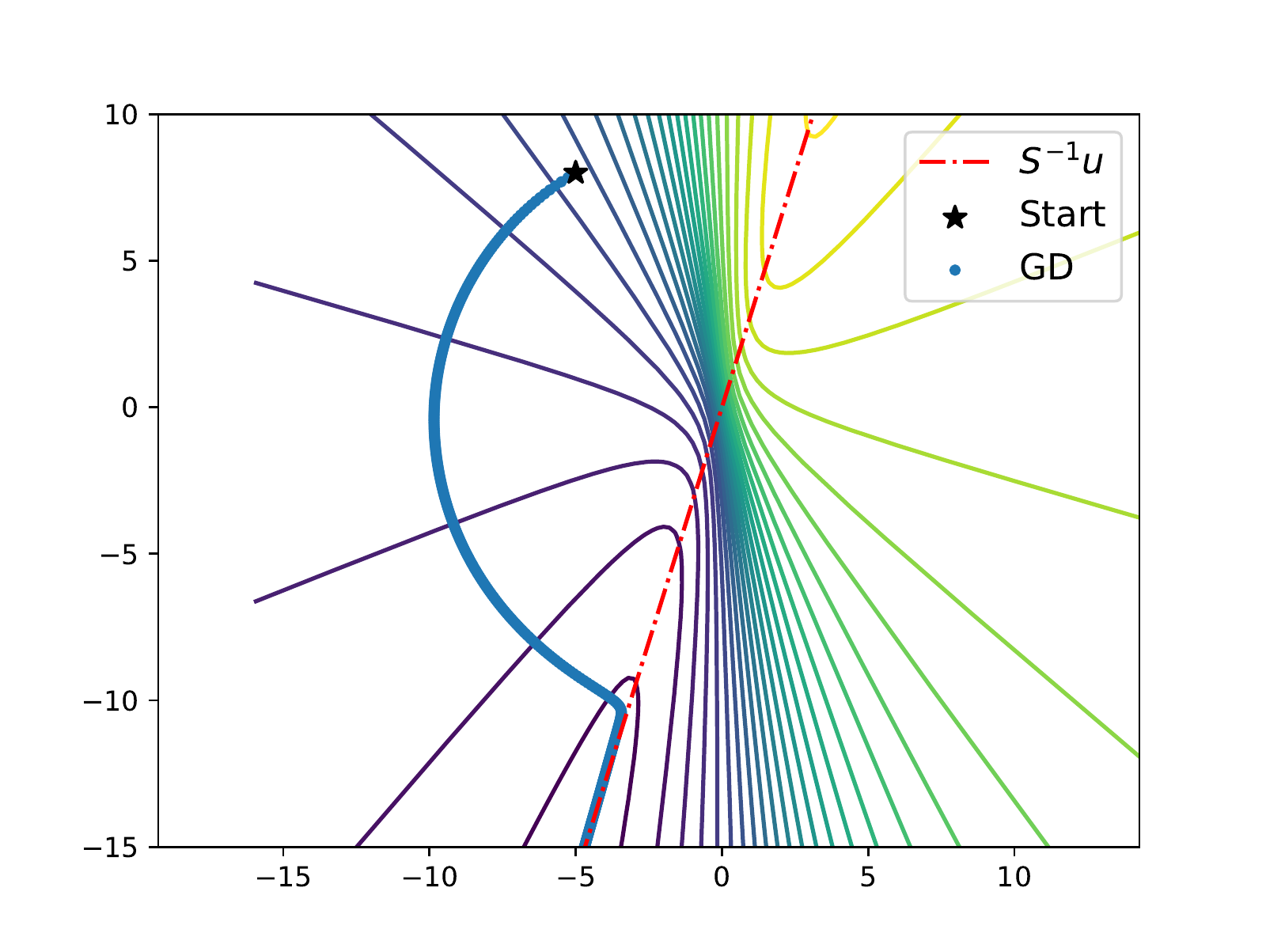}&
             \adjincludegraphics[width=0.485\linewidth, trim={22pt 22pt 30pt 30pt},clip]{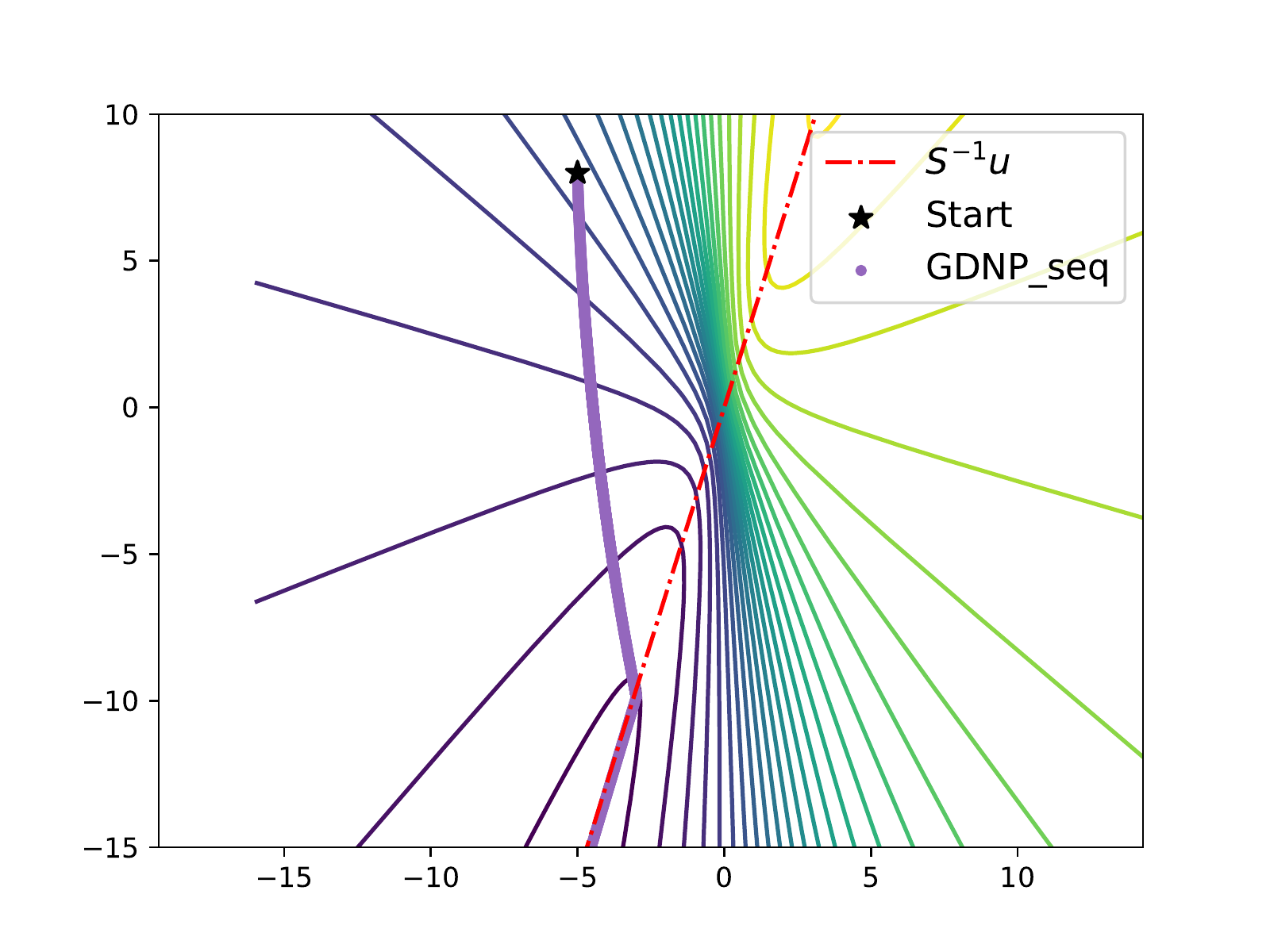}
	  \end{tabular}
          \caption{ \footnotesize{Level sets and path of {\sc Gd} (left) and {\sc Gdnp}$_{seq}$ (right)  
          for a (i) least squares-, (ii) logistic-, and (iii) sigmoidal regression on a Gaussian dataset. All iterates are shown in original coordinates. Note that the {\sc Gdnp}$_{seq}$ paths are identical until the optimal direction (red line) is found.}}
          
          \label{fig:path}
\end{figure}

\subsection{Local optimization in normalized parameterization}\label{sec:lh_reparam}

Let $\flh(\w,g)$ be the $\cov$-reparameterized objective with $\Tilde{\w} = g \w/\| \w \|_\cov $ as defined in Eq.~\eqref{eq:reparametrization}. 
We here consider optimizing this objective using Gradient Descent in Normalized Parameters ({\sc Gdnp}). In each iteration, {\sc Gdnp} performs a gradient step to optimize $\flh$ with respect to the direction $\w$ and does a one-dimensional bisection search to estimate the scaling factor $g$ (see Algorithm~\ref{alg:gdnp}). It is therefore only a slight modification of performing Batch Normalization plus Gradient Descent: 
Instead of taking a gradient step on both $\w$ and $g$, we search for the (locally)-optimal scaling $g$ at each iteration. This modification is cheap and it simplifies the theoretical analysis but it can also be substituted easily in practice by performing multiple {\sc Gd} steps on the scaling factor, which we do for the experiments in Section \ref{sec:exp_i}.

\begin{algorithm}[t]
\begin{algorithmic}[1]\footnotesize{
\STATE \textbf{Input:} $\tdir$, $\tsc$, stepsize policy  $s$, stopping criterion $h$ and objective  $f$
\STATE $g \leftarrow 1 $, and initialize $\w_0$ such that $\rq(\w_0) \neq 0$ 
\FOR{$t = 1,\dots, \tdir$ }
\IF{$h(\w_t,g_t)\neq 0$}
    \STATE $s_t \leftarrow s(\w_t,g_t)$
    \STATE  $\w_{t+1} \leftarrow \w_t - s_t \nabla_{\w} f(\w_t,g_t) $ \quad  \# directional step
\ENDIF

\STATE $g_{t} \leftarrow$ Bisection($\tsc,f,\w_t$) \quad \# see Alg.~\ref{alg:bisection}
\ENDFOR

\STATE $\Tilde{\w}_{\tdir} \leftarrow g_{\tdir} \w_{\tdir}/\| \w_{\tdir}\|_\cov $
\STATE \textbf{return} $\Tilde{\w}_{\tdir}$}
\end{algorithmic}
\caption{\footnotesize{Gradient Descent in Normalized Parameterization ({\sc Gdnp})}}  
\label{alg:gdnp}
\end{algorithm}

\subsection{Convergence result}
We now show that Algorithm~\ref{alg:gdnp} can achieve a linear convergence rate to a critical point on the possibly non-convex objective $\flh$ with Gaussian inputs. Note that all information for computing the adaptive stepsize $s_t$ is readily available and can be computed efficiently.
\begin{restatable}{theorem}{learninghalfspacesconvergence}[Convergence rate of {\sc Gdnp} on learning halfspaces] \label{thm:lh_convergence}
Suppose Assumptions~\ref{as:weak_distribution_assumption}-- \ref{as:smooth_assumptions} hold. Let $\Tilde{\w}_{\tdir}$ be the output of {\sc Gdnp} on $\flh$ with the following choice of stepsizes
\begin{align} \label{eq:step_size}
s_t := s(\w_t,g_t) = - \frac{\| \w_t \|_\cov^{3}}{ L g_t h(\w_t,g_t)  }
\end{align}
for $t=1,\dots, T_d$, where  
\begin{equation}
\begin{aligned}
h(\w_t,g_t) := & \E_\z \left[ \varphi'\left(\z^\top \Tilde{\w}_t\right)  \right](\u^\top \w_t) \\& - \E_\z \left[ \varphi''\left(\z^\top\Tilde{\w}_t\right) \right]\left(\u^\top\w_t \right)^2
\end{aligned}
\end{equation}

is a stopping criterion.
If initialized such that $\rq(\w_0) \neq 0$ (see Eq.~\eqref{eq:LS_rayleigh_formulation}), then $\Tilde{\w}_{\tdir}$ is an approximate critical point of $\flh$ in the sense that \begin{align} 
\| \nabla_{\Tilde{\w}} f(\Tilde{\w}_{\tdir}) \|^2 \leq& (1-\mu/L)^{2\tdir} \Phi^2  \left( \rq(\w_0) - \rq^*  \right) \nonumber\\&+ 2^{-\tsc}\zeta  | b^{(0)}_t - a_t^{(0)} |/\mu^{2}.  
\end{align}
\end{restatable}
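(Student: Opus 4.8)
The plan is to decompose the final bound into two independently controlled error sources: a \emph{directional} error coming from the finitely many gradient steps on $\w$, and a \emph{scaling} error coming from the finite-precision bisection on $g$. I would first establish that, along the directional updates, the dynamics of {\sc Gdnp} on $\flh$ mimic exactly the Rayleigh-quotient dynamics of the least squares problem analyzed in Theorem~\ref{thm:least_squares_convergence}. The key observation (from Lemma~\ref{lemm:lh_characterization} and the Gaussianity assumption) is that for Gaussian $\z$ one can apply Stein's identity to rewrite the directional gradient $\nabla_\w \flh$ so that it points along the same combination of $\u$ and $\cov\w$ that governs the least squares iterate in Eq.~\eqref{eq:gd_least_squares}. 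Concretely, I expect $\E_\z[\varphi'(\z^\top\tilde\w)\z]$ to reduce, via integration by parts, to a scalar multiple of $\u$ plus a scalar multiple of $\cov\w$, so that the stepsize choice in Eq.~\eqref{eq:step_size} makes the $\w$-update coincide with the accelerated least squares update up to the scalar factors absorbed into $h(\w_t,g_t)$. This is what lets the directional component inherit the $(1-\mu/L)^{2t}$ contraction.

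Next I would quantify how the suboptimality in the Rayleigh quotient $\rq$ translates into the gradient norm of $\flh$ in the original reparametrized coordinates. Using Assumption~\ref{as:reg_assumptions} (the uniform bound $\Phi$ on $\varphi^{(1)}$), the full gradient $\nabla_{\tilde\w}\flh$ factors into a bounded loss-dependent part and a geometric part controlled by the directional progress; combining the contraction from Theorem~\ref{thm:least_squares_convergence} with the relation~\eqref{eq:suboptimality_gradientnorm_thm} between gradient norm and suboptimality should yield the first term $(1-\mu/L)^{2\tdir}\Phi^2(\rq(\w_0)-\rq^*)$. The factor $\Phi^2$ is precisely the price of passing from the surrogate Rayleigh objective back to the true halfspace gradient.

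For the second term I would analyze the bisection subroutine separately. Bisection on the one-dimensional scaling problem halves the bracketing interval each iteration, so after $\tsc$ steps the residual in $g$ is at most $2^{-\tsc}|b^{(0)}_t-a^{(0)}_t|$ times the initial interval width. Feeding this scaling error through the smoothness of $\flh$ (Assumption~\ref{as:smooth_assumptions}, $\zeta$-Lipschitz gradient) and the lower curvature bound $\mu$ — the latter entering through the $\mu^{-2}$ factor that arises when converting a parameter-space error on $g$ into a gradient-norm error under the $\|\cdot\|_\cov$ geometry — gives the additive term $2^{-\tsc}\zeta|b^{(0)}_t-a^{(0)}_t|/\mu^2$. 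Summing the two contributions by the triangle inequality (after squaring) yields the stated bound.

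The main obstacle I anticipate is the first step: rigorously showing that the Gaussian structure makes the directional gradient collinear with the least squares update \emph{so that} the contraction of Theorem~\ref{thm:least_squares_convergence} transfers verbatim. This requires carefully invoking Stein's lemma for $\E_\z[\varphi'(\z^\top\tilde\w)\z]$ and $\E_\z[\varphi''(\z^\top\tilde\w)]$, tracking how the scalar coefficients feed into the definition of $h(\w_t,g_t)$, and verifying that the adaptive stepsize~\eqref{eq:step_size} is exactly the one that turns the halfspace directional step into the accelerated Rayleigh step rather than merely a comparable one. Decoupling the two error sources cleanly — ensuring the bisection error does not contaminate the directional contraction argument and vice versa — is the delicate bookkeeping that the rest of the proof hinges on.
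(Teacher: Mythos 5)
Your proposal follows essentially the same route as the paper's proof: Stein's lemma gives $\nabla_{\Tilde{\w}}\flh = c_1\u + c_2\cov\Tilde{\w}$, the {\sc Bn} reparametrization annihilates the $\cov\w$ component (since $\A_\w \cov\w = 0$) so that the directional gradient is collinear with the Rayleigh-quotient gradient $\nabla\rq$, the adaptive stepsize makes the $\w$-iterates coincide exactly with the Theorem~\ref{thm:least_squares_convergence} iterates (inheriting the $(1-\mu/L)^{2t}$ contraction, with $\Phi^2$ entering through the bound on $c_1$ and Eq.~\eqref{eq:suboptimality_gradientnorm_thm}), and the bisection error is handled by interval halving, $\zeta$-smoothness, and the $\mu^{-2}$ norm-conversion factor, just as in the paper. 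The one place you deviate is the final combination step: the paper needs no triangle inequality, because it proves (Proposition~\ref{prop:gradient_norm_in_different_parameterization}) that the squared $\cov^{-1}$-norm of the full gradient splits \emph{exactly}, Pythagorean-style, into the squared directional part and the squared scalar part --- the directional gradient lives in the $\cov^{-1}$-orthogonal complement of $\cov\w/\|\w\|_\cov$ while $\partial_g f$ is precisely the coefficient along that direction; your ``triangle inequality after squaring'' would introduce a spurious factor of $2$ on both terms and thus fall short of the stated constants.
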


To complete the picture, Table \ref{table:complexity_nonconvex_learninghalfspaces} compares this result to the order complexity for reaching an $\epsilon$-optimal critical point $\Tilde{\w}$ (i.e.$\| \nabla_{\Tilde{\w}} \flh(\Tilde{\w})\| \leq \epsilon$) of Gradient Descent and Accelerated Gradient Descent ({\sc Agd}) in original coordinates. Although we observe that the accelerated rate achieved by {\sc Gdnp} is significantly better than the best known rate of {\sc Agd} for non-convex objective functions, we need to point out that the rate for {\sc Gdnp} relies on the assumption of a Gaussian data distribution. Yet, Section \ref{sec:exp_i} includes promising experimental results in a more practical setting with non-Gaussian data. 

Finally, we note that the proof of this result relies specifically on the $\cov$-reparametrization done by Batch Normalization. In Appendix \ref{sec:word_on_WN} we detail out why our proof strategy is not suitable for the $\mathbf{I}$-reparametrization of Weight Normalization and thus leave it as an interesting open question if other settings (or proof strategies) can be found where linear rates for {\sc Wn} are provable.

\begin{table*}[h]
\centering
\footnotesize
\centering
\begin{tabular}{lllll}
\hline
\textbf{Method}        & \textbf{Assumptions} & \textbf{Complexity}   & \textbf{Rate}  &  \textbf{Reference}   \\
\hline 
{\sc Gd} & Smoothness  & $\bigo(\tgd\epsilon^{-2})$ & Sublinear &  \citep{nesterov2013introductory} \\ 
{\sc Agd} & Smoothness & $\bigo(\tgd\epsilon^{-7/4} \log(1/\epsilon))$ & Sublinear &  \citep{jin2017accelerated}\\
{\sc Agd} & Smoothness+convexity & $\bigo(\tgd\epsilon^{-1})$ & Sublinear &  \citep{nesterov2013introductory}\\
 \hline 
{\sc Gdnp}   & \ref{as:weak_distribution_assumption}, \ref{as:strong_distribution_assumption}, \ref{as:reg_assumptions}, and \ref{as:smooth_assumptions} & $\bigo(\tgd\log^2(1/\epsilon))$ & Linear & This paper
\end{tabular} 
\caption{ \footnotesize{Computational complexity to reach an $\epsilon$-critical point on $\flh$ -- with a (possibly) non-convex loss function $\varphi$.  $\bigo(\tgd)$ represents the time complexity of computing a gradient for each method. 
}} 
\label{table:complexity_nonconvex_learninghalfspaces}
\end{table*}
\subsection{Experiments I} \label{sec:exp_i}
\paragraph{Setting} In order to substantiate the above analysis we compare the convergence behavior of {\sc Gd} and {\sc Agd} to three versions of Gradient Descent in normalized coordinates. Namely, we benchmark (i) {\sc Gdnp} (Algorithm \ref{alg:gdnp}) with multiple gradient steps on $g$ instead of Bisection, (ii) a simpler version ({\sc Bn}) which updates $\w$ and $\g$ with just one fixed step-size gradient step\footnote{Thus {\sc Bn} is conceptually very close to the classical Batch Norm Gradient Descent presented in \citep{ioffe15batch}} and (iii) Weight Normalization ({\sc Wn}) as presented in \citep{salimans2016weight}. All methods use full batch sizes and -- except for {\sc Gdnp} on $\w$ -- each method is run with a problem specific, constant stepsize.

We consider empirical risk minimization as a surrogate for $\flh$ \eqref{eq:halfspace_problem} on the common real-world dataset \textit{a9a} as well as on synthetic data drawn from a multivariate Gaussian distribution. We center the datasets and use two different functions $\varphi(\cdot)$. First, we choose the \textit{softplus}  
which resembles the classical logistic regression (convex). 
 Secondly, we use the \textit{sigmoid} 
 which is a commonly used (non-convex) continuous approximation of the 0-1 loss \citep{zhang2015learning}. Further details can be found in Appendix D.

\begin{figure}[h!]
\centering          
          \begin{tabular}{c@{}c@{}}
            \adjincludegraphics[width=0.5\linewidth, trim={22pt 22pt 30pt 30pt},clip]{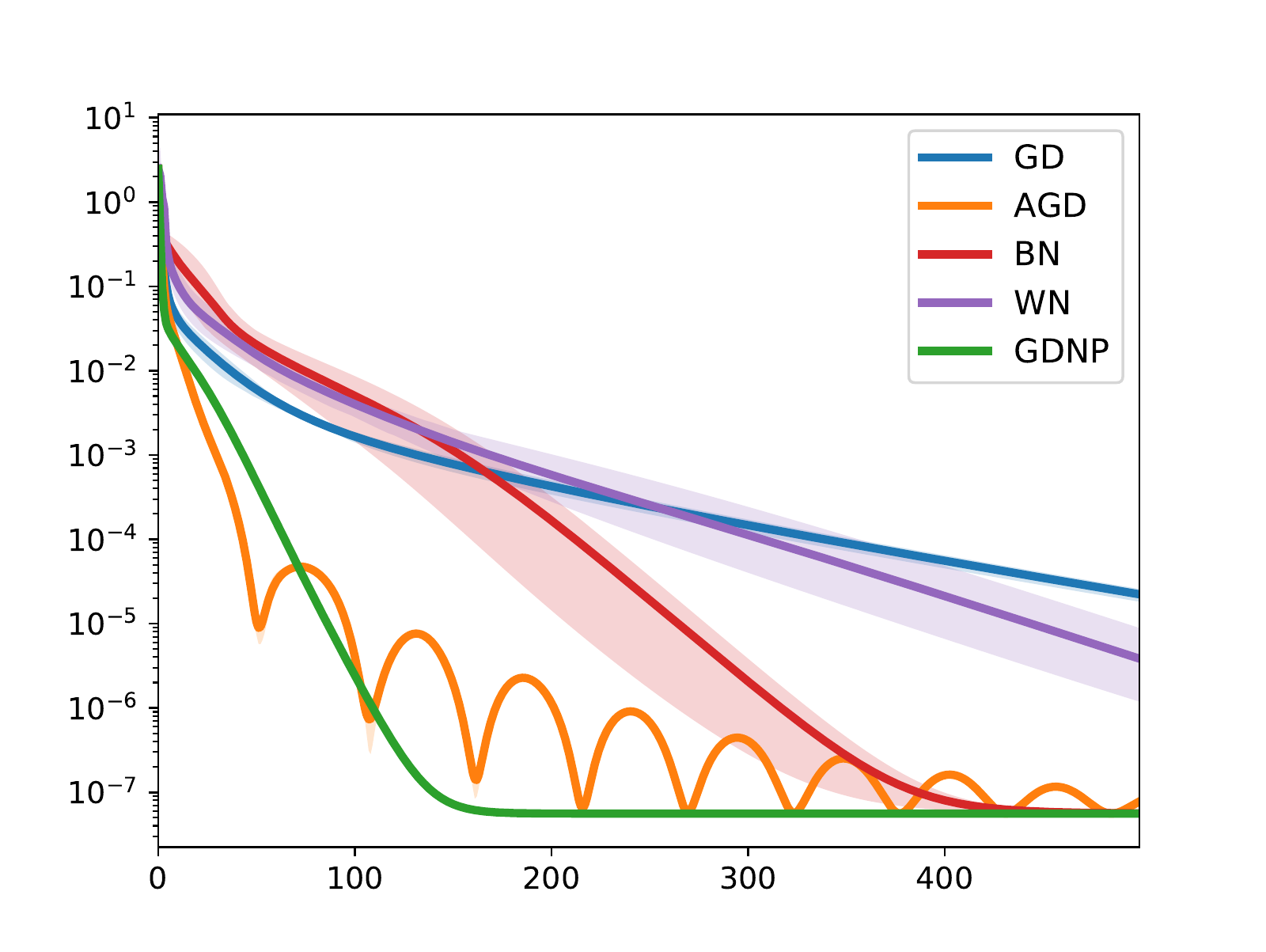} &
             \adjincludegraphics[width=0.5\linewidth, trim={22pt 22pt 30pt 30pt},clip]{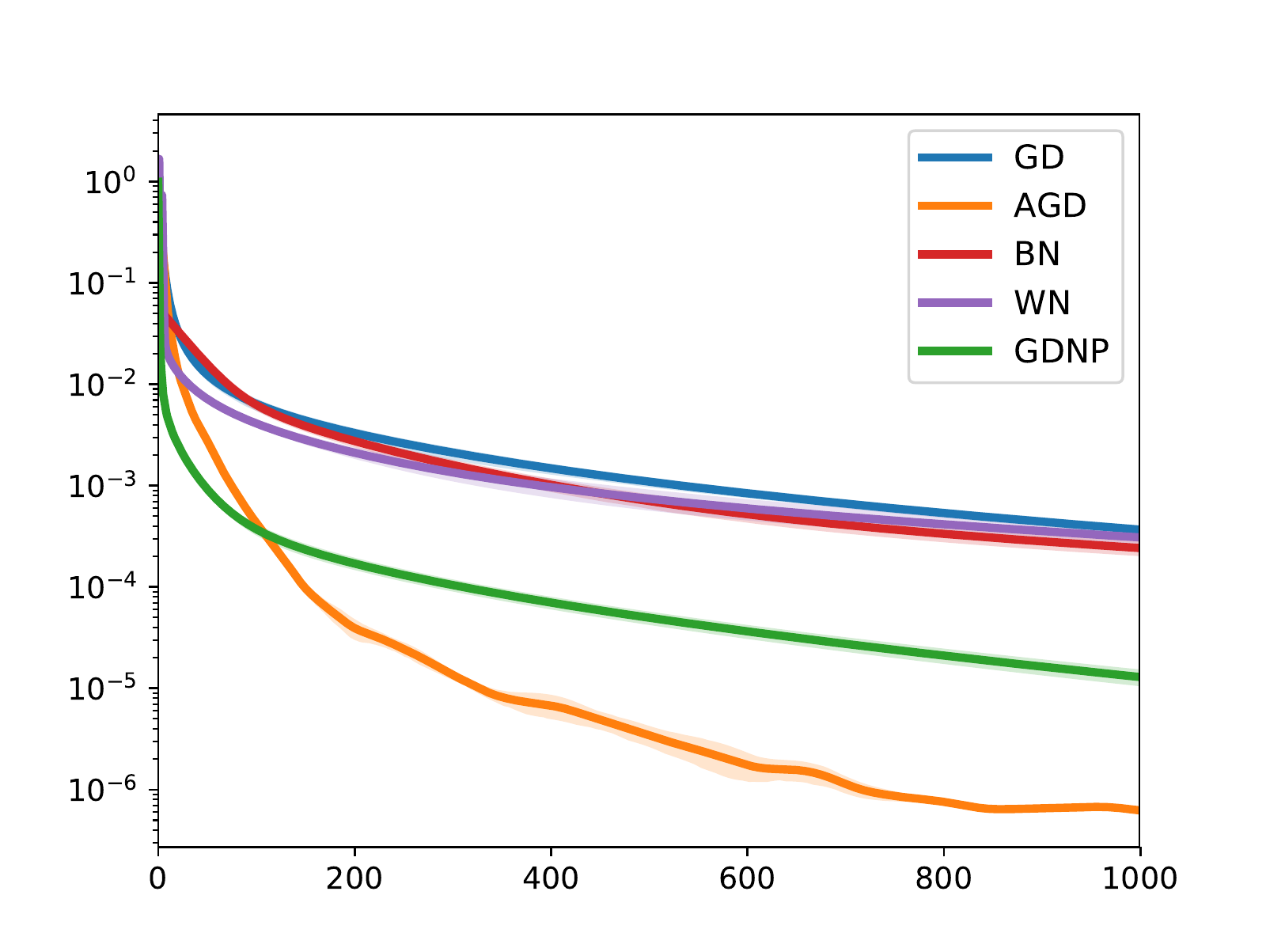}\\
            \footnotesize{{ gaussian softplus}} &
             \footnotesize{{ a9a softplus}} \\ \adjincludegraphics[width=0.5\linewidth, trim={22pt 22pt 30pt 30pt},clip]{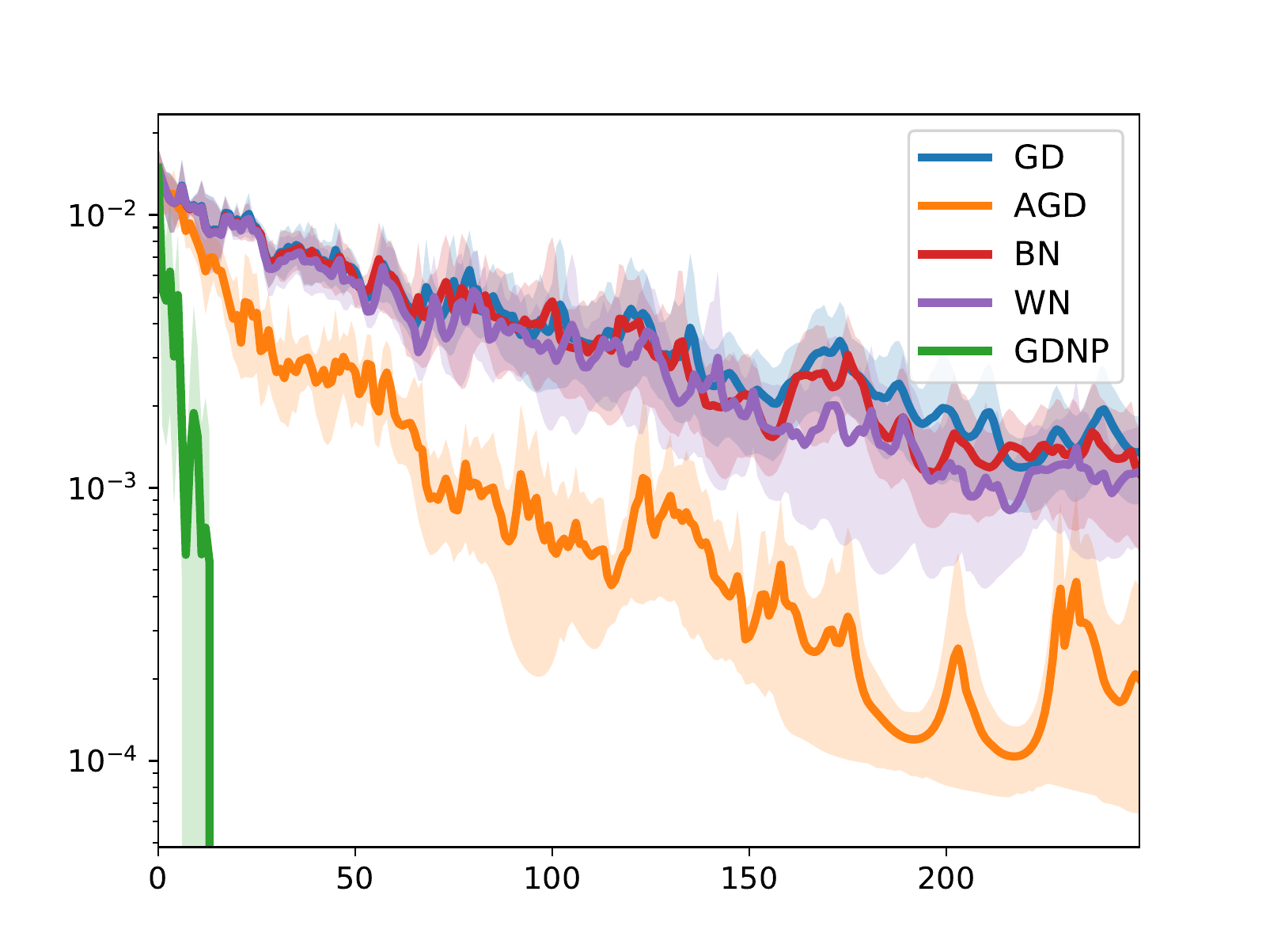}&
              \adjincludegraphics[width=0.5\linewidth, trim={22pt 22pt 30pt 30pt},clip]{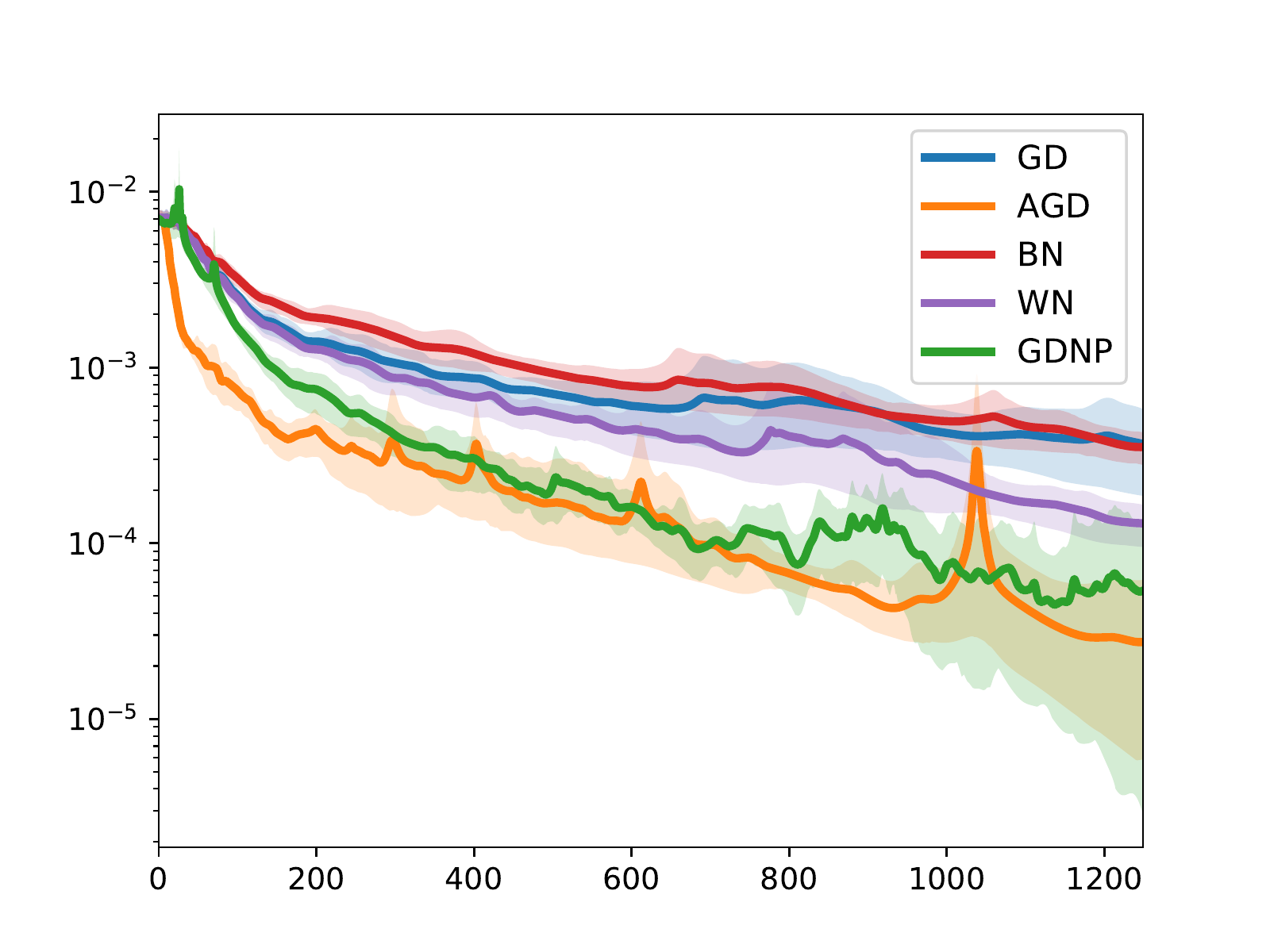} \\  
         	  \hspace{10pt}\footnotesize{{gaussian sigmoid}}&
            \hspace{5pt} \footnotesize{{a9a sigmoid}}
	  \end{tabular}
          \caption{ \footnotesize{Results of an average run (solid line) in terms of log suboptimality (softplus) and log gradient norm (sigmoid) over iterations as well as 90\% confidence intervals of 10 runs with random initialization. }}
          \label{fig:exp_halfspace_log}
\end{figure}

\paragraph{Results} The Gaussian design experiments clearly confirm Theorem~\ref{thm:lh_convergence} in the sense that the loss in the convex-, as well as the gradient norm in the non-convex case decrease at a linear rate. The results on \textit{a9a} show that {\sc Gdnp} can accelerate optimization even when the normality assumption does not hold and in a setting where no covariate shift is present, which motivates future research of normalization techniques in optimization. Interestingly, the performance of simple {\sc Bn} and {\sc Wn} is similar to that of {\sc Gd}, which suggests that the length-direction decoupling on its own does not capture the entire potential of these methods. {\sc Gdnp} on the other hand takes full advantage of the parameter splitting, both in terms of multiple steps on $g$ and -- more importantly -- adaptive stepsizes in $\w$.
\section{NEURAL NETWORKS}
\label{sec:neural_networks}
\vspace{-2mm}
We now turn our focus to optimizing the weights of a multilayer perceptron (MLP) with one hidden layer and $m$ hidden units that maps an input $\x \in \mathbb{R}^d$ to a scalar output in the following way
\begin{align*}
    F_\x(\Tilde{\W},\Theta) = \sum_{i=1}^m \theta_i \varphi(\x^\top \Tilde{\w}^{(i)}).
\end{align*} 
The $\Tilde{\w}^{(i)} \in \R^d$ and $\theta^{(i)}\in\mathbb{R}$ terms are the input and output weights of unit $i$ and $\varphi:\mathbb{R}\rightarrow \mathbb{R}$ is a so-called activation function. We assume $\varphi$ to be a $\tanh$, which is a common choice in many neural network architectures used in practice. Given a loss function $\ell: \mathbb{R}\rightarrow \mathbb{R}^+$, the optimal input- and output weights are determined by minimizing the following optimization problem 
\begin{align}\label{eq:NN_problem}
    \min_{\Tilde{\W},\Theta} \Big( \fnn(\Tilde{\W},\Theta) := \E_{y,\x} \Big[ \ell\Big(-y F_\x(\Tilde{\W},\Theta)\Big) \Big]    \Big),
\end{align}
where
\begin{align*}
\Tilde{\W} := \{ \Tilde{\w}^{(1)}, \dots, \Tilde{\w}^{(m)} \} , \;\; \Theta := \{ \theta^{(1)}, \dots, \theta^{(m)} \}.
\end{align*}
In the following, we analyze the optimization of the input weights $\Tilde{\W}$ with frozen output weights $\Theta$ and thus write $F(\tilde{\W})$ hereafter for simplicity.
As previously assumed for the case of learning halfspaces, our analysis relies on Assumption \ref{as:strong_distribution_assumption}. 
This approach is rather common in the analysis of neural networks, e.g. ~\cite{brutzkus2017globally} showed that for a special type of one-hidden layer network with isotropic Gaussian inputs, all local minimizers are global. Under the same assumption, similar results for different classes of neural networks were derived in~\citep{li2017convergence, soltanolkotabi2017theoretical, du2017gradient}.
\subsection{Global characterization of the objective}
We here show that, under the same assumption, the landscape of $\fnn$ exhibits a global property similar to one of the Learning Halfspaces objective (see Section \ref{sec:global_halfspace_property}). 
In fact, the critical points $\Tilde{\w}_i$ of all neurons in a hidden layer align along one and the same single line in $\mathbb{R}^d$ and this direction depends only on \textit{incoming} information into the hidden layer. 
\begin{restatable}{lemma}{globalpropertynn} \label{lem:critical_point_characterization_nn}

Suppose Assumptions~\ref{as:weak_distribution_assumption} and \ref{as:strong_distribution_assumption} hold and let $\hat{\w}^{(i)}$ be a critical point of $\fnn (\Tilde{\W})$ with respect to hidden unit $i$ and for a fixed $\Theta \neq \zero$. Then, there exits a scalar $\hat{c}^{(i)} \in \R$ such that 
\begin{align} \label{eq:opt_direction_NN}
\hat{\w}^{(i)} = \hat{c}^{(i)} \cov^{-1} \u, \quad \forall i=1,\ldots,m.
\end{align} 
\end{restatable}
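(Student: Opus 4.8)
The plan is to collapse the network objective into an expectation over the single Gaussian vector $\z$, differentiate with respect to each hidden weight, and then apply Gaussian integration by parts (Stein's lemma) to read off the admissible directions. First I would exploit that $\varphi=\tanh$ is odd and that $y\in\{\pm1\}$ gives $\x=-y\z$: then $\varphi(\x^\top\tilde\w^{(i)})=\varphi(-y\,\z^\top\tilde\w^{(i)})=-y\,\varphi(\z^\top\tilde\w^{(i)})$, so that $-y\,F_\x(\tilde\W)=\sum_{j}\theta_j\varphi(\z^\top\tilde\w^{(j)})=:S(\z)$ and hence $\fnn(\tilde\W)=\E_\z[\ell(S(\z))]$, a clean function of $\z\sim\mathcal N(\u,\cov-\u\u^\top)$ (Assumption~2, where the joint expectation over $(y,\x)$ collapses onto the law of $\z=-y\x$). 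Differentiating, the partial gradient with respect to unit $i$ is $\nabla_{\tilde\w^{(i)}}\fnn=\theta_i\,\E_\z[\ell'(S(\z))\varphi'(\z^\top\tilde\w^{(i)})\,\z]$, so that at a joint critical point (the condition being imposed for every $i$, with the relevant $\theta_i\neq0$) we have $\E_\z[g_i(\z)\,\z]=\zero$, where $g_i(\z):=\ell'(S(\z))\varphi'(\z^\top\tilde\w^{(i)})$.

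Next I would apply the multivariate Stein identity $\E_\z[\z\,g(\z)]=\u\,\E_\z[g]+(\cov-\u\u^\top)\E_\z[\nabla g]$. Since $g_i$ depends on $\z$ only through the projections $\{\z^\top\tilde\w^{(k)}\}_{k=1}^{m}$, its gradient is a linear combination of the weight vectors; writing $\mathbf q_i:=\E_\z[\nabla g_i]=\sum_k A_{ik}\tilde\w^{(k)}$ and $b_i:=\E_\z[g_i]$, the critical-point condition becomes $\u\,b_i+(\cov-\u\u^\top)\mathbf q_i=\zero$. The key algebraic step is to absorb the rank-one part of the covariance into the mean term: rearranging gives $\cov\,\mathbf q_i=(\u^\top\mathbf q_i-b_i)\,\u$, hence $\mathbf q_i=\gamma_i\,\cov^{-1}\u$ with the scalar $\gamma_i:=\u^\top\mathbf q_i-b_i$. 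This is exactly what produces the second-moment-weighted direction $\cov^{-1}\u$ rather than $(\cov-\u\u^\top)^{-1}\u$, mirroring the single-projection computation that underlies Lemma~1.

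Finally I would upgrade ``each $\mathbf q_i$ is parallel to $\cov^{-1}\u$'' into ``each $\hat\w^{(i)}$ is parallel to $\cov^{-1}\u$.'' Collecting the weights as the columns of $\W=[\hat\w^{(1)},\dots,\hat\w^{(m)}]$ and the coefficients into the $m\times m$ matrix $A=(A_{ik})$ and the vector $\gamma=(\gamma_i)$, the $m$ conditions read $\W A^\top=\cov^{-1}\u\,\gamma^\top$, whose right-hand side has rank one. Provided $A$ is nonsingular, I multiply on the right by $(A^\top)^{-1}$ to obtain $\W=\cov^{-1}\u\,(A^{-1}\gamma)^\top$, so column $i$ is $\hat\w^{(i)}=\hat c^{(i)}\cov^{-1}\u$ with $\hat c^{(i)}=(A^{-1}\gamma)_i$, as claimed. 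Equivalently, applying a projector $P$ that annihilates $\cov^{-1}\u$ yields $(P\W)A^\top=\zero$, and invertibility of $A$ forces $P\W=\zero$, i.e. every column lies in $\mathrm{span}\{\cov^{-1}\u\}$.

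The main obstacle — and the feature that distinguishes this from the single-neuron Learning Halfspaces case of Lemma~1 — is that the units genuinely couple: $g_i$ depends on \emph{all} projections through the shared argument $S$ of the loss, so $A$ is not diagonal and a single unit's stationarity condition alone does not pin down its direction. The technical crux I expect to have to justify is therefore the non-degeneracy (invertibility) of the inter-unit coupling matrix $A$ at the critical point; without it one only concludes that certain linear combinations of the $\hat\w^{(k)}$ are aligned with $\cov^{-1}\u$, rather than each weight individually. I would either invoke a genericity/non-degeneracy assumption on the critical configuration or argue invertibility directly from the structure of $A=(A_{ik})$ together with $\ell$ and $\tanh$.
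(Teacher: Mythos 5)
Your proposal follows essentially the same route as the paper's proof: after rewriting the objective as an expectation over $\z$ (using oddness of $\tanh$) and applying Stein's lemma, the paper likewise assembles the per-unit stationarity conditions into the matrix equation $\cov\hat{\W}(\B+\mathbf{\Gamma}) = -\U\A$ — your coupling matrix $A$ is exactly its $\B+\mathbf{\Gamma}$, with $A_{ik}=\gamma^{(i,k)}+\delta_{ik}\beta^{(i)}$ — and then reads off each column of $\hat{\W}$ as a multiple of $\cov^{-1}\u$. The non-degeneracy of the coupling matrix that you flag as the technical crux is not actually resolved in the paper either: it writes $\hat{\W} = -\cov^{-1}\U\A(\B+\mathbf{\Gamma})^{\dagger}$ via a pseudo-inverse, which is a genuine equivalence only when $\B+\mathbf{\Gamma}$ is invertible (in the singular case the solution set of the matrix equation is an affine family, and the stated conclusion needs exactly the non-degeneracy you identify), so your attempt is no less complete than the paper's own argument.
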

See Appendix \ref{sec:deepnets} for a discussion of possible implications for deep neural networks.  


\begin{algorithm}[h]
\begin{algorithmic}[1]
\footnotesize{
\STATE Input: $\tdir^{(i)}$, $\tsc^{(i)}$, step-size policies $s^{(i)}$, stopping criterion $h^{(i)}$
\STATE $\W_{/1} \leftarrow \zero $,  ${\bf g}_{/1} = \textbf{1}$
\FOR{$i = 1,\dots, m$ }
\STATE $ \W_{/i} := \{ \w^{(1)}, \dots,  \w^{(i-1)},  \w^{(i+1)} ,\dots,  \w^{(m)} \} $, $ {\bf g}_{/i} := \{ g^{(1)}, \dots, g^{(i-1)}, g^{(i+1)}, \dots, g^{(m)} \} $
\STATE $f^{(i)}(\w^{(i)},g^{(i)}) := \fnn(\w^{(i)},g^{(i)},\W_{/i},{\bf g}_{/i})$
\STATE $(\w^{(i)}, g^{(i)}) \leftarrow$ {\sc Gdnp}($f^{(i)}$, $\tdir^{(i)}$, $\tsc^{(i)}$, $s^{(i)}$, $h^{(i)}$) \quad \quad \quad \quad \# optimization of unit $i$ 
\ENDFOR
\STATE \textbf{return} $\W := [\w^{(1)}, \dots, \w^{(m)}], {\bf g}:= [g^1, \dots, g^{m}] $.}
\end{algorithmic}
\caption{\footnotesize{Training an MLP with {\sc Gdnp} }} 
\label{alg:gdnp_nn}
\end{algorithm}
\subsection{Convergence result}
We again consider normalizing the weights of each unit by means of its input covariance matrix $\cov$, i.e. $ 
\Tilde{\w}^{(i)} = g^{(i)} \w^{(i)}/ \| \w^{(i)} \|_\cov, \; \forall i = 1, \dots, m$ and present a simple algorithmic manner to leverage the input-output decoupling of Lemma \ref{lem:critical_point_characterization_nn}. Contrary to {\sc Bn}, which optimizes all the weights simultaneously, our approach is based on alternating optimization over the different hidden units. We thus adapt {\sc Gdnp} to the problem of optimizing the units of a neural network in Algorithm~\ref{alg:gdnp_nn} but note that this modification is mainly to simplify the theoretical analysis.

Optimizing each unit independently formally results in minimizing the function $f^{(i)}(\w^{(i)},g^{(i)})$ as defined in Algorithm~\ref{alg:gdnp_nn}. In the next theorem, we prove that this version of {\sc Gdnp} achieves a linear rate of convergence to optimize each $f^{(i)}$.

\begin{restatable}{theorem}{convergencnn}[Convergence of {\sc Gdnp} on MLP] \label{lem:convergence_nn}
Suppose Assumptions~\ref{as:weak_distribution_assumption}-- \ref{as:smooth_assumptions} hold. We consider optimizing the weights 
$(\w^{(i)},g^{(i)})$ of unit $i$, assuming that all directions $\{ \w^{(j)}\}_{j<i}$ 
are critical points of $\fnn$ and $\w^{k} = \zero$ for $k>i$. Then, {\sc Gdnp} with step-size policy $s^{(i)}$ as in \eqref{eq:step_size_nn} and stopping criterion $h^{(i)}$ as in ~\eqref{eq:stopping_time_nn} 
yields a linear convergence rate on $f^{(i)}$ in the sense that
\begin{equation}
    \begin{aligned} 
  \| \nabla_{\Tilde{\w}^{(i)}} f(\Tilde{\w}^{(i)}_t) \|^2_{\cov^{-1}}  \leq&   (1-\mu/L)^{2t}  C \left( \rq(\w_0) - \rq^* \right) \\ &+ 2^{-\tsc^{(i)}}\zeta  | b^{(0)}_t - a_t^{(0)} |/\mu^{2},
\end{aligned} 
\end{equation}

where the constant $C>0$ is defined in Eq.~\eqref{eq:constant_c_nn}.
\end{restatable}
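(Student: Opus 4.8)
The plan is to reduce the single-unit objective $f^{(i)}$ to an instance of the Learning Halfspaces problem of Section~\ref{sec:learning_halfspaces} and then reuse the convergence machinery of Theorem~\ref{thm:lh_convergence} almost verbatim. First I would eliminate the frozen units. Because $\varphi=\tanh$ is odd with $\varphi(0)=0$, every unit $k>i$ with $\w^{(k)}=\zero$ contributes $\theta_k\varphi(0)=0$ and drops out of $F_\x$. Moreover, for $y\in\{\pm 1\}$ the oddness of $\tanh$ gives $-y\,\varphi(\x^\top\Tilde{\w}^{(j)})=\varphi(\z^\top\Tilde{\w}^{(j)})$ and the evenness of $\varphi'$ gives $\varphi'(\x^\top\Tilde{\w}^{(i)})=\varphi'(\z^\top\Tilde{\w}^{(i)})$, so with $\z=-y\x$ both the argument of $\ell$ and the chain-rule factor become pure functions of $\z$. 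This recasts $f^{(i)}$ as an expectation of a smooth composite loss of the single Gaussian vector $\z$, exactly the format handled in Section~\ref{sec:learning_halfspaces}.

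Next I would expose the two-component geometry of the directional gradient. Writing $\nabla_{\Tilde{\w}^{(i)}} f^{(i)}=\theta_i\,\E_\z[\,g(\z)\,\z\,]$ with $g$ smooth in the projections $\z^\top\Tilde{\w}^{(i)}$ and $\z^\top\Tilde{\w}^{(j)}$, $j<i$, I would apply Gaussian integration by parts (Stein's identity) to $\z\sim\mathcal N(\u,\cov-\u\u^\top)$, which yields $\E_\z[g(\z)\z]=(\cov-\u\u^\top)\,\E_\z[\nabla_\z g]+\u\,\E_\z[g]$. Since the frozen directions are critical points, Lemma~\ref{lem:critical_point_characterization_nn} places each $\Tilde{\w}^{(j)}$ on the line $\cov^{-1}\u$, so $\E_\z[\nabla_\z g]\in\mathrm{span}\{\Tilde{\w}^{(i)},\cov^{-1}\u\}$. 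Multiplying by $\cov-\u\u^\top$ and collecting terms, the gradient collapses to $A\,\cov\,\Tilde{\w}^{(i)}+B\,\u$ for scalars $A,B$ — precisely the gradient geometry of $\flh$. This single computation re-derives the critical-point characterization ($A\cov\Tilde{\w}^{(i)}+B\u=0\Rightarrow\Tilde{\w}^{(i)}\propto\cov^{-1}\u$) and shows that the {\sc Gdnp} directional dynamics on $f^{(i)}$ are governed by the same generalized Rayleigh quotient $\rq$ of Eq.~\eqref{eq:LS_rayleigh_formulation}.

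With this reduction the two-part error bound transfers directly. The directional updates inherit the linear Rayleigh-quotient contraction of Theorem~\ref{thm:least_squares_convergence}, which supplies the $(1-\mu/L)^{2t}$ factor through a relation between $\|\nabla_{\Tilde{\w}^{(i)}} f^{(i)}\|_{\cov^{-1}}^2$ and the suboptimality $\rq(\w_0)-\rq^*$ analogous to Eq.~\eqref{eq:suboptimality_gradientnorm_thm}; the one-dimensional Bisection over $g^{(i)}$ contributes the $2^{-\tsc^{(i)}}$ scaling error via $\zeta$-smoothness, exactly as in Theorem~\ref{thm:lh_convergence}. The constant $C$ of Eq.~\eqref{eq:constant_c_nn} then plays the role that $\Phi^2$ did there, now additionally absorbing the output weight $\theta_i$ and the bounds on $\ell'$ and $\varphi'$ coming from Assumptions~\ref{as:reg_assumptions}--\ref{as:smooth_assumptions}.

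The step I expect to be the main obstacle is the second one: verifying that the extra terms introduced by composing with $\ell$ and by the residual $R_\x=\sum_{j<i}\theta_j\varphi(\z^\top\Tilde{\w}^{(j)})$ never break the two-component form. The residual enters $g$ only through $\z^\top\cov^{-1}\u$, so the $\ell''$ contributions to $\E_\z[\nabla_\z g]$ point along $\cov^{-1}\u$ and, after multiplication by $\cov-\u\u^\top$, remain proportional to $\u$; making this bookkeeping rigorous while simultaneously bounding the scalars $A$ and $B$ (hence $C$) uniformly in $\Tilde{\w}^{(i)}$ under Assumptions~\ref{as:reg_assumptions}--\ref{as:smooth_assumptions} is the delicate part of the argument.
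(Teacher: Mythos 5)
Your proposal follows essentially the same route as the paper's proof: Stein's lemma collapses the gradient to the two-component form $A\,\cov\Tilde{\w}^{(i)}+B\,\u$ once the frozen units $j<i$ are placed on the line $\cov^{-1}\u$ (via Lemma~\ref{lem:critical_point_characterization_nn}) and the units $k>i$ vanish, after which the directional {\sc Gdnp} iterates are matched to Gradient Descent on the Rayleigh quotient $\rq$ (inheriting the $(1-\mu/L)^{2t}$ rate from Theorem~\ref{thm:least_squares_convergence}), the bisection argument of Lemma~\ref{lem:convergence_in_g} supplies the $2^{-\tsc^{(i)}}$ term, and Proposition~\ref{prop:gradient_norm_in_different_parameterization} combines the two. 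The bookkeeping you flag as delicate is exactly what the paper resolves with its explicit scalars $\alpha^{(i)},\beta^{(i)},\gamma^{(i,j)}$ and the bound $\xi_t^2\leq 2\Phi^2+2i\sum_{j<i}(\theta^{(j)}c_j)^2 = C$, together with the kernel property $\A_{\w^{(i)}}\cov\w^{(i)}=\zero$ that eliminates the $\beta^{(i)}$ term in normalized coordinates.
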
 
The result of Theorem~\ref{lem:convergence_nn} relies on the fact that each $\w^{(j)}, j\neq i$ is either zero or has zero gradient. If we assume that an exact critical point is reached after optimizing each individual unit, then the result directly implies that the alternating minimization presented in Algorithm 2 reaches a critical point of the overall objective. Since the established convergence rate for each individual unit is linear, this assumption sounds realistic. We leave a more precise convergence analysis, that takes into account that optimizing each individual unit for a finite number of steps may yield numerical suboptimalities, for future work. 


\subsection{Experiments II}\label{sec:exp_ii}
\paragraph{Setting} In the proof of Theorem \ref{lem:convergence_nn} we show that {\sc Gdnp} can leverage the length-direction decoupling in a way that lowers cross-dependencies between hidden layers and yields faster convergence. A central part of the proof is Lemma \ref{lem:critical_point_characterization_nn} which says that -- given Gaussian inputs -- the optimal direction of a given layer is independent of all downstream layers. Since this assumption is rather strong and since Algorithm \ref{alg:gdnp_nn} is intended for analysis purposes only, we test the validity of the above hypothesis outside the Gaussian setting by training a Batch Normalized multilayer feedforward network ({\sc Bn}) on a real-world image classification task with plain Gradient Descent. 
For comparison, a second \textit{unnormalized} network is trained by {\sc Gd}. To validate Lemma \ref{lem:critical_point_characterization_nn} we measure the interdependency between the central and all other hidden layers in terms of the Frobenius norm of their second partial cross derivatives (in the directional component). Further details can be found in Appendix D.
\begin{figure}[h!]\label{fig:NN_result}
	\begin{center}
          \begin{tabular}{c@{}c@{}}
            \adjincludegraphics[width=0.5\linewidth, trim={22pt 22pt 30pt 30pt},clip]{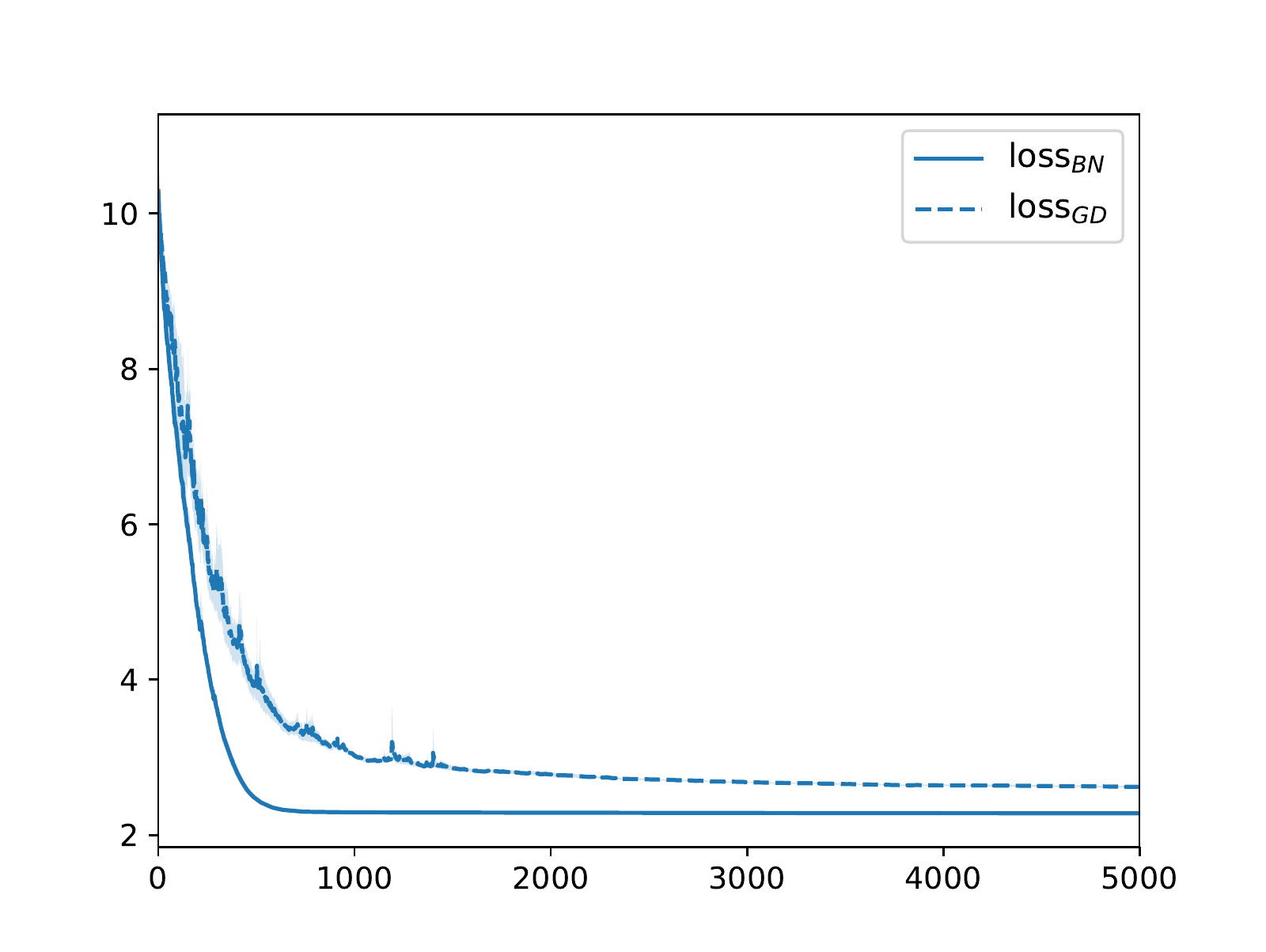} &
            \adjincludegraphics[width=0.5\linewidth, trim={22pt 22pt 30pt 30pt},clip]{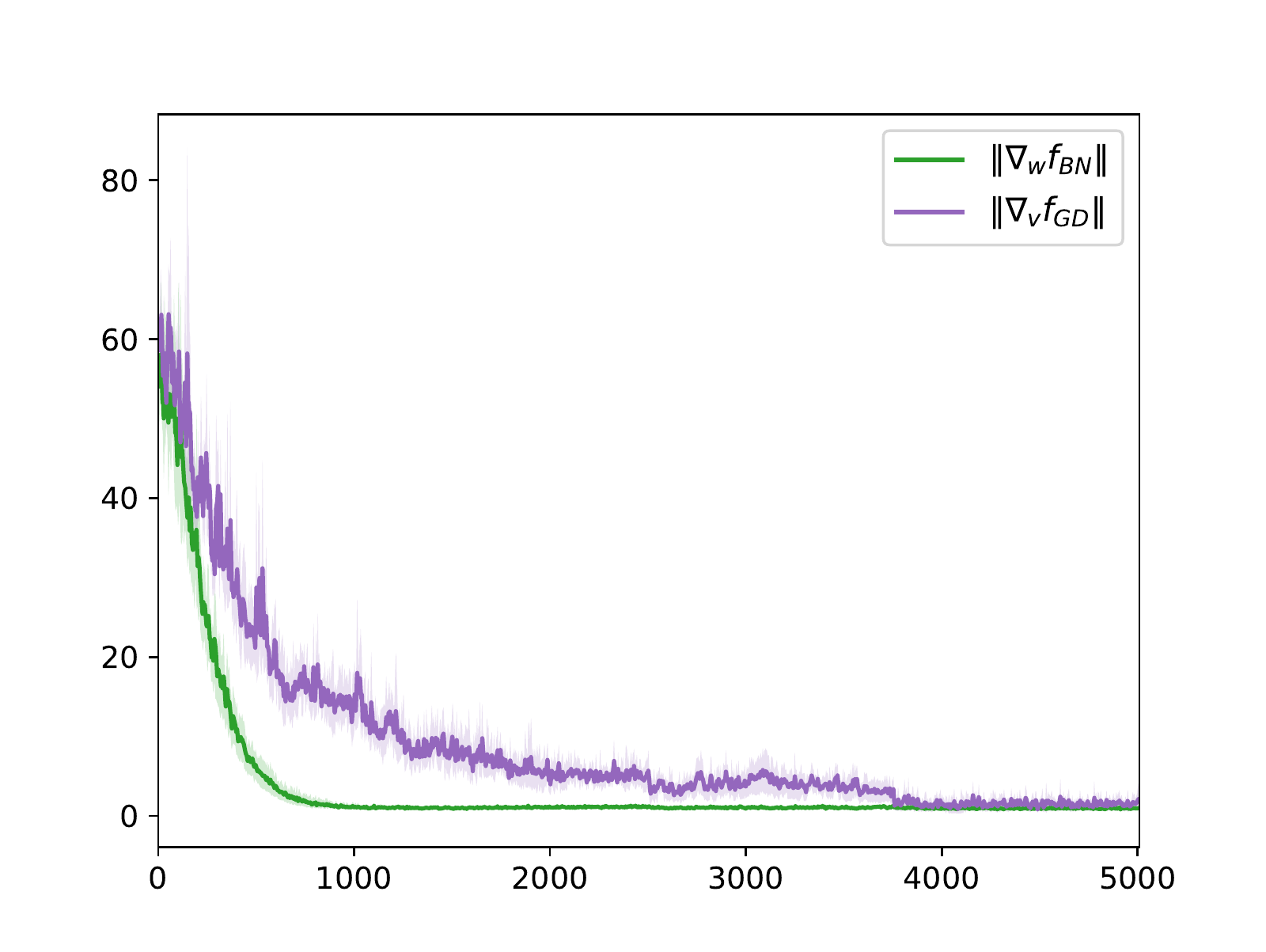} \\ \adjincludegraphics[width=0.5\linewidth, trim={22pt 22pt 30pt 30pt},clip]{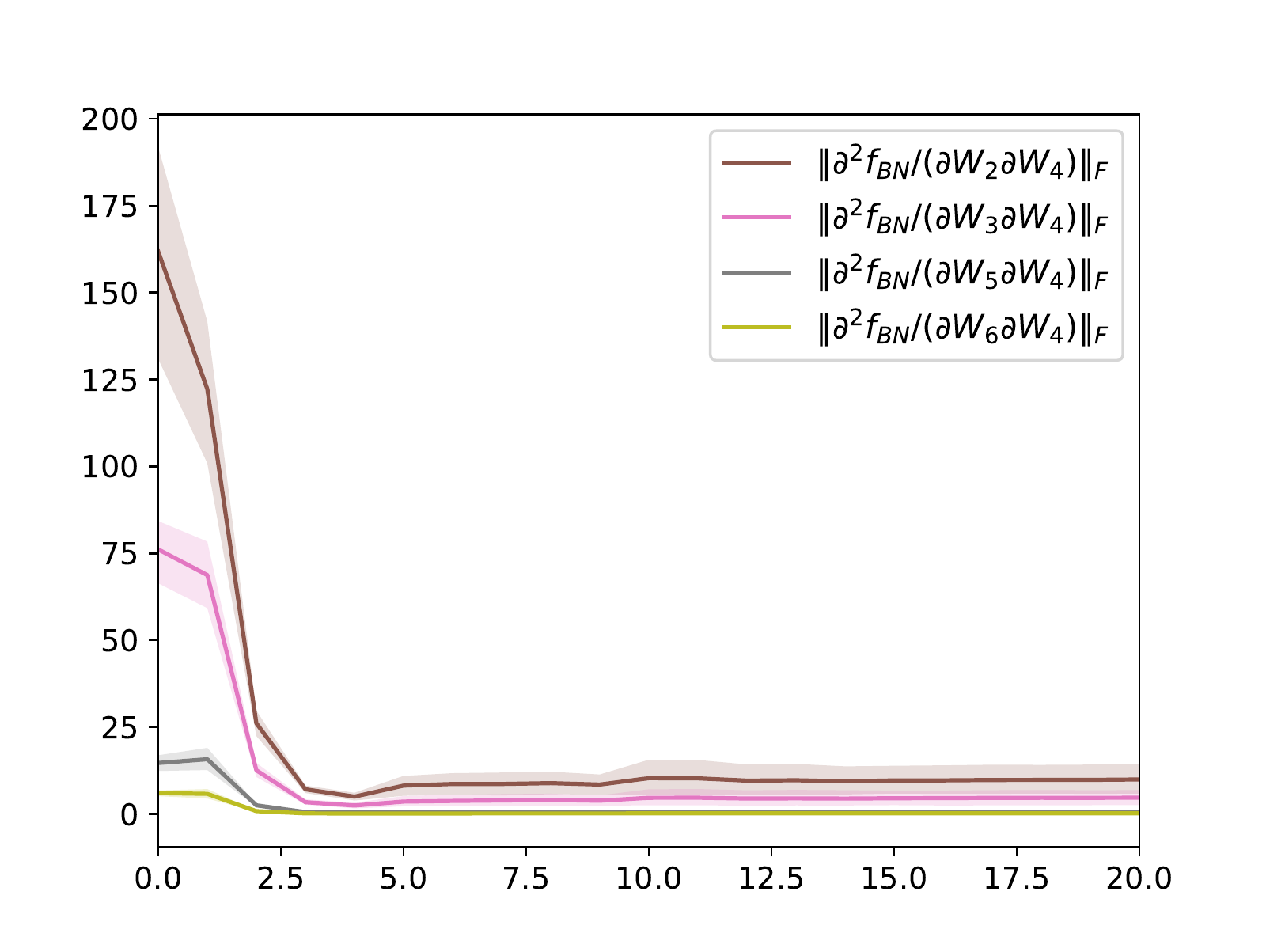}&
            
                \adjincludegraphics[width=0.5\linewidth, trim={22pt 22pt 30pt 30pt},clip]{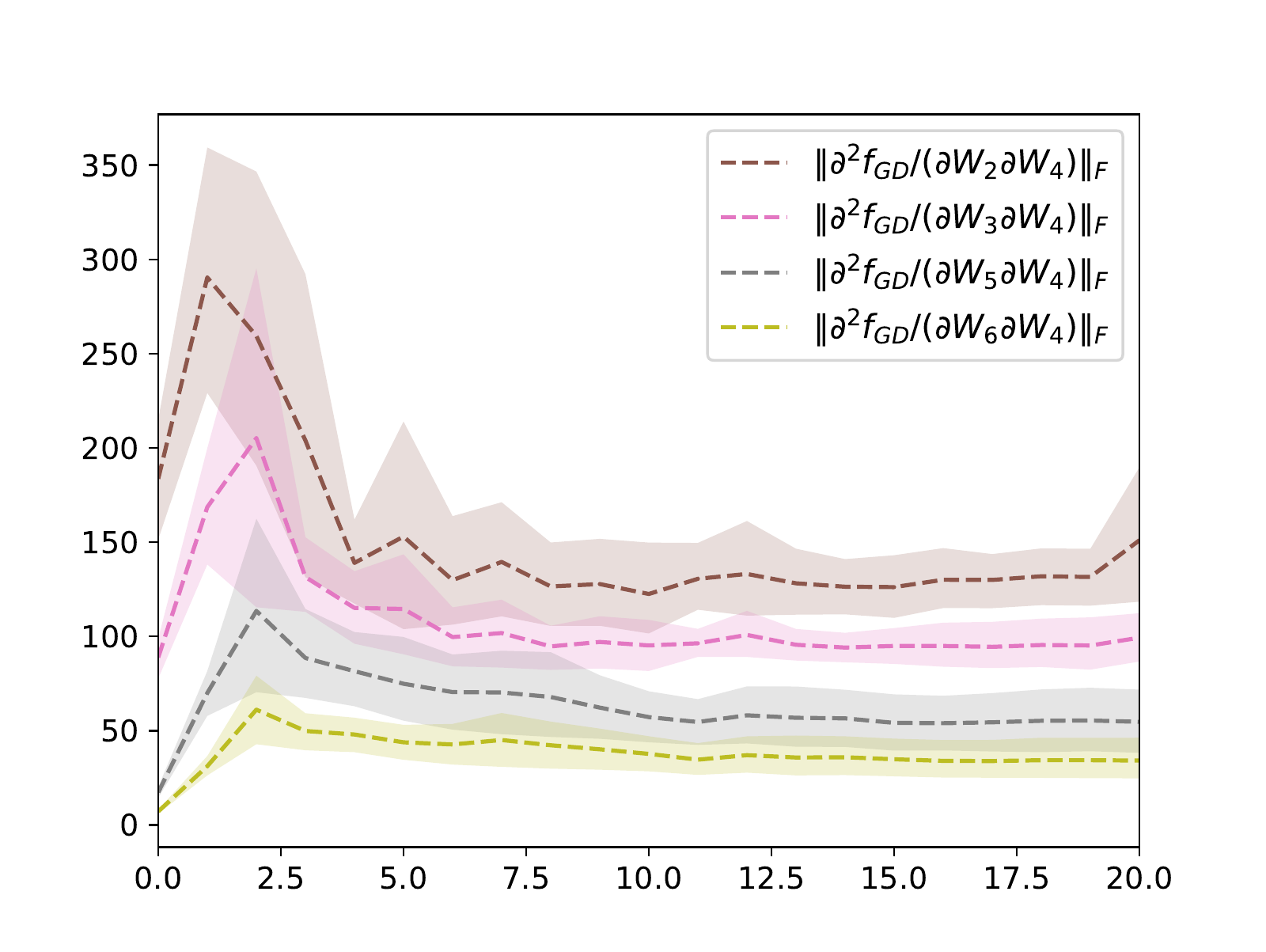}
            

	  \end{tabular}          

          \caption{\footnotesize{(i) Loss, (ii) gradient norm and dependencies between central- and all other layers for BN (iii) and GD (iv) on a 6 hidden layer network with 50 units (each) on the CIFAR10 dataset (5 runs with random initialization, 5000 iterations, curvature information computed each 200th).}}
          \label{fig:exp_NN_linear}
	\end{center}
\end{figure}

\paragraph{Results} Figure~\ref{fig:exp_NN_linear} 
confirms that the directional gradients of the central layer are affected far more by the upstream than by the downstream layers to a surprisingly large extent. Interestingly, this holds even before reaching a critical point. The downstream cross-dependencies are generally decaying for the Batch Normalized network ({\sc Bn}) (especially in the first 1000 iterations where most progress is made) while they remain elevated in the un-normalized network ({\sc Gd}), which suggest that using Batch Normalization layers indeed simplifies the networks curvature structure in $\w$ such that the length-direction decoupling allows Gradient Descent to exploit simpler trajectories in these normalized coordinates for faster convergence.

Of course, we cannot untangle this effect fully from other possible positive aspects of training with {\sc Bn} (see introduction). Yet, the fact that the (de-)coupling increases in the distance to the middle layer (note how earlier (later) layers are more (less) important for $\W_4$) emphasizes the relevance of this analysis particularly for deep neural network structures, where downstream dependencies might vanish completely with depth. This does not only make gradient based training easier but also suggests the possibility of using partial second order information, such as diagonal Hessian approximations (e.g. proposed in \citep{martens2012estimating}).

\section{CONCLUSION}

We took a theoretical approach to study the acceleration provided by Batch Normalization. In a somewhat simplified setting, we have shown that the reparametrization performed by Batch Normalization leads to a provable acceleration of gradient-based optimization by splitting it into subtasks that are easier to solve. In order to evaluate the impact of the assumptions required for our analysis, we also performed experiments on real-world datasets that agree with the results of the theoretical analysis to a surprisingly large extent. 

We consider this work as a first step for two particular directions of future research. First, it raises the question of how to optimally train Batch Normalized neural networks. Particularly, our results suggest that different and adaptive stepsize schemes for the two parameters - length and direction - can lead to significant accelerations. 
Second, the analysis of Section~\ref{sec:OLS} and~\ref{sec:learning_halfspaces} reveals that a better understanding of non-linear coordinate transformations is a promising direction for the continuous optimization community.

\bibliography{batch-norm}

\bibliographystyle{apalike}

\appendix
\addcontentsline{toc}{section}{Appendices}
\onecolumn
\section*{Appendix}

\appendix

\section{LEAST SQUARES ANALYSIS} 
\subsection{Restated result}
Recall that, after normalizing according to \eqref{eq:reparametrization} and using the closed form solution for the optimal scaling factor $g^* := -\left( \u^\top \w\right) /\| \w \|_\cov$, optimizing the ordinary least squares objective can be written as the following minimization problem 
\begin{align*} 
\min_{\w\in \mathbb{R}^d} \left( \rq(\w) := - \frac{\w^\top \u \u^\top \w}{ \w^\top \cov \w}\right).\tag{\ref{eq:LS_rayleigh_formulation} revisited}
\end{align*} 
We consider optimizing the above objective by {\sc Gd} which takes iterative steps of the form
\begin{align} \label{eq:gd_least_squares_app}
\w_{t+1} = \w_t - \eta_t \nabla \rq(\w_t),\end{align} 
where
\begin{equation}\label{eq:LS_gradient_w}
\nabla \rq(\w_t)=- 2 \left(\B\w_t+\rho(\w_t)\cov\w_t \right)/\w_t^\top\cov \w_t.
\end{equation}
Furthermore, we choose 
\begin{equation}\label{eq:stepsize_LS}
\eta_t=\frac{\w_t^\top \cov \w_t}{2 L |\rho(\w_t)|}.
\end{equation}

Our analysis relies on the (weak) data distribution assumption stated in A\ref{as:weak_distribution_assumption}.
It is noteworthy that $\rho(\w)\leq 0, \forall \w \in \mathbb{R}^d\setminus \{\mathbf{0}\}$ holds under this assumption. In the next theorem, we establish a convergence rate of {\sc Gd} in terms of function value as well as gradient norm. 
\begin{framed}
\leastsquaresconvergence*
\end{framed}

The proof of this result crucially relies on the insight that the minimization problem given in \eqref{eq:normalized_least_squares} resembles the problem of maximizing the generalized Rayleigh quotient which is commonly encountered in generalized eigenproblems. We will thus first review this area, where convergence rates are usually provided in terms of the angle of the current iterate with the maximizer, which is the principal eigenvector. Interestingly, this angle can be related to both, the current function value as well as the the norm of the current gradient. We will make use of these connections to prove the above Theorem in Section \ref{sec:proof_of_linear_rate_on_quad}. Although not necessarily needed for convex function, we introduce the gradient norm relation as we will later go on to prove a similar result for possibly non-convex functions in the learning halfspace setting (Theorem \ref{thm:lh_convergence}).

\subsection{Background on eigenvalue problems}\label{sec:backgroundA}
\subsubsection{Rayleigh quotient}
Optimizing the Rayleigh quotient is a classical non-convex optimization problem that is often encountered in eigenvector problems. Let $\B \in \mathbb{R}^{d\times d}$ be a symmetric matrix, then $\w^1 \in \mathbb{R}^d$ is the principal eigenvector of $\B$ if it maximizes 
\begin{align} \label{eq:rayleigh}
q(\w) = \frac{\w^\top \Bm \w}{\w^\top \w}
\end{align} 
and $q(\w)$ is called the Rayleigh quotient. Notably, this quotient satisfies the so-called Rayleigh inequality
\begin{equation*}
\lambda_{\min}(\B) \leq q(\w) \leq \lambda_1(\B),  \quad  \forall \w\in \mathbb{R}\setminus \{\mathbf{0}\},
\end{equation*}
where $\lambda_{\min}(\B)$ and $\lambda_1(\B)$ are the smallest and largest eigenvalue of $\B$ respectively.

Maximizing $q(\w)$ is a non-convex (strict-saddle) optimization problem, where the $i$-th critical point $\v_i$ constitutes the $i$-th eigenvector with corresponding eigenvalue $\lambda_i=q(\w_i)$ (see \citep{absil2009optimization}, Section 4.6.2 for details). It is known that optimizing $q(\w)$ with {\sc Gd} - using an iteration-dependent stepsize -  converges linearly to the principal eigenvector $\v_1$. The convergence analysis is based on 
the "minidimensional" method and yields the following result
\begin{equation}\label{eq:rate_ray}
\frac{\lambda_1-q(\w_t)}{\cosasq{}{\w_t}{\v_1} } \leq \left(1-\frac{\lambda_1-\lambda_2}{\lambda_1-\lambda_{\min}} \right)^{2t} \frac{\lambda_1-q(\w_0)}{\cosasq{}{\w_0}{\v_1} }
\end{equation}

under weak assumptions on $\w_0$. Details as well as the proof of this result can be found in \citep{knyazev1991exact}. 
\subsubsection{Generalized rayleigh quotient}

The reparametrized least squares objective \eqref{eq:LS_rayleigh_formulation}, however, is not exactly equivalent to
\eqref{eq:rayleigh} because of the covariance matrix that appears in the denominator. As a matter of fact, our objective is a special instance of the generalized Rayleigh quotient 
\begin{equation}\label{eq:gen_ray}
    \Tilde{\rho}(\w)=\frac{\w^\top \B \w}{\w^\top \A \w},
\end{equation}
where $\B$ is defined as above and $\A\in \mathbb{R}^{d\times d}$ is a symmetric positive definite matrix. 

Maximizing \eqref{eq:gen_ray} is a generalized eigenproblem in the sense that it solves the task of finding eigenvalues $\lambda$ of the matrix pencil $(\B,\A)$ for which $\det(\B-\lambda \A)=0$, i.e. finding a vector $\v$ that obeys $ \B\v = \lambda \A \v$. Again we have 
$$ \lambda_{\min}(\B,\A) \leq \Tilde{\rho}(\w) \leq  \lambda_1(\B,\A),\quad \forall \w\in \mathbb{R}\setminus\{0\}.$$ 

Among the rich literature on solving generalized symmetric eigenproblems, a {\sc Gd} convergence rate similar to \eqref{eq:rate_ray} has been established in Theorem 6 of \citep{knyazev2003geometric}, which yields

\begin{equation*}
   \frac{\lambda_1-\rho(\w_{t+1})}{\rho(\w_{t+1})-\lambda_2} \leq \left( 1- \frac{\lambda_1-\lambda_2}{\lambda_1-\lambda_{\min}}\right)^{2t} \frac{\lambda_1-\rho(\w_{t})}{\rho(\w_{t})-\lambda_2} ,
\end{equation*}

again under weak assumptions on $\w_0$. \subsubsection{Our contribution}
Since our setting in Eq.~\eqref{eq:LS_rayleigh_formulation} is a minimization task, we note that for our specific choice of $\A$ and $\B$ we have $-\Tilde{\rho}(\w)=\rho(\w)$ and we recall the general result that then $$\min \rho(\w)=-\max(\Tilde{\rho}(\w)).$$
 More importantly, we here have a special case where the nominator of  $\rq(\w)$ has a particular low rank structure. In fact, $\B:=\u \u^\top$ is a rank one matrix. Instead of directly invoking the convergence rate in~\citep{knyazev2003geometric}, this allows for a much simpler analysis of the convergence rate of {\sc Gd} on $\rho(\w)$ since the rank one property yields a simpler representation of the relevant vectors. Furthermore, we establish a connection between suboptimality on function value and the $\cov^{-1}$-norm of the gradient. As mentioned earlier, we need such a guarantee in our future analysis on learning halfspaces which is an instance of a (possibly) non-convex optimization problem.

\subsection{Preliminaries} 
\textbf{Notations } Let $\A$ be a symmetric positive definite matrix. We introduce the following compact notations that will be used throughout the analysis.
\begin{itemize}
    \item $\A$-inner product of $\w, \v \in \R^d$: $\langle\w,\v\rangle_{\A}=\w^\top \A\v$. 
    \item $\A$-norm of $\w \in \R^d$: $\|\w\|_\A = \left( \w^\top \A \w \right)^{1/2}$. 
    \item  $\A$-angle between two vectors $\w, \v \in \R^d   \textbackslash \{\zero\}$: 
    \begin{align} 
    \angle_\A \left( \w,\v \right) := \arccos\left( \frac{\langle \w,\v\rangle_\A}{\|\w\|_\A \|\v\|_\A} \right)
    \end{align}
    \item $\A$-orthogonal projection $\hat{\w}$ of $\w$ to span$\{\v\}$ for $\w, \v \in \R^d  \textbackslash \{ \zero \}$: 
    \begin{align}\label{eq:A-orthogonal}
        \hat{\w} \in \text{span}\{ \v \} \quad \text{with } \quad \langle \w - \hat{\w}, \v\rangle_{\A} = 0 
    \end{align}
    \item $\A$-spectral norm of a matrix $\C \in \R^{d\times d}$: 
    \begin{align} 
    \| \C\|_\A := \max_{\w \in \R^d / \{\zero \} } \frac{\|\C \w \|_\A }{\| \w \|_\A }
    \end{align} 
\end{itemize}
These notations allow us to make the analysis similar to the simple Rayleigh quotient case. For example, the denominator in \eqref{eq:gen_ray} can now be written as $\|\w\|_\A^2$. 

\textbf{Properties } We will use the following elementary properties of the induced terms defined above.
\begin{itemize}
     \item[(P.1)] $ \sinasq{\A}{\w}{\v} = 1 - \cosasq{\A}{\w}{\v}  $
    \item[(P.2)]  If $\hat{\w}$ is the $\A$-orthogonal projection of $\w$ to span$\{\v\}$, then it holds that 
    \begin{align}   \label{eq:properity_2} 
    \cosasq{\A}{\w}{\v}  = \frac{\|\hat{\w}\|_\A^2 }{\|\w\|_\A^2}, \quad \sin \angle_{\A}{\w}{\v}  = \frac{\|\w- \hat{\w}\|_\A }{\|\w\|_\A} 
    \end{align}
    \item[(P.3)] The $\A$-spectral norm of a matrix can be written in the alternative form 
    \begin{align}
        \| \C \|_{\A} & = \max_{\w \in \R^d \setminus \{\zero \} } \frac{\|\A^{1/2} \C \w \|_2 }{\|\A^{1/2} \w \|_2 } \\ & = \max_{\w \in \R^d \setminus \{\zero \} } \frac{\|\left(\A^{1/2} \C\A^{-1/2} \right)  \A^{1/2}  \w \|_2 }{\|\A^{1/2} \w \|_2 } \\ 
        & = \|\A^{1/2}\C\A^{-1/2} \|_2
    \end{align}
    \item[(P.4)] Let $ \C \in \R^{d \times d}$ be a square matrix and $\w \in \R^d $, then the following result holds due to  the definition of $\A$-spectral norm  
    \begin{align} 
        \| \B \w \|_\A  \leq  \| \B \|_\A \| \w \|_\A
    \end{align} 
\end{itemize}

\subsection{Characterization of the LS minimizer} \label{sec:minimizer_of_LS} 
By setting the gradient of \eqref{eq:least_squares_objective} to zero and recalling the convexity of $\fls$ we immediately see that the minimizer of this objective is
\begin{equation}\label{eq:LS_mininizer}
\Tilde{\w}^*:=-\cov^{-1}\u.
\end{equation}
Indeed, one can easily verify that $\Tilde{\w}^*$ is also an eigenvector of the matrix pair $(\B,\cov)$ since
\begin{equation}
\begin{aligned} \label{eq:generalized_eigenvector_solution}
 &\B \Tilde{\w}^* = \u \u^\top (-\cov^{-1} \u) = -\|\u\|^2_{\cov^{-1}} \u \\=&  ( \|\u\|^2_{\cov^{-1}})  \cov (-\cov^{-1}\u)=\lambda_1\cov \Tilde{\w}^*
\end{aligned}
\end{equation}

where  $\lambda_1:= \| \u \|_{\cov^{-1}}^2$ is the corresponding generalized eigenvalue. The associated eigenvector with $\lambda_1$ is  
\begin{equation}\label{eq:dominant_eigenvector}\v_1:= \Tilde{\w}^*/\|\u\|_{\cov^{-1}}.
 \end{equation}
Thereby we extend the normalized eigenvector to an $\cov$-orthogonal basis $(\v_1, \v_2, \dots,\v_d)$ of $\R^d$ such that 
\begin{equation}
\begin{aligned} \label{eq:Aorthonganlity}
\langle \v_i,\v_j \rangle_\cov=
\begin{cases}
0,\; i\not=j\\
1,\; i=j
\end{cases}
\end{aligned}
\end{equation}

holds for all $i,j$. Let $\V_2 := [ \v_2, \v_3, \dots, \v_d ]$ be the matrix whose $(i-1)$-th column is $\v_i, i\in \{2,\dots,d\}$. The matrix $\B$ is orthogonal to the matrix $\V_2$ since
\begin{equation}
\begin{aligned} \label{eq:orthognality_B}
&\B \V_2 = \u \u^\top \V_2 =  \u \u^\top \cov^{-1} \cov \V_2 \\  \stackrel{\eqref{eq:dominant_eigenvector}}{= }  & \| \cov^{-1} \u \|_\cov \u \left(\underbrace{-\v_1^\top \cov \V_2}_{\zero} \right) ,
\end{aligned}
\end{equation}

which is a zero matrix due to the $\A$-orthogonality of the basis (see Eq.~\eqref{eq:Aorthonganlity}). As a result, the columns of $\V_2$ are eigenvectors associated with a zero eigenvalue. Since $\v_1$ and $\V_2$ form an $\cov$-orthonormal basis of $\mathbb{R}^d$ no further eigenvalues exist. We can conclude that any vector $\w^* \in \text{span}\{\v_1\}$ is a minimizer of the reparametrized ordinary least squares problem as presented in \eqref{eq:LS_rayleigh_formulation} and the minimum value of $\rq$ relates to the eigenvalue as $$\lambda_1 =  - \min_\w \rho(\w).$$

\textbf{Spectral representation of suboptimality }
Our convergence analysis is based on the angle between the current iterate $\w_t$ and the leading eigenvector $\v_1$, for which we recall property (P.1). We can  express $\w \in \R^d$ in the $\cov$-orthogonal basis that we defined above: 
\begin{align} \label{eq:w_in_S_basis}
\w = \alpha_1 \v_1 + \V_2 \balpha_2, \quad \alpha_1 \in \R, \quad  \balpha_2 \in \R^{d-1}
\end{align}
and since $\alpha_1 \v_1$ is the $\cov$-orthogonal projection of $\w$ to span$\{\v_1\}$, the result of (P.2) implies 
\begin{align} \label{eq:cosine_squared_expansion}
\cosasq{\cov}{\w}{\v_1} = \frac{\|\alpha_1 \v_1 \|^2_\cov}{\| \w \|^2_\cov} = \frac{\alpha_1^2}{\| \w \|^2_\cov }. 
\end{align}

Clearly this metric is zero for the optimal solution $\v_1$ and else bounded by one from above. To justify it is a proper choice, the next proposition proves that suboptimality on  $\rq$, i.e. $\rq(\w) - \rq(\v_1)$, relates directly to this angle. 
\begin{proposition} \label{pro:function_value_expansion}
The suboptimality of $\w$ on $\rq(\w)$ relates to $\sinasq{\cov}{\w}{\v_1}$ as 
\begin{align} \label{eq:sine_suboptimality_connection}
    \rq(\w) - \rq(\v_1)= \lambda_1 \sinasq{\cov}{\w}{\v_1},
\end{align}
where $\rq(\v_1)=\lambda_1$. This is equivalent to 
\begin{align} \label{eq:cosine_objective_connection}
    \rq(\w) = -\lambda_1 \cosasq{\cov}{\w}{\v_1}.
\end{align}
\end{proposition}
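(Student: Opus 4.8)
The plan is to establish the cosine identity \eqref{eq:cosine_objective_connection} first, and then obtain the suboptimality form \eqref{eq:sine_suboptimality_connection} as an immediate corollary via property (P.1). The whole argument reduces to expressing both $\rq(\w)$ and $\cosasq{\cov}{\w}{\v_1}$ in terms of the single scalar $\u^\top \w$.

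First I would unwind the definition of the $\cov$-angle. Since the basis constructed above is $\cov$-orthonormal (Eq.~\eqref{eq:Aorthonganlity}), we have $\| \v_1 \|_\cov = 1$, so that
\[
\cosasq{\cov}{\w}{\v_1} = \frac{\langle \w, \v_1\rangle_\cov^2}{\| \w \|_\cov^2} = \frac{(\w^\top \cov \v_1)^2}{\| \w \|_\cov^2}.
\]
The crucial step is to rewrite the inner product $\w^\top \cov \v_1$ in terms of $\u^\top \w$. Combining \eqref{eq:LS_mininizer} and \eqref{eq:dominant_eigenvector} gives $\v_1 = -\cov^{-1}\u/\| \u \|_{\cov^{-1}}$, and since $\lambda_1 = \| \u \|_{\cov^{-1}}^2$ a one-line computation yields $\cov \v_1 = -\u/\sqrt{\lambda_1}$, hence $\langle \w, \v_1\rangle_\cov = -(\u^\top \w)/\sqrt{\lambda_1}$.

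Substituting this back, the factor $\lambda_1$ cancels and we obtain
\[
\cosasq{\cov}{\w}{\v_1} = \frac{(\u^\top \w)^2}{\lambda_1 \| \w \|_\cov^2}, \qquad -\lambda_1 \cosasq{\cov}{\w}{\v_1} = -\frac{(\u^\top \w)^2}{\| \w \|_\cov^2} = \rq(\w),
\]
which is exactly \eqref{eq:cosine_objective_connection}. Evaluating at $\w = \v_1$ (where $\cos^2 = 1$) recovers $\rq(\v_1) = -\lambda_1$, consistent with the minimum value $\min_\w \rq = -\lambda_1$ established above. Finally, property (P.1) gives $\rq(\w) - \rq(\v_1) = -\lambda_1\cosasq{\cov}{\w}{\v_1} + \lambda_1 = \lambda_1\,\sinasq{\cov}{\w}{\v_1}$, which is \eqref{eq:sine_suboptimality_connection}.

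I do not expect any genuine difficulty here; the statement is essentially algebraic bookkeeping and the proof is short. The only thing demanding care is the sign and normalization: one must correctly carry the factor $\sqrt{\lambda_1} = \| \u \|_{\cov^{-1}}$ arising from the $\cov$-normalization of $\v_1$ together with the minus sign in $-\cov^{-1}\u$, both of which cancel to leave the clean identity. As an equivalent alternative more aligned with the basis already set up, one can instead substitute the spectral expansion $\w = \alpha_1\v_1 + \V_2\balpha_2$ from \eqref{eq:w_in_S_basis}: $\cov$-orthonormality gives $\| \w \|_\cov^2 = \alpha_1^2 + \| \balpha_2 \|^2$, while $\u^\top\V_2 = \zero$ (from $\B\V_2 = \zero$ in \eqref{eq:orthognality_B}) and $\u^\top\v_1 = -\sqrt{\lambda_1}$ give $(\u^\top\w)^2 = \alpha_1^2\lambda_1$; comparing the resulting quotient with \eqref{eq:cosine_squared_expansion} reproduces the same identity.
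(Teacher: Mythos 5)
Your proof is correct, but your primary argument takes a more direct route than the paper's. The paper expands $\w = \alpha_1\v_1 + \V_2\balpha_2$ in the $\cov$-orthogonal eigenbasis, computes $\B\w = \alpha_1\lambda_1\cov\v_1$ via the orthogonality relation \eqref{eq:orthognality_B} and the eigen-relation \eqref{eq:generalized_eigenvector_solution}, and then substitutes this into the quotient. Your main argument never touches the complementary basis $\V_2$: you exploit the rank-one structure $\w^\top\B\w = (\u^\top\w)^2$ together with the explicit formula $\cov\v_1 = -\u/\sqrt{\lambda_1}$ to express both $\rq(\w)$ and $\cosasq{\cov}{\w}{\v_1}$ as the same multiple of $(\u^\top\w)^2/\|\w\|_\cov^2$, after which the identity is immediate. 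This is more elementary and self-contained; what the paper's basis expansion buys is reusability, since the same expansion and the identity $\B\w = \alpha_1\lambda_1\cov\v_1$ are the workhorses of Propositions \ref{pro:gradient_expansion} and \ref{pro:gradient_norm_expansion}, so the paper amortizes its setup over the subsequent gradient-norm analysis; the ``alternative'' you sketch at the end is precisely that route. One further point in your favor: your computation makes explicit that $\rq(\v_1) = -\lambda_1$, which is the value both displayed equations actually require for their mutual consistency (the parenthetical ``$\rq(\v_1)=\lambda_1$'' in the statement carries a sign typo, given that the paper itself records $\lambda_1 = -\min_\w \rq(\w)$), and your careful tracking of the normalization $\sqrt{\lambda_1} = \|\u\|_{\cov^{-1}}$ and the minus sign in $\v_1 = -\cov^{-1}\u/\|\u\|_{\cov^{-1}}$ is exactly what resolves this.
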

\begin{proof} 
We use the proposed eigenexpansion of Eq.~\eqref{eq:w_in_S_basis} to rewrite
\begin{align} 
\B \w  = (\alpha_1 \B \v_1 + \B \V_2 \balpha_2)  
 \stackrel{\eqref{eq:orthognality_B}}{=} \alpha_1 \B \v_1   
 \stackrel{\eqref{eq:generalized_eigenvector_solution}}{=} \alpha_1 \lambda_1 \cov \v_1 \label{eq:Bw_expansion}
\end{align} 
and replace the above result into $\rq(\w)$. Then 
\begin{align*} 
\rq(\w)  &= -\frac{\w^\top \B \w }{\w^\top \cov \w }  \stackrel{\eqref{eq:Bw_expansion}}{=} - \alpha_1 \lambda_1 \frac{(\alpha_1 \v_1 + \V_2 \balpha_2)^\top  \cov \v_1}{\| \w \|_\cov^2} \\& \stackrel{\eqref{eq:Aorthonganlity}}{=}  - \lambda_1 \frac{\alpha_1^2}{\| \w \|_\cov^2 }  \stackrel{\eqref{eq:cosine_squared_expansion}}{=} - \lambda_1 \cosasq{\cov}{\w}{\v_1},
\end{align*} 
which proves the second part of the proposition. The first follows directly from property (P.1).
\end{proof}
\textbf{Gradient-suboptimality connection }
Fermat's first-order optimality condition implies that the gradient is zero at the minimizer of $\rho(\w)$. Considering the structure of $\rho(\w)$, we propose a precise connection between the norm of gradient and suboptimality.  
Our analysis relies on the representation of the gradient $\nabla \rq(\w)$ in the $\cov$-orthonormal basis $\{ \v_1,\dots,\v_d \}$ which is described in the next proposition.
\begin{proposition} \label{pro:gradient_expansion}
Using the $\cov$-orthogonal basis as given in Eq.~\eqref{eq:Aorthonganlity}, the gradient vector can be expanded as 
\begin{equation}
\begin{aligned} \label{eq:gradient_expansion}
\| \w\|^{2}_{\cov} \nabla \rq(\w)/2 =&  -\lambda_1 \alpha_1 \sinasq{\cov}{\w}{\v_1} \cov \v_1 \\&+ \lambda_1 \cosasq{\cov}{\w}{\v_1} \cov \V_2 \balpha_2 
\end{aligned} 
\end{equation}

\end{proposition}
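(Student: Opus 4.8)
The plan is to start from the closed-form gradient of $\rq$ already recorded in Eq.~\eqref{eq:LS_gradient_w}, namely $\nabla \rq(\w) = -2\left(\B\w + \rq(\w)\cov\w\right)/\w^\top\cov\w$. Multiplying both sides by $\|\w\|_\cov^2/2 = \w^\top\cov\w/2$ cancels the denominator, so the claim reduces to showing that $-\left(\B\w + \rq(\w)\cov\w\right)$ equals the right-hand side of Eq.~\eqref{eq:gradient_expansion}. No differentiation is therefore needed; everything follows by substituting the spectral identities built up in the preceding subsection.

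Next I would inject the three ingredients derived above. First, Eq.~\eqref{eq:Bw_expansion} gives $\B\w = \alpha_1\lambda_1\cov\v_1$, collapsing the rank-one term entirely onto the $\cov\v_1$ direction. Second, the eigenexpansion $\w = \alpha_1\v_1 + \V_2\balpha_2$ of Eq.~\eqref{eq:w_in_S_basis} yields $\cov\w = \alpha_1\cov\v_1 + \cov\V_2\balpha_2$. Third, Proposition~\ref{pro:function_value_expansion} in the form of Eq.~\eqref{eq:cosine_objective_connection} supplies $\rq(\w) = -\lambda_1\cosasq{\cov}{\w}{\v_1}$, which is what feeds the cosine factor into the expression.

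Substituting these identities and grouping the two contributions that point along $\cov\v_1$, I would obtain
\begin{align*}
-\left(\B\w + \rq(\w)\cov\w\right) &= -\alpha_1\lambda_1\cov\v_1 + \lambda_1\cosasq{\cov}{\w}{\v_1}\left(\alpha_1\cov\v_1 + \cov\V_2\balpha_2\right)\\
&= -\alpha_1\lambda_1\left(1 - \cosasq{\cov}{\w}{\v_1}\right)\cov\v_1 + \lambda_1\cosasq{\cov}{\w}{\v_1}\cov\V_2\balpha_2.
\end{align*}
Applying the Pythagorean identity (P.1), $1 - \cosasq{\cov}{\w}{\v_1} = \sinasq{\cov}{\w}{\v_1}$, converts the coefficient of $\cov\v_1$ into the claimed $-\lambda_1\alpha_1\sinasq{\cov}{\w}{\v_1}$, matching Eq.~\eqref{eq:gradient_expansion} exactly.

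The computation is entirely mechanical, so there is no genuine obstacle; the only point to watch is sign bookkeeping. In particular, $\rq(\w)$ carries a minus sign, so the $\alpha_1\cov\v_1$ contributions coming from $\B\w$ and from $\rq(\w)\cov\w$ partially cancel rather than reinforce one another, and it is precisely this cancellation that produces the $\sin^2$ factor through (P.1) instead of leaving a bare $\cos^2$ term.
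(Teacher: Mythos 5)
Your proof is correct and takes essentially the same route as the paper: start from the closed-form gradient in Eq.~\eqref{eq:LS_gradient_w}, substitute Eq.~\eqref{eq:Bw_expansion}, the expansion~\eqref{eq:w_in_S_basis} and Eq.~\eqref{eq:cosine_objective_connection}, then finish with property (P.1). Your sign bookkeeping is in fact cleaner than the paper's own intermediate display, which contains a sign typo (writing $+\lambda_1\alpha_1\cov\v_1$ for $-\B\w$ where $-\lambda_1\alpha_1\cov\v_1$ is meant) before nevertheless arriving at the same correct final expression.
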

\begin{proof}
The above derivation is based on two results: (i) $\v_1$ is an eigenvector of $(\B,\cov)$ and (ii) the representation of $\rho(\w)$ in Proposition~\ref{pro:function_value_expansion}. We recall the definition of $\nabla  \rho(\w)$ in \eqref{eq:LS_gradient_w} and write
\begin{equation}
\begin{aligned} 
\nabla \rq(\w) \| \w\|^2_{\cov}/2 & = -\rq(\w) \cov \w - \B \w \\ 
& 
\stackrel{\eqref{eq:Bw_expansion}}{=}-\rq(\w) \cov \left( \alpha_1 \v_1 + \V_2 \balpha_2 \right) + \lambda_1 \alpha_1 \cov \v_1 \\ 
& \stackrel{\eqref{eq:cosine_objective_connection}}{=} -(1-  \cosasq{\cov}{\w}{\v_1})\lambda_1 \alpha_1 \cov \v_1 \\+& \lambda_1 \cosasq{\cov}{\w}{\v_1} \cov \V_2 \balpha_2  \\ 
& = -\lambda_1 \alpha_1 \sinasq{\cov}{\w}{\v_1} \cov \v_1 \\+& \lambda_1 \cosasq{\cov}{\w}{\v_1} \cov \V_2 \balpha_2 
\end{aligned}
\end{equation}

\end{proof}
Exploiting the gradient representation of the last proposition, the next proposition establishes the connection between suboptimality and the $\cov^{-1}$-norm of gradient $\nabla_\w \rq(\w)$. 
\begin{proposition} \label{pro:gradient_norm_expansion}
Suppose that $\rho(\w) \neq 0$, then the $\cov^{-1}$-norm of the gradient $ \nabla \rq(\w)$ relates to the suboptimality as 
\begin{align} \label{eq:gradient_norm_expansion}
\| \w \|_\cov^{2} \| \nabla \rq(\w) \|^2_{\cov^{-1}}/(4|\rho(\w)|) =  \rq(\w)  - \rq(\v_1) 
\end{align} 
\end{proposition}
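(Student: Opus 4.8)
The plan is to expand the $\cov^{-1}$-norm of the gradient directly from the basis representation furnished by Proposition~\ref{pro:gradient_expansion}, and then convert everything back into the cosine and sine of the $\cov$-angle via Proposition~\ref{pro:function_value_expansion}. Writing $c := \cosasq{\cov}{\w}{\v_1}$ and $s := \sinasq{\cov}{\w}{\v_1}$ (so $c+s=1$ by property~(P.1)), Proposition~\ref{pro:gradient_expansion} decomposes $\tfrac{1}{2}\|\w\|_\cov^2 \nabla\rq(\w)$ into a component along $\cov\v_1$ (with coefficient $-\lambda_1\alpha_1 s$) and a component along $\cov\V_2\balpha_2$ (with coefficient $\lambda_1 c$).

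The first key observation is that these two components are $\cov^{-1}$-orthogonal: indeed $\langle \cov\v_1,\, \cov\V_2\balpha_2\rangle_{\cov^{-1}} = \v_1^\top\cov\V_2\balpha_2 = 0$, which vanishes precisely because of the $\cov$-orthogonality of the basis in Eq.~\eqref{eq:Aorthonganlity}. Hence I may apply the Pythagorean identity in the $\cov^{-1}$-inner product. The second ingredient is the elementary identity $\|\cov\v\|_{\cov^{-1}}^2 = \v^\top\cov\,\cov^{-1}\cov\,\v = \|\v\|_\cov^2$, which turns the $\cov^{-1}$-norms of $\cov\v_1$ and $\cov\V_2\balpha_2$ back into $\cov$-norms. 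Using $\|\v_1\|_\cov=1$ and $\|\V_2\balpha_2\|_\cov^2 = \balpha_2^\top\V_2^\top\cov\V_2\balpha_2 = \|\balpha_2\|^2$ (again by $\cov$-orthonormality), I obtain $\tfrac{1}{4}\|\w\|_\cov^4\,\|\nabla\rq(\w)\|_{\cov^{-1}}^2 = \lambda_1^2\alpha_1^2 s^2 + \lambda_1^2 c^2\|\balpha_2\|^2$.

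From here the computation collapses after substituting the angle bookkeeping: by Eq.~\eqref{eq:cosine_squared_expansion} one has $\alpha_1^2 = c\|\w\|_\cov^2$, and therefore $\|\balpha_2\|^2 = \|\w\|_\cov^2 - \alpha_1^2 = s\|\w\|_\cov^2$. Plugging these in, both terms share the common factor $\lambda_1^2\|\w\|_\cov^2\, c\,s$ once $s+c=1$ is used, giving $\|\w\|_\cov^2\,\|\nabla\rq(\w)\|_{\cov^{-1}}^2 = 4\lambda_1^2 c\,s$. Finally I invoke Proposition~\ref{pro:function_value_expansion}: since $\rq(\w) = -\lambda_1 c$ with $\lambda_1\ge 0$ and $\rho\le 0$ we have $|\rq(\w)| = \lambda_1 c$, while $\rq(\w)-\rq(\v_1) = \lambda_1 s$. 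Dividing $4\lambda_1^2 c\,s$ by $4|\rq(\w)| = 4\lambda_1 c$ leaves exactly $\lambda_1 s = \rq(\w)-\rq(\v_1)$, which is the claim.

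I expect the main (and essentially only) obstacle to be bookkeeping rather than anything conceptual: one must track carefully which component carries $\sin^2$ versus $\cos^2$ and correctly isolate the common factor $c\,s$ so that the division by $|\rq(\w)|$ cancels exactly one power of $c$. The hypothesis $\rq(\w)\neq 0$ (equivalently $c\neq 0$) is precisely what makes this division legitimate.
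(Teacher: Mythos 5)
Your proposal is correct and follows essentially the same route as the paper's proof: both expand the gradient via Proposition~\ref{pro:gradient_expansion}, exploit the $\cov$-orthogonality of the basis $(\v_1,\V_2)$ to split the squared $\cov^{-1}$-norm into the two component terms, substitute $\alpha_1^2 = \cosasq{\cov}{\w}{\v_1}\,\|\w\|_\cov^2$ and $\|\balpha_2\|^2 = \sinasq{\cov}{\w}{\v_1}\,\|\w\|_\cov^2$, and conclude with Proposition~\ref{pro:function_value_expansion}. Your phrasing via the Pythagorean identity in the $\cov^{-1}$-inner product and the identity $\|\cov\v\|_{\cov^{-1}}^2 = \|\v\|_\cov^2$ is just a cleaner packaging of the same cancellation the paper carries out term by term.
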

\begin{proof}
Multiplying the gradient representation in Proposition~\ref{pro:gradient_expansion} by $\cov^{-1}$ yields  
\begin{align*} 
\cov^{-1} \nabla \rq(\w) \| \w \|^2_{\cov}/2 = &-\lambda_1 \alpha_1 \sinasq{\cov}{\w}{\v_1} \v_1 \\&+ \lambda_1 \cosasq{\cov}{\w}{\v_1} \V_2 \balpha_2.
\end{align*} 
By combining the above result with the $\cov$-orthogonality of the basis $(\v_1,\V_2)$, we derive the (squared) $\cov^{-1}$-norm of the gradient as
\begin{equation}
\begin{aligned} \label{eq:T1_T2_expansion}
 & \nabla  \rq(\w)^\top \cov^{-1} \nabla \rq(\w)/4 \stackrel{\eqref{eq:Aorthonganlity}}{=} T_1  + T_2,\\ 
 & T_1 := \|\w \|_\cov^{-4}   \lambda_1^2 \alpha_1^2 \sinapow{4}{\cov}{\w}{\v_1},\\ &  T_2 := \|\w \|_\cov^{-4}  \lambda_1^2 \cosapow{4}{\cov}{\w}{\v_1}  \| \balpha_2 \|^2.
\end{aligned}
\end{equation}

It remains to simplify the terms $T_1$ and $T_2$. For $T_1$,
\begin{align*} 
\frac{\|\w \|_\cov^{2} }{\lambda_1^2 \sinapow{4}{\cov}{\w}{\v_1}} T_1&=   \|\w \|_\cov^{-2}  \alpha_1^2  \stackrel{\eqref{eq:cosine_squared_expansion}}{=}  \cosasq{\cov}{\w}{\v_1}.
\end{align*} 
Similarly, we simplify $T_2$: 
\begin{align*} 
\frac{ \|\w \|_\cov^2  }{\lambda_1^2 \cosapow{4}{\cov}{\w}{\v_1} } T_2& = \|\w \|_\cov^{-2}    \| \balpha_2 \|^2 \\ & = \|\w \|_\cov^{-2}    \left( \| \w \|_\cov^2 - \alpha_1^2 \right)  \\ 
 & = 1 - \frac{\alpha_1^2}{\|\w \|_{\cov}^2} \\ 
 & \stackrel{\eqref{eq:cosine_squared_expansion}}{=} 1 - \cosasq{\cov}{\w}{\v_1}  \\ 
 & = \sinasq{\cov}{\w}{\v_1}
\end{align*} 

Replacing the simplified expression of $T_1$ and $T_2$ into Eq.~\eqref{eq:T1_T2_expansion} yields 
\begin{align*}
   \| \w \|_\cov^{2} \| \nabla_\w \rq(\w) \|^2_{\cov^{-1}}/4  =&  \lambda_1^2 \cosasq{\cov}{\w}{\v_1} \sinasq{\cov}{\w}{\v_1} \left( \sinasq{\cov}{\w}{\v_1} +  \cosasq{\cov}{\w}{\v_1}\right) \\
     =&  \lambda_1^2 \cosasq{\cov}{\w}{\v_1} \sinasq{\cov}{\w}{\v_1}, \\ 
     \stackrel{\eqref{eq:cosine_objective_connection}}{=} &| \rho(\w) | \lambda_1 \| \sinasq{\cov}{\w}{\v_1} \\ 
     \stackrel{\eqref{eq:sine_suboptimality_connection}}{=}& |\rho(\w)| \left( \rho(\w) - \rho(\v_1)\right). 
    \end{align*}
    A rearrangement of terms in the above equation concludes the proof.  
\end{proof}
\subsection{Convergence proof}\label{sec:proof_of_linear_rate_on_quad}
We have seen: suboptimality in $\rq(\w)$ directly relates to $\sinasq{\cov}{\w}{\v_1}$ for all $\w\setminus \{\mathbf{0}\}$. In the next lemma we prove that this quantity is strictly decreased by repeated {\sc Gd} updates at a linear rate.  
\begin{lemma} \label{lem:convergence_metric_decay}
Suppose that Assumption~\ref{as:weak_distribution_assumption} holds and consider {\sc Gd} ({\sc Gd}) steps on \eqref{eq:LS_rayleigh_formulation} with stepsize $\eta_t = \frac{\| \w_t \|_\cov^2 }{2 L|\rho(\w_t)|}$. Then, for any $\w_0\in \mathbb{R}^d$ such that $\rho(\w_0)<0$, the updates of Eq.~\eqref{eq:gd_least_squares_app} yield the following linear convergence rate

\begin{align} \label{eq:convergence_metric_decay}
\sinasq{\cov}{\w_t}{\v_1} \leq \left(1- \frac{\mu}{L}\right)^{2t} \sinasq{\cov}{\w_{0}}{\v_1}
\end{align} 
\end{lemma}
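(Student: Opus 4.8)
The plan is to reduce the Gradient Descent step to a pure linear contraction after an appropriate shift, and then read off the decay of $\sinasq{\cov}{\w_t}{\v_1}$ from an elementary ``angle subtended by a ball'' estimate. First I would substitute the stepsize \eqref{eq:stepsize_LS} and the closed form of $\nabla\rq$ from \eqref{eq:LS_gradient_w} into the update \eqref{eq:gd_least_squares_app}. Using that $\B=\u\u^\top$ is rank one, so that $\B\w_t=\alpha_1\lambda_1\cov\v_1$ by \eqref{eq:Bw_expansion}, together with $|\rq(\w_t)|=\lambda_1\cosasq{\cov}{\w_t}{\v_1}$ from Proposition~\ref{pro:function_value_expansion}, the $1/\|\w_t\|_\cov^2$ factors cancel and the update collapses to the clean affine form $\w_{t+1}=(\Im-\cov/L)\w_t+\big(\alpha_1/(L\cosasq{\cov}{\w_t}{\v_1})\big)\cov\v_1$, where $\alpha_1:=\langle\w_t,\v_1\rangle_\cov$. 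This manipulation uses $\rq(\w_t)<0$, which I would carry as an invariant: since $\rq(\w_0)<0$ forces $\cosasq{\cov}{\w_0}{\v_1}>0$, the contraction I prove keeps $\sinasq{\cov}{\w_t}{\v_1}<1$, hence $\rq(\w_t)<0$, for all $t$.

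Next I would pass to the Euclidean coordinates $\y:=\cov^{1/2}\w$, in which $\angle_\cov(\w,\v_1)$ equals the ordinary angle between $\y$ and the unit vector $\cov^{1/2}\v_1$ (unit by \eqref{eq:Aorthonganlity}), and in which $M:=\Im-\cov/L$ has spectrum in $[0,1-\mu/L]$, so $\|M\|_2=1-\mu/L$. Setting $\sigma_t:=\|\y_t\|^2/\langle\y_t,\cov^{1/2}\v_1\rangle$ (taking the orientation of $\v_1$ so this is positive, which does not affect $\sinasq{\cov}{\w_t}{\v_1}$), a one-line computation rewrites the transformed update as the shift identity $\y_{t+1}-\sigma_t\cov^{1/2}\v_1=M\big(\y_t-\sigma_t\cov^{1/2}\v_1\big)$. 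The choice of $\sigma_t$ is engineered so that $\y_t-\sigma_t\cov^{1/2}\v_1$ is orthogonal to $\y_t$; it is therefore the leg opposite the angle in the right triangle with hypotenuse $\sigma_t\cov^{1/2}\v_1$, and so has length exactly $\sigma_t\,\sina{\cov}{\w_t}{\v_1}$.

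From here the conclusion is geometric. The vector $\y_{t+1}$ is the fixed vector $\sigma_t\cov^{1/2}\v_1$ (of length $\sigma_t$, pointing along the target direction) plus the perturbation $M\big(\y_t-\sigma_t\cov^{1/2}\v_1\big)$, whose norm is at most $(1-\mu/L)\,\sigma_t\,\sina{\cov}{\w_t}{\v_1}$. Since the perturbation radius divided by $\sigma_t$ equals $(1-\mu/L)\,\sina{\cov}{\w_t}{\v_1}<1$, every point of that ball subtends an angle with $\cov^{1/2}\v_1$ of at most $\arcsin\!\big((1-\mu/L)\,\sina{\cov}{\w_t}{\v_1}\big)$. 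Hence $\sina{\cov}{\w_{t+1}}{\v_1}\le(1-\mu/L)\,\sina{\cov}{\w_t}{\v_1}$, and squaring and iterating yields \eqref{eq:convergence_metric_decay}.

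The main obstacle is that $\v_1$ is not an eigenvector of $\cov$, so the directional step genuinely mixes the $\v_1$-parallel and $\v_1$-orthogonal components of the iterate; attempting to bound the numerator and denominator of $\sinasq{\cov}{\w_{t+1}}{\v_1}$ separately (e.g. $\|M\cdot\|\le(1-\mu/L)\|\cdot\|$ for the numerator and Cauchy--Schwarz for the denominator) is too lossy and in fact fails near convergence, where the cross term in the denominator dominates. The shift identity is precisely what circumvents this coupling, trading a delicate coupled estimate for a single contraction of $M$ plus the ball-angle bound. If a fully algebraic argument is preferred for the last step, one can instead split $M\big(\y_t-\sigma_t\cov^{1/2}\v_1\big)$ into parallel and orthogonal parts $a,N$ obeying the \emph{joint} constraint $a^2+N^2\le(1-\mu/L)^2\sigma_t^2\sinasq{\cov}{\w_t}{\v_1}$ and maximize $N^2/\big((a+\sigma_t)^2+N^2\big)$ over that disk; the optimum (attained at $a=-N\,$-balanced interior point) is exactly $(1-\mu/L)^2\sinasq{\cov}{\w_t}{\v_1}$, giving the same rate.
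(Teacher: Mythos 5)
Your proof is correct and is essentially the paper's own argument rewritten in whitened coordinates $\y=\cov^{1/2}\w$: since $\cov^{-1}\B\w_t=-\rho(\w_t)\,\sigma_t\v_1$, your shift identity $\y_{t+1}-\sigma_t\cov^{1/2}\v_1=\left(\Im-\cov/L\right)\left(\y_t-\sigma_t\cov^{1/2}\v_1\right)$ is exactly the paper's key identity \eqref{eq:myequation} divided by $\rho(\w_t)$, your orthogonality of $\y_t-\sigma_t\cov^{1/2}\v_1$ to $\y_t$ is the paper's projection step (i), and your ball-angle (distance-to-line) estimate is equivalent to the paper's Pythagoras step (ii). Your induction maintaining $\rho(\w_t)<0$ likewise matches the paper's step (v), so the differences are purely presentational.
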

\begin{proof}
To prove the above statement, we relate the sine of the angle of a given iterate $\w_{t+1}$ with $\v_1$ in terms of the previous angle $\angle(\w_t,\v_1)$. Towards this end, we assume for the moment that $\rho(\w_t)\not=0$ but note that this naturally always holds whenever $\rho(\w_0)\not=0$,\footnote{as we will show later by induction.} such that the angle relation can be recursively applied through all $t\geq 0$ to yield Eq.~\eqref{eq:convergence_metric_decay}.

(i) We start by deriving an expression for $\sin \angle_\cov(\v_1,\w_t)$. By \eqref{eq:A-orthogonal} and the definition $\rho(\w_t)$, we have that $-\rho(\w_t)\w_t$ is the $\cov$-orthogonal projection of $\cov^{-1}\B\w_t$ to $\text{span}\{\w_t\}$. Indeed,
\begin{align*}
&\langle\cov^{-1}\B\w_t+\rho(\w_t)\w_t,\w_t\rangle_\cov \\=&\w_t^\top\B \cov^{-1}\cov \w_t - \frac{\w_t^\top \B \w_t}{\w_t^\top \cov \w_t}\w_t^\top \cov \w_t=0. 
\end{align*}

Note that $\cov^{-1}\B\w_t=\left(\cov^{-1}\u \right)\left(\u^\top\w_t\right)$ is a nonzero multiple of $\v_1$ and thus  $\sin\angle_\cov(\cov^{-1} \B \w_t , \w_t) =\sin \angle_\cov(\v_1,\w_t)$.

Thus, by (P.2) we have
\begin{equation}\label{eq:angle}
\begin{aligned}
   \sin \angle_\cov(\v_1,\w_t) &= \sin \angle_\cov(\cov^{-1}\B\w_t,\w_t)\\&=\frac{\|\cov^{-1}\B\w_t+\rho(\w_t)\w_t \|_\cov}{\|\cov^{-1}\B\w_t \|_\cov}.
   \end{aligned}
\end{equation}

(ii) We now derive an expression for $\sin \angle_\cov(\v_1,\w_{t+1})$. Let $a_{t+1}\in\mathbb{R}$ such that $a_{t+1}\w_{t+1} \in\mathbb{R}^d$ is the $\cov$-orthogonal projection of $\cov^{-1}\B\w_t$ to $\text{span}\{\w_{t+1}\}$. Then
\begin{equation}\label{eq:sin_w_t_cross_terms_zero}
\begin{aligned}
    &\langle \cov^{-1}\B\w_t-a_{t+1}\w_{t+1},\w_{t+1}\rangle_\cov = 0\\
    \Rightarrow\;& \langle \cov^{-1}\B\w_t-a_{t+1}\w_{t+1},(a_{t+1}+\rho(\w_t))\w_{t+1}\rangle_\cov=0.
\end{aligned}
\end{equation}

By the Pythagorean theorem and \eqref{eq:sin_w_t_cross_terms_zero}, we get

\begin{equation}
\begin{aligned}\label{eq:pythago}
\|\cov^{-1}\B\w_t+\rho(\w_t)\w_{t+1} \|_\cov^2&=\| \cov^{-1}\B\w_t-a_{t+1}\w_{t+1}\|_\cov^2\\&+ \|(a_{t+1}+\rho(\w_t))\w_{t+1} \|_\cov^2\\& \geq \| \cov^{-1}\B\w_t-a_{t+1}\w_{t+1}\|_\cov^2.
\end{aligned}
\end{equation}

Hence, again by (P.2)
\begin{equation}\label{eq:angle_plus}
\begin{aligned}
       \sin \angle_\cov(\v_1,\w_{t+1})&=  \sin \angle_\cov(\cov^{-1}\B\w_t,\w_{t+1})\\& =\frac{\|\cov^{-1}\B\w_t-a_{t+1}\w_{t+1} \|_\cov}{\|\cov^{-1}\B\w_t \|_\cov} \\&\stackrel{\eqref{eq:pythago}}{\leq}  
 \frac{\|\cov^{-1}\B\w_t+\rho(\w_t)\w_{t+1} \|_\cov}{\|\cov^{-1}\B\w_t \|_\cov}.
\end{aligned}
\end{equation}

(iii) To see how the two quantities on the right hand side of \eqref{eq:angle} and \eqref{eq:angle_plus} relate, let us rewrite the {\sc Gd} updates from Eq.~\eqref{eq:gd_least_squares_app} as follows
\begin{equation}\label{eq:myequation}
\begin{aligned}
\rho(\w_t) \w_{t+1}&=\rho(\w_t) \w_{t} - \frac{\cov}{L}\left(\cov^{-1}\B\w_t+\rho(\w_t)\w_t\right)\\
\Leftrightarrow  \cov^{-1}\B\w_t + \rho(\w_t)\w_{t+1}&=\cov^{-1}\B\w_t+\rho(\w_t)\w_{t}-  \frac{\cov}{L}\left(\cov^{-1}\B\w_t+\rho(\w_t)\w_t\right)\\
\Leftrightarrow \cov^{-1}\B\w_t + \rho(\w_t)\w_{t+1} &= \left(\mathbf{I}-\frac{\cov}{L}\right)\left(\cov^{-1}\B\w_t+\rho(\w_t)\w_t \right).
\end{aligned}
\end{equation}

By taking the $\cov$-norm we can conclude

\begin{equation}\label{eq:gamma_bound}
\begin{aligned}
&\|\cov^{-1}\B\w_t+\rho(\w_t)\w_{t+1} \|_\cov \\  \stackrel{\eqref{eq:myequation}}{\leq}& \|\mathbf{I}-\frac{\cov}{L}\|_\cov \cdot \| \cov^{-1}\B\w_t+\rho(\w_t)\w_t \|_\cov\\ \leq & \left(1- \frac{\mu}{L}\right) \| \cov^{-1}\B\w_t+\rho(\w_t)\w_t \|_\cov,
\end{aligned}
\end{equation}

where the first inequality is due to property (P.4) of the $\cov$-spectral norm and the second is due to Assumption \eqref{as:weak_distribution_assumption} and (P.3) , which allows us to bound the latter in term of the usual spectral norm as follows
\begin{equation}
\begin{aligned} \label{eq:spectral_bound_result}
 &\| \Im - \cov/L \|_\cov  \stackrel{(\text{P.3})}{=} \| \cov^{1/2} \left( \Im - \cov/L \right) \cov^{-1/2} \|_2 \\ =& \| \Im - \cov/L \|_2 \stackrel{\eqref{eq:covariance_spectral_bound}}{\leq} 1 - \mu/L.
\end{aligned} 
\end{equation}

(iv) Combining the above results yields the desired bound

\begin{equation}\label{eq:sin_relation}
    \begin{aligned}
         \sin \angle_\cov(\v_1,\w_{t+1})& \stackrel{\eqref{eq:angle_plus}}{\leq} \frac{\|\cov^{-1}\B\w_t+\rho(\w_t)\w_{t+1} \|_\cov}{\|\cov^{-1}\B\w_t \|_\cov}\\
         & \stackrel{\eqref{eq:gamma_bound}}{\leq} \left(1- \frac{\mu}{L}\right) \frac{\| \cov^{-1}\B\w_t+\rho(\w_t)\w_t \|_\cov}{\|\cov^{-1}\B\w_t \|_\cov} \\&\stackrel{\eqref{eq:angle} }{=} \left(1- \frac{\mu}{L}\right) \sin \angle_\cov(\v_1,\w_t).
    \end{aligned}
\end{equation}

(v) Finally, we show that the initially made assumption $\rho(\w_{t})\not=0$ is naturally satisfied in all iterations. First, $\rho(\w_0)<0$ by assumption. Second, assuming $\rho(\w_{\hat{t}})<0$ for an arbitrary $\hat{t}\in \mathbb{N}^+$ gives that the above analysis (i-iv) holds for $\hat{t}+1$ and thus \eqref{eq:sin_relation} together with \eqref{eq:sine_suboptimality_connection} and the fact that $\lambda_1>0$ give
\begin{equation*}
\begin{aligned}
\rho(\w_{\hat{t}+1})&= - \lambda_1(1-\sin^2(\v_1,\w_{\hat{t}+1}))\\& < - \lambda_1(1-\sin^2(\v_1,\w_{\hat{t}})) =\rho(\w_{\hat{t}})<0,
\end{aligned}
\end{equation*}
where the last inequality is our induction hypothesis. Thus $\rho(\w_{\hat{t}+1})<0$ and we can conclude by induction that $\rho(\w_t)<0, \forall t \in \mathbb{N}^+$.

As a result, \eqref{eq:sin_relation} holds $\forall t \in \mathbb{N}^{+}$, which (applied recursively) proves the statement \eqref{eq:convergence_metric_decay}.
\end{proof}

We are now ready to prove Theorem \ref{thm:least_squares_convergence}.

\textit{Proof of Theorem \eqref{thm:least_squares_convergence}:} $\;$ By combining the results of Lemma \ref{lem:convergence_metric_decay} as well as Proposition \ref{pro:function_value_expansion} and \ref{pro:gradient_norm_expansion}, we can complete the proof of the Theorem \ref{thm:least_squares_convergence} as follows
\begin{equation*}
\begin{aligned}
&    \| \w_t \|_\cov^{2} \| \nabla_\w \rq(\w_t) \|^2_{\cov^{-1}}/|4 \rho(\w_t)| & \\ \stackrel{\eqref{eq:gradient_norm_expansion}}{=}&  \left(  \rq(\w_t) -   \rq(\w^*) \right) \\ 
     \stackrel{\eqref{eq:sine_suboptimality_connection}}{=}& \lambda_1 \sinasq{\cov}{\w_t}{\v_1} \\ 
     \stackrel{\eqref{eq:convergence_metric_decay}}{\leq}&  \left(1- \frac{\mu}{L}\right)^{2t} \lambda_1 \sinasq{\cov}{\w_{0}}{\v_1} \\ 
     \stackrel{\eqref{eq:sine_suboptimality_connection}}{=}& \left(1- \frac{\mu}{L}\right)^{2t} \left( \rq(\w_0) - \rq(\w^*) \right).
\end{aligned}
\end{equation*}

\section{LEARNING HALFSPACES ANALYSIS} \label{se:learnin_lh_app}

In this section, we provide a convergence analysis for Algorithm \ref{alg:gdnp} on the problem of learning halfspaces
\begin{align*} 
\min_{\Tilde{\w}\in\mathbb{R}^d} \left( \flh(\Tilde{\w}) := \E_{y,\x} \left[ \varphi(-y \x^\top \Tilde{\w})\right] = \E_{\z} \left[ \varphi(\z^\top \Tilde{\w}) \right] \right). \tag{\ref{eq:halfspace_problem} revisited}
\end{align*} 
As before, we reparametrize $\Tilde{\w}$ by means of the covariance matrix $\cov$ as 

\begin{align*}\tag{\ref{eq:reparametrization} revisited}
    \Tilde{\w} := g \w/\| \w \|_\cov.
\end{align*}

We assume that the domain of $\flh(\Tilde{\w})$ is $\R^d$ but exclude $\zero$ such that \eqref{eq:reparametrization} is always well defined. Thus, the domain of the new parameterization is $(\w,g) \in \left( \R^d \setminus \{\zero \} \right) \otimes \R \subset \R^{d+1}$.
\subsection{Preliminaries}
Recall the normality assumption on the data distribution.
\strongassumption* 
Under the above assumption we have that for a differentiable function $g(\z): \mathbb{R}^d \to \mathbb{R}$ the following equality holds 
\begin{align} \label{eq:stein_lemma}
 \E_{\z} \left[ g(\z) \z \right] = \E_{\z} \left[ g(\z) \right] \u +  (\cov - \u \u^\top)  \E_\z \left[ \nabla_{\z} g(\z) \right].
\end{align}
This result, which can be derived using a simple application of integration by parts, is called \textit{Stein's lemma} \citep{landsman2008stein}. In the next lemma, we show that this allows us to simplify the expression of the gradient of Eq.~\ref{eq:halfspace_problem}.
\begin{lemma}[restated result from \citep{erdogdu2016scaled}] \label{lem:gradient_expression_lh}
Under the normality assumption on the data distribution~(Assumption \ref{as:strong_distribution_assumption}), the gradient of $f_{LH}$ (Eq.~\ref{eq:halfspace_problem}) can be expressed as 
\begin{align}\label{eq:lh_gradient_after_stein}
    \nabla_{\Tilde{\w}} \flh(\Tilde{\w}) = c_1(\Tilde{\w}) \u + c_2(\Tilde{\w}) \cov \Tilde{\w},  
\end{align} 
where $c_i \in \mathbb{R}$ depends on the $i$-th derivative of the loss function denoted by $\varphi^{(i)}$ as
\begin{align*} 
c_1(\Tilde{\w}) & = \E_\z \left[ \varphi^{(1)}(\z^\top \Tilde{\w})\right] - \E_\z \left[ \varphi^{(2)}(\z^\top \Tilde{\w})\right] (\u^\top \Tilde{\w}), \\
c_2(\Tilde{\w})  
& =  \E_\z \left[ \varphi^{(2)}(\z^\top \Tilde{\w})\right]
\end{align*} 
\end{lemma}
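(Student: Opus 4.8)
The plan is to obtain the gradient by differentiating under the expectation and then to invoke Stein's lemma (Eq.~\eqref{eq:stein_lemma}) to resolve the resulting vector-valued expectation into the two announced terms. First I would write the gradient of the integrand via the chain rule, using that $\nabla_{\Tilde{\w}}(\z^\top \Tilde{\w}) = \z$, to obtain
\[
\nabla_{\Tilde{\w}} \flh(\Tilde{\w}) = \E_\z\left[ \varphi^{(1)}(\z^\top \Tilde{\w})\, \z \right].
\]
Exchanging $\nabla_{\Tilde{\w}}$ with $\E_\z$ is the one analytic point that requires care; it is justified by the smoothness of $\varphi$ together with the bounded-derivative assumption (Assumption~\ref{as:reg_assumptions}) and the Gaussian tails of $\z$, which together supply the dominated-convergence hypotheses needed for differentiation under the integral sign.

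Next I would apply Stein's lemma to the scalar function $g(\z) := \varphi^{(1)}(\z^\top \Tilde{\w})$. The only computation it demands is the $\z$-gradient of $g$, which by the chain rule is $\nabla_\z g(\z) = \varphi^{(2)}(\z^\top \Tilde{\w})\, \Tilde{\w}$. Substituting $g$ and $\nabla_\z g$ into Eq.~\eqref{eq:stein_lemma} gives
\[
\E_\z\left[ \varphi^{(1)}(\z^\top \Tilde{\w})\, \z \right] = \E_\z\left[ \varphi^{(1)}(\z^\top \Tilde{\w}) \right] \u + (\cov - \u\u^\top)\, \E_\z\left[ \varphi^{(2)}(\z^\top \Tilde{\w}) \right] \Tilde{\w}.
\]

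Finally I would collect terms along the two directions $\u$ and $\cov\Tilde{\w}$. Writing $\u\u^\top \Tilde{\w} = (\u^\top \Tilde{\w})\,\u$, the middle term expands as $\E_\z[\varphi^{(2)}(\z^\top\Tilde{\w})]\,\cov\Tilde{\w} - \E_\z[\varphi^{(2)}(\z^\top\Tilde{\w})]\,(\u^\top \Tilde{\w})\,\u$, so that the coefficient of $\u$ becomes $\E_\z[\varphi^{(1)}(\z^\top\Tilde{\w})] - \E_\z[\varphi^{(2)}(\z^\top\Tilde{\w})]\,(\u^\top \Tilde{\w})$ and the coefficient of $\cov\Tilde{\w}$ becomes $\E_\z[\varphi^{(2)}(\z^\top\Tilde{\w})]$; these are precisely $c_1(\Tilde{\w})$ and $c_2(\Tilde{\w})$. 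Since the derivation is a direct substitution followed by an elementary rearrangement, the main (indeed the only) obstacle is the rigorous justification of differentiating under the expectation and of the integration-by-parts step underlying Stein's lemma; both are controlled by the regularity of $\varphi$ and the Gaussianity of $\z$ (Assumption~\ref{as:strong_distribution_assumption}), which guarantee that all the expectations appearing above are finite and the interchange is valid.
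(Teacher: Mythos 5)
Your proposal is correct and follows exactly the paper's own argument: differentiate under the expectation to get $\nabla_{\Tilde{\w}} \flh(\Tilde{\w}) = \E_\z[\varphi^{(1)}(\z^\top \Tilde{\w})\,\z]$, apply Stein's lemma (Eq.~\eqref{eq:stein_lemma}) with $g(\z)=\varphi^{(1)}(\z^\top\Tilde{\w})$, and rearrange $(\cov-\u\u^\top)\E_\z[\varphi^{(2)}(\z^\top\Tilde{\w})]\Tilde{\w}$ to collect the coefficients $c_1$ and $c_2$. You in fact spell out the rearrangement and the dominated-convergence justification that the paper leaves implicit, so nothing is missing.
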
 
\begin{proof}
The gradient of $f_{LH}$ can be written as follows
\begin{equation}
    \nabla f_{LH}= \E \left[\varphi^{(1)}(\z^\top\Tilde{\w}) \z \right].
\end{equation}
A straight forward application of Stein's lemma (Eq. \eqref{eq:stein_lemma}) yields 
\begin{equation}
    \nabla f_{LH}= \E\left[\varphi^{(1)}(\z^\top\Tilde{\w})\right]\u  + (\cov-\u\u^\top) \E\left[\varphi^{(2)}(\z^\top\Tilde{\w}) \right] \Tilde{\w},
\end{equation}
which --after rearrangement -- proves the result. See detailed derivation in~ \citep{erdogdu2016scaled}.
\end{proof}

In addition to the assumption on the data distribution, the proposed analysis also requires a rather weak assumption on $\flh$ and loss function $\varphi$.
\regassumption*
\smoothassumption*
Recall that $\zeta$-smoothness of $\flh$, which is mentioned in the last assumption,  implies that the gradient of $\flh$ is $\zeta$-Lipschitz, i.e. 
\begin{align} \label{eq:smoothness_assumption}
 \| \nabla \flh(\Tilde{\w}_1) - \nabla  \flh(\Tilde{\w}_2) \| \leq \zeta \| \Tilde{\w}_1 - \Tilde{\w}_2 \|. 
\end{align}

\subsection{Global characterization}
Here, we prove a result about a global property of the solution of the problem of learning halfspaces. 
\lhcharaterization* 
\begin{proof}
Setting the gradient of the objective $\flh$ as given in Eq. \eqref{eq:lh_gradient_after_stein} to zero directly gives the result. 
\end{proof}

\subsection{Established Convergence Rate}

Based on this assumption, we derive a linear convergence rate for {\sc Gdnp} presented in Algorithm~\ref{alg:gdnp}. We first restate the convergence guarantee before providing a detailed proof.
\begin{framed}
\learninghalfspacesconvergence
\end{framed}
\subsubsection{Proof sketch}
As mentioned earlier, the objective $\flh$ on Gaussian inputs has a particular \emph{global} property. Namely, all its critical points are aligned along the same direction. The key idea is that $\cov$-reparameterization provides this global information to a local optimization method through an elegant length-direction decoupling. This allows {\sc Gdnp} to mimic the behaviour of Gradient Descent on the above mentioned Rayleigh quotient for the directional updates and thereby inherit the linear convergence rate. At the same time, the scaling factor can easily be brought to a critical point by a fast, one dimensional search algorithm. We formalize and combine these intuitions in a detailed proof below.
\subsubsection{Gradient in the normalized parameterization} 
Since {\sc Gdnp} relies on the normalized parameterization, we first need to derive the gradient of the objective in this parameterization
\begin{align} \label{eq:learning_halfspace_normalized}  
\min_{\w,g} \left(\flh(\w,g) := \E_\z \left[ \varphi\left(g \frac{ \z^\top \w}{\| \w \|_\cov} \right) \right]\right).
\end{align} 

Straight forward calculations yield the following connection between the gradient formulation in the original parameterization $\nabla_{\Tilde{\w}} \flh =\E_\z[ \varphi^{(1)}(\Tilde{\w}^\top \z)\z]$ and the gradient in the normalized parameterization 
\begin{equation}
    \begin{aligned} \label{eq:normalized_gradient}
\nabla_\w \flh(\w,g) = g \A_{\w} \nabla_{\Tilde{\w}} \flh(\Tilde{\w}),\\ \partial_g \flh(\w,g) = \w^\top \nabla_{\Tilde{\w}} \flh(\Tilde{\w})/\| \w \|_\cov 
\end{aligned} 
\end{equation}

where 
\begin{equation}\label{eq:A_in_nullspace_of_Sw}
    \A_\w := \Im/\| \w \|_\cov  -  \cov \w \w^\top/\|\w\|_\cov^{3}.
\end{equation}
Note that the vector $\cov \w$ is orthogonal to the column space  of $\A_\w$ since 
\begin{align} \label{eq:cov_w_A_orthogonality}
\A_\w \cov \w = \left( \cov \w - \frac{\| \w \|_\cov^2 }{\| \w \|^2_\cov }\cov \w \right)/\| \w \|_{\cov} = 0 .
\end{align}
We will repeatedly use the above property in our future analysis. 
In the next lemma, we establish a connection between the norm of gradients in different parameterizations. 
\begin{proposition} \label{prop:gradient_norm_in_different_parameterization}
Under the reparameterization~\eqref{eq:reparametrization}, the following holds:
\begin{equation}
\begin{aligned} \label{eq:gradient_norm_in_different_parameterization} 
    \| \nabla_{\Tilde{\w}} f(\Tilde{\w}) \|^2_{\cov^{-1}} =&  \| \w \|_\cov^2 \| \nabla_\w f(\w,g)\|^2_{\cov^{-1}}/g^2 \\&+ \left(\partial_g f(\w,g)\right)^2
\end{aligned}
\end{equation}

\end{proposition}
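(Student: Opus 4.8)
The plan is to verify the identity by substituting the two expressions from Eq.~\eqref{eq:normalized_gradient} into the right-hand side of \eqref{eq:gradient_norm_in_different_parameterization} and collapsing the resulting quadratic forms. Writing $s := \| \w \|_\cov$, the directional formula $\nabla_\w f(\w,g) = g \A_\w \nabla_{\Tilde{\w}} f$ turns the first summand into
\begin{equation*}
\frac{s^2}{g^2}\, \| \nabla_\w f(\w,g)\|^2_{\cov^{-1}} = s^2\, \left(\nabla_{\Tilde{\w}} f\right)^\top \A_\w^\top \cov^{-1} \A_\w \left(\nabla_{\Tilde{\w}} f\right),
\end{equation*}
so that the whole statement reduces to the single matrix identity
\begin{equation*}
s^2\, \A_\w^\top \cov^{-1} \A_\w = \cov^{-1} - \w \w^\top/s^2 .
\end{equation*}
Granting this, the first summand equals $\| \nabla_{\Tilde{\w}} f \|^2_{\cov^{-1}} - (\w^\top \nabla_{\Tilde{\w}} f)^2/s^2$, while the second summand satisfies $(\partial_g f)^2 = (\w^\top \nabla_{\Tilde{\w}} f)^2/s^2$ by the scaling derivative in \eqref{eq:normalized_gradient}; the correction term then cancels exactly and the right-hand side collapses to the left-hand side.

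Next I would carry out the matrix multiplication using the definition of $\A_\w$ in \eqref{eq:A_in_nullspace_of_Sw}. The only subtlety is that $\A_\w = \Im/s - \cov\w\w^\top/s^3$ is \emph{not} symmetric, so its transpose is $\Im/s - \w\w^\top\cov/s^3$ and the two factors must be kept in the correct order. Expanding $\A_\w^\top \cov^{-1} \A_\w$ yields four terms: the leading term $\cov^{-1}/s^2$, two mixed rank-one terms that simplify through $\cov^{-1}\cov\w\w^\top = \w\w^\top$, and a quartic term that collapses using $\w^\top\cov\w = s^2$. After multiplying through by $s^2$, the three rank-one contributions combine into a single $-\w\w^\top/s^2$, giving the claimed identity. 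This tracking of the cross terms is the only place where care is needed, and it is the step I expect to be the main (though modest) obstacle; everything else is mechanical.

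Finally, I would record the conceptual reason behind the identity, which also furnishes a shorter route. Observe that $s\A_\w = \Im - \cov\w\w^\top/s^2$ is idempotent and $\cov^{-1}$-self-adjoint, hence it is the $\cov^{-1}$-orthogonal projector onto the $\cov^{-1}$-orthogonal complement of $\cov\w$ (it annihilates $\cov\w$ by \eqref{eq:cov_w_A_orthogonality}). For any such projector one has $P^\top \cov^{-1} P = \cov^{-1} P$, which delivers the matrix identity immediately. Geometrically, the statement is then just a Pythagorean decomposition of $\nabla_{\Tilde{\w}} f$ in the $\cov^{-1}$ inner product: its projection onto $\mathrm{span}\{\cov\w\}$ is $(\w^\top \nabla_{\Tilde{\w}} f/s^2)\,\cov\w$, whose squared $\cov^{-1}$-norm is precisely $(\partial_g f)^2$, while the rescaled directional gradient $\tfrac{s}{g}\nabla_\w f = s\,\A_\w \nabla_{\Tilde{\w}} f$ is exactly the complementary component orthogonal to $\cov\w$. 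Their squared $\cov^{-1}$-norms sum to $\| \nabla_{\Tilde{\w}} f \|^2_{\cov^{-1}}$, so no genuine difficulty arises once the gradient formulas \eqref{eq:normalized_gradient} are in hand.
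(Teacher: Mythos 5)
Your proof is correct, but it takes a mechanically different route from the paper's. The paper never expands $\A_\w^\top \cov^{-1} \A_\w$; instead it builds a $\cov^{-1}$-orthogonal basis $\{\q_1,\dots,\q_d\}$ with $\q_1 = \cov\w/\|\w\|_\cov$, expands $\nabla_{\Tilde{\w}} f = \alpha_1 \q_1 + \Qm_2\balpha_2$, and reads off from Eq.~\eqref{eq:normalized_gradient} that $\partial_g f = \alpha_1$ and $\nabla_\w f = g\,\Qm_2\balpha_2/\|\w\|_\cov$ (using $\A_\w \q_1 = 0$ and $\A_\w \Qm_2 = \Qm_2/\|\w\|_\cov$), so the identity is just $\alpha_1^2 + \|\balpha_2\|^2$ summed term by term. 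You instead reduce the claim to the single matrix identity $\|\w\|_\cov^2\, \A_\w^\top \cov^{-1} \A_\w = \cov^{-1} - \w\w^\top/\|\w\|_\cov^2$ and verify it by expanding the four terms — I checked the expansion and the cross-term bookkeeping ($-2\w\w^\top/\|\w\|_\cov^4 + \w\w^\top/\|\w\|_\cov^4$) is right, as are the idempotence and $\cov^{-1}$-self-adjointness claims for $\|\w\|_\cov \A_\w$ in your shorter projector argument. What your route buys is a fully self-contained computation: no appeal to extending a vector to a $\cov^{-1}$-orthogonal basis, and a reusable closed form for $\A_\w^\top\cov^{-1}\A_\w$. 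What the paper's route buys is that the basis expansion is itself reused downstream (the surrounding text invokes the coordinates $\alpha_1, \balpha_2$ to argue that convergence in $(\w,g)$ implies convergence of $\Tilde{\w}_t$ to a critical point), and it makes the geometric content immediate without any matrix algebra. Your closing projector paragraph is essentially the coordinate-free restatement of the paper's basis argument, so the two proofs meet in the same Pythagorean decomposition of $\nabla_{\Tilde{\w}} f$ relative to $\mathrm{span}\{\cov\w\}$ in the $\cov^{-1}$ inner product.
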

\begin{proof}
We introduce the vector $\q_1 = \cov \w/\| \w \|_\cov $  that has unit $\cov^{-1}$-norm, i.e.  $\|\q_1\|_{S^{-1}}=1$. 

According to Eq.~\eqref{eq:cov_w_A_orthogonality},
\begin{align*} 
\A_\w \q_1 = 0 
\end{align*}
holds. Now, we extend this vector to an $\cov^{-1}$-orthogonal basis $\{\q_1,\q_2, \dots, \q_d \}$ of $\R^d$ such that 
\begin{align*}
   \langle \q_i, \q_i \rangle_{\cov^{-1}} =1,\; \forall i \text{ and }  \langle \q_i, \q_j \rangle_{\cov^{-1}}  = 0,\; \forall i\neq j.
\end{align*}
 Let $\Qm_2$ be a matrix whose columns are $\{\q_2, \dots, \q_d \}$. The choice of $\q_1$ together with $\cov ^{-1}$-orthogonality of the basis imply that $\w$ is orthogonal to $\Qm_2$: 
\begin{align*}
    \w^\top \q_j = \|\w \|_{\cov}  \langle \q_1, \q_j \rangle_{\cov^{-1}}  = 0, \forall j \neq 1
\end{align*}
Consider the gradient expansion in the new basis, i.e.   
\begin{align*} 
\nabla_{\Tilde{\w}} f(\Tilde{\w}) = \alpha_1 \q_1 + \Qm_2 \balpha_2, \;\; \| \nabla_{\Tilde{\w}} f(\Tilde{\w}) \|^2_{\cov^{-1}} = \alpha_1^2 + \| \balpha_2 \|_2^2 
\end{align*} 
Plugging the above expansion into Eq.~\eqref{eq:normalized_gradient} yields 
\begin{align} \label{eq:normalized_gradient_eigenexpan_lh}
 \nabla_\w f(\w,g) = g \Qm_2 \balpha_2/\| \w \|_\cov, \;\; \partial_g f(\w,g) = \alpha_1
\end{align}
hence the $\cov^{-1}$-norm of the directional gradient in the new parameterization is 
\begin{align} \label{eq:directional_eigenexpan_lh}
    \| \nabla_\w f(\w,g) \|^2_{\cov^{-1}} = g^2 \| \balpha_2 \|_2^2/\| \w \|_\cov^2
\end{align}
Therefore, one can establish the following connection between the $\cov^{-1}$-norm of gradient in the two different parameterizations: 
\begin{equation*}
\begin{aligned}
    \| \nabla_{\Tilde{\w}} f(\Tilde{\w}) \|^2_{\cov^{-1}} & = \alpha_1^2 + \| \balpha_2 \|_2^2  \\ 
    & \stackrel{\eqref{eq:normalized_gradient_eigenexpan_lh}}{=} \left( \partial_g f(\w,g) \right)^2 + \| \balpha_2 \|^2\\ 
    & \stackrel{\eqref{eq:directional_eigenexpan_lh}}{=}\left( \partial_g f(\w,g) \right)^2 + \frac{ \| \w \|^2_{\cov} \| \nabla_\w f(\w,g) \|^2_{\cov^{-1}}}{g^2}
\end{aligned}
\end{equation*}

\end{proof}

The above lemma allows us to first analyze convergence in the $(\w,g)$-parameterization and then we relate the result to the original $\Tilde{\w}$-parameterization in the following way: Given that the iterates $\{ \w_t, g_t \}_{t \in \N^+}$ converge to a critical point of $\flh(\w,g)$, one can use Eq.~\eqref{eq:normalized_gradient_eigenexpan_lh} to prove that $\Tilde{\w}_t = g_t \w_t/\| \w_t \|_\cov$ also converges to a critical point of $\flh(\Tilde{\w})$.

For the particular case of learning halfspaces with Gaussian input, the result of Lemma~\ref{lem:gradient_expression_lh} allows us to write the gradient $\nabla \flh$ as 
\begin{align}\label{eq:grad_lh_wtild}
    \nabla_{\Tilde{\w}} \flh(\Tilde{\w}) = c_1(\Tilde{\w}) \u + c_2(\Tilde{\w}) \cov \Tilde{\w}, 
\end{align}
where the constants $c_1$ and $c_2$ are determined by the choice of the loss. Replacing this expression in Eq.~\eqref{eq:normalized_gradient} yields the following formulation for the gradient in normalized coordinates
\begin{align} \label{eq:gd_normal_lh}
\nabla_\w \flh(\w) = g c_1(\w,g) \A_\w \u + g c_2(\w,g) \A_\w \cov \w,
\end{align} 
where $c_i(\w,g) = c_i(\Tilde{\w}(\w,g))$. Yet, due to the specific matrix $\A_\w$ that arises when reparametrizing according to \eqref{eq:reparametrization}, the vector $\cov \w$ is again in the kernel of $\A_\w$ (see Eq.~\eqref{eq:cov_w_A_orthogonality}) and hence
\begin{align} \label{eq:gradient_hl_after_normalization}
\nabla_\w \flh(\w) = g c_1(\w,g) \A_\w \u.  
\end{align} 

 \subsubsection{Convergence of the scalar $g$}
\begin{lemma}[Convergence of scalar] \label{lem:convergence_in_g}
Under the assumptions of Theorem~\ref{thm:lh_convergence}, in each iteration $t\in \mathbb{N}^+$ of {\sc Gdnp} (Algorithm \ref{alg:gdnp}) the partial derivative of $f_{LH}$ as given in Eq. \eqref{eq:learning_halfspace_normalized} converges to zero at the following linear rate
\begin{align} \label{eq:convergence_lh_in_g}
    \left( \partial_g \flh(\w_t, a_t^{(\tsc)}) \right)^2 \leq  2^{-\tsc}\zeta  | b^{(0)}_t - a_t^{(0)} |/\mu^{2}.
\end{align}
\end{lemma}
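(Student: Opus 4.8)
The plan is to treat $\flh(\w_t,\cdot)$ as a one-dimensional function of the scalar $g$ with the direction $\w_t$ held fixed throughout the inner loop, and to establish two facts: (i) this univariate map has a Lipschitz-continuous derivative whose constant is controlled by $\zeta/\mu$, and (ii) the Bisection subroutine (Alg.~\ref{alg:bisection}) drives $g$ to within $2^{-\tsc}|b_t^{(0)}-a_t^{(0)}|$ of a scalar $g^*$ at which $\partial_g\flh=0$. Combining a smoothness bound with the bracket-contraction bound will then control the squared partial derivative $(\partial_g\flh(\w_t,a_t^{(\tsc)}))^2$ at the Bisection output.

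First I would record the exact one-dimensional derivatives. Writing $\d_t:=\w_t/\|\w_t\|_\cov$, which is constant in $g$ and satisfies $\|\d_t\|_\cov=1$, the reparameterization gives $\Tilde{\w}=g\,\d_t$, so that $\partial_g\flh(\w_t,g)=\d_t^\top\nabla_{\Tilde{\w}}\flh(g\,\d_t)$ and $\partial_g^2\flh(\w_t,g)=\d_t^\top\nabla^2_{\Tilde{\w}}\flh(g\,\d_t)\,\d_t$. Since $\cov\succeq\mu\Im$ implies $\|\d_t\|_2^2\le\|\d_t\|_\cov^2/\mu=1/\mu$, the $\zeta$-smoothness of $\flh$ (Assumption~\ref{as:smooth_assumptions}, Eq.~\eqref{eq:smoothness_assumption}) yields $|\partial_g^2\flh(\w_t,g)|\le\zeta\,\|\d_t\|_2^2\le\zeta/\mu$. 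Hence $g\mapsto\partial_g\flh(\w_t,g)$ is $(\zeta/\mu)$-Lipschitz, and the univariate objective is $(\zeta/\mu)$-smooth.

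Next I would invoke the contraction of the Bisection bracket. By construction the endpoints sign-bracket a root $g^*$ of $\partial_g\flh(\w_t,\cdot)$ (this is exactly what the stopping criterion $h$ monitors), and the interval halves at every step, so after $\tsc$ steps the returned $a_t^{(\tsc)}$ obeys $|a_t^{(\tsc)}-g^*|\le 2^{-\tsc}|b_t^{(0)}-a_t^{(0)}|$. Using $\partial_g\flh(\w_t,g^*)=0$ together with the Lipschitz bound of the previous paragraph gives $|\partial_g\flh(\w_t,a_t^{(\tsc)})|\le(\zeta/\mu)\,2^{-\tsc}|b_t^{(0)}-a_t^{(0)}|$, which after squaring already produces a bound of the form claimed in Eq.~\eqref{eq:convergence_lh_in_g}.

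The main obstacle is matching the precise constants and exponents. The clean Lipschitz-and-bracket argument naturally returns $(\zeta/\mu)^2\,2^{-2\tsc}|b_t^{(0)}-a_t^{(0)}|^2$, whereas the stated bound is linear in both $2^{-\tsc}$ and $|b_t^{(0)}-a_t^{(0)}|$ and carries a single factor $\zeta/\mu^2$. To recover Eq.~\eqref{eq:convergence_lh_in_g} exactly, I expect one must combine a Polyak--{\L}ojasiewicz-type relation $(\partial_g\flh)^2\le 2(\zeta/\mu)\,(\flh(\w_t,a_t^{(\tsc)})-\flh(\w_t,g^*))$ with a \emph{linear}-in-bracket-length bound on the function-value suboptimality $\flh(\w_t,a_t^{(\tsc)})-\flh(\w_t,g^*)$, the latter being supplied by the specific guarantees and initialization $[a_t^{(0)},b_t^{(0)}]$ of the Bisection subroutine. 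Getting this interplay between the one-dimensional smoothness constant, the bracket contraction, and the exact powers right is the delicate part; the two individual ingredients themselves are routine.
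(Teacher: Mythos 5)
Your first two paragraphs are precisely the paper's proof. The paper likewise (i) notes that the bracket halves at each bisection step, (ii) uses continuity of $\partial_g\flh$ (from Assumption~\ref{as:smooth_assumptions}) plus the sign condition maintained by Algorithm~\ref{alg:bisection} to place a root $g^*$ of $\partial_g\flh(\w_t,\cdot)$ inside the final bracket, so that $|a_t^{(\tsc)}-g^*|\le 2^{-\tsc}|b_t^{(0)}-a_t^{(0)}|$, and (iii) writes $\partial_g\flh(\w_t,a_t^{(\tsc)})=\partial_g\flh(\w_t,a_t^{(\tsc)})-\partial_g\flh(\w_t,g^*)$ and bounds this difference, with $\w'_t:=\w_t/\|\w_t\|_\cov$, via Cauchy--Schwarz and the $\zeta$-smoothness of $\flh$ applied to the two points $a_t^{(\tsc)}\w'_t$ and $g^*\w'_t$, the factor $\|\w'_t\|_2^2\le 1/\mu$ playing exactly the role of your $\|\d_t\|_2^2\le 1/\mu$. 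Your route through the directional second derivative is the same estimate in differential form.

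Your last paragraph, however, chases a phantom. No Polyak--{\L}ojasiewicz relation or function-value argument appears anywhere in the paper: its chain of inequalities ends with $\left(\partial_g\flh(\w_t,a_t^{(\tsc)})\right)^2\le\zeta\mu^{-2}\left(a_t^{(\tsc)}-g^*\right)^2\le\zeta\mu^{-2}2^{-2\tsc}|b_t^{(0)}-a_t^{(0)}|^2$, i.e. exactly the quadratic quantity you obtained, and the linear-looking bound in the lemma statement is simply this quantity relaxed; the relaxation is valid only under the implicit (and benign, since $\tsc$ is large) normalization $2^{-\tsc}|b_t^{(0)}-a_t^{(0)}|\le 1$, under which $2^{-2\tsc}|b_t^{(0)}-a_t^{(0)}|^2\le 2^{-\tsc}|b_t^{(0)}-a_t^{(0)}|$. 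Similarly, your $\zeta^2/\mu^2$ is the correct constant obtained from squaring the Lipschitz inequality \eqref{eq:smoothness_assumption}; the paper's single factor $\zeta$ arises from applying \eqref{eq:smoothness_assumption} inside a squared norm without squaring $\zeta$. So your derivation is already at least as careful as the paper's; the only step separating your quadratic bound from the stated one is a trivial conditional weakening, not any new machinery, and you should not introduce a PL-type argument to bridge it.
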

 \begin{proof}
 
According to Algorithm \ref{alg:gdnp}, the length of the search space for $g$ is cut in half by each bisection step and thus reduces to  
\begin{align*} 
|a_t^{(\tsc)} - b_t^{(\tsc)}|\leq  2^{-\tsc} | b^{(0)}_t - a_t^{(0)} |
\end{align*} 
after $\tsc$ iterations. The continuity of $\partial_g \flh$ given by Assumption \ref{as:smooth_assumptions} and the fact that Algorithm \ref{alg:gdnp} guarantees $\partial_g \flh(\w_t, a_t^{(m)}) \cdot  \partial_g \flh(\w_t, b_t^{(m)})< 0,\; \forall m \in \mathbb{N}^+ $ allow us to conclude that there exists a root $g^*$ for $\partial_{g} \flh$ between $a_t^{(\tsc)}$ and $b_t^{(\tsc)}$ for which  
\begin{align*}
    |a_t^{(\tsc)} - g^*| \leq 2^{-\tsc} | b^{(0)}_t - a_t^{(0)} |
\end{align*}
holds. 

The next step is to relate the above distance to the partial derivative of $\flh(\w,g)$ w.r.t $g$. Consider the compact notation $\w'_t = \w_t/ \| \w_t \|_\cov $. Using this notation and the gradient expression in Eq.~\eqref{eq:normalized_gradient}, the difference of partial derivatives can be written as 
\begin{equation}\label{eq:g_conv_proof_1}
\begin{aligned} 
&\left( \partial_g \flh(\w_t, a_t^{(\tsc)}) \right)^2 \\ =& 
\left( \partial_{g} \flh(\w_t,a_t^{(\tsc)}) - \partial_{g} \flh(\w_t,g^*) \right)^2 \\
 =&\left(  \left( \nabla_{\Tilde{\w}} \flh(a_t^{(\tsc)}\w'_t) -\nabla_{\Tilde{\w}} \flh(g^*\w'_t)  \right)^\top \w'_t  \right)^2.
\end{aligned} 
\end{equation}

Using the smoothness assumption on $\flh$ we bound the above difference as follows
\begin{equation}\label{eq:g_conv_proof_2}
\begin{aligned}
&\left(  \left( \nabla_{\Tilde{\w}} \flh(a_t^{(\tsc)}\w'_t) -\nabla_{\Tilde{\w}} \flh(g^*\w'_t)  \right)^\top \w'_t  \right)^2 \\ \leq& \| \w'_t \|^2 \|  \nabla_{\Tilde{\w}} \flh(a_t^{(\tsc)}\w'_t) -\nabla_{\Tilde{\w}} \flh(g^*\w'_t)  \|^2  \\ 
 \stackrel{\eqref{eq:smoothness_assumption}}{\leq}& \zeta \| \w'_t \|^2   \| a_t^{(\tsc)}\w'_t - g^*\w'_t\|^2 \\ 
 \leq& \zeta \| \w'_t \|^4 (a_t^{(\tsc)} - g^*)^2 \\ 
 \leq & \zeta \| \w_t \|^4 \| \w_t \|^{-4}_\cov (a_t^{(\tsc)} - g^*)^2 \\ 
 \stackrel{\eqref{eq:covariance_spectral_bound}}{\leq} &\zeta \mu^{-2} (a_t^{(\tsc)} - g^*)^2,
\end{aligned}
\end{equation}
where the last inequality is due to Assumption \ref{as:weak_distribution_assumption}.

Combining \eqref{eq:g_conv_proof_1} and \eqref{eq:g_conv_proof_2} directly yields

\begin{align} \label{eq:convergence_lh_in_g}
    \left( \partial_g \flh(\w_t, a_t^{(\tsc)}) \right)^2 \leq  2^{-\tsc}\zeta  | b^{(0)}_t - a_t^{(0)} |/\mu^{2},
\end{align}
which proves the assertion.
\end{proof}
\subsubsection{Directional convergence}
\begin{lemma}[Directional convergence]\label{l:directional_conv_lh}
Let all assumptions of Theorem~\ref{thm:lh_convergence} hold. Then, in each iteration $t\in \mathbb{N}^+$ of {\sc Gdnp} (Algorithm \ref{alg:gdnp}) with the following choice of stepsizes
\begin{align} \label{eq:step_size}
s_t := s(\w_t,g_t) = - \frac{\| \w_t \|_\cov^{3}}{ L g_t h(\w_t,g_t)  } ,\quad t=1,\dots,T_d
\end{align}
where  
\begin{equation}
\begin{aligned}
h(\w_t,g_t) := &\E_\z \left[ \varphi'\left(\Tilde{\w}_t\right)  \right] \left(\u^\top\w_t\right)  - \E_\z \left[ \varphi''(\Tilde{\w}_t) \right] \left(\u^\top \w_t\right)^2 \neq 0.
\end{aligned}
\end{equation}

The norm of the gradient w.r.t. $\w$ of $\flh$ as in Eq.~\eqref{eq:learning_halfspace_normalized} converges at the following linear rate

\begin{align*}
    \| \w_t \|^2_\cov \| \nabla_\w \flh(\w_t,g_t) \|^2_{\cov^{-1}} \leq (1-\frac{\mu}{L})^{2t} \Phi^2 g_t^2 \left( \rq(\w_0) - \rq^*  \right). \label{eq:convergenc_direction_lh}
\end{align*}
\end{lemma}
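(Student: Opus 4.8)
The plan is to show that, for the directional variable $\w$, the {\sc Gdnp} update is \emph{identical} to a Gradient Descent step on the least-squares Rayleigh quotient $\rq(\w)=-\w^\top\B\w/(\w^\top\cov\w)$ analyzed in Theorem~\ref{thm:least_squares_convergence}, so that the directional iterates inherit the linear rate established there. Concretely, I would (i) reduce the directional gradient of $\flh$ to a scalar multiple of $\nabla\rq$, (ii) check that the prescribed stepsize $s_t$ is exactly the one that turns the directional step into the Rayleigh step with $\eta_t=\|\w_t\|_\cov^2/(2L|\rq(\w_t)|)$, and (iii) translate the resulting decay of $\sinasq{\cov}{\w_t}{\v_1}$ into the claimed decay of the normalized gradient norm.

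First I would compute the directional gradient. By Lemma~\ref{lem:gradient_expression_lh}, $\nabla_{\Tilde{\w}}\flh=c_1\u+c_2\cov\Tilde{\w}$, and by Eq.~\eqref{eq:cov_w_A_orthogonality} the vector $\cov\w$ lies in the kernel of $\A_{\w}$; since $\cov\Tilde{\w}$ is proportional to $\cov\w$, the $c_2$-term is annihilated and Eq.~\eqref{eq:gradient_hl_after_normalization} gives $\nabla_\w\flh(\w,g)=g\,c_1\,\A_{\w}\u$. The crucial algebraic identity is that $\A_{\w}\u$ is proportional to the Rayleigh gradient: starting from Eq.~\eqref{eq:LS_gradient_w} with $\B\w=(\u^\top\w)\u$ and $\rq(\w)=-(\u^\top\w)^2/\|\w\|_\cov^2$, one factors $\B\w+\rq(\w)\cov\w=(\u^\top\w)\|\w\|_\cov\,\A_{\w}\u$, whence
\[
\A_{\w}\u=-\frac{\|\w\|_\cov}{2(\u^\top\w)}\,\nabla\rq(\w).
\]
Thus $\nabla_\w\flh(\w,g)$ is a scalar multiple of $\nabla\rq(\w)$, and the directional {\sc Gdnp} update moves along the same line as a Rayleigh-quotient {\sc Gd} step. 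I would then verify that substituting $s_t$ from Eq.~\eqref{eq:step_size} (for which the stopping quantity satisfies $h(\w_t,g_t)=c_1(\u^\top\w_t)$) collapses $\w_{t+1}=\w_t-s_t\nabla_\w\flh(\w_t,g_t)$ exactly to $\w_{t+1}=\w_t-\eta_t\nabla\rq(\w_t)$ with $\eta_t=\|\w_t\|_\cov^2/(2L|\rq(\w_t)|)$, the stepsize of Theorem~\ref{thm:least_squares_convergence}.

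With the two trajectories matched, I would invoke Lemma~\ref{lem:convergence_metric_decay} to obtain $\sinasq{\cov}{\w_t}{\v_1}\le(1-\mu/L)^{2t}\sinasq{\cov}{\w_0}{\v_1}$, and Proposition~\ref{pro:function_value_expansion} to rewrite this as $\rq(\w_t)-\rq(\v_1)\le(1-\mu/L)^{2t}(\rq(\w_0)-\rq^*)$. To reach the gradient-norm statement I would plug the proportionality into $\|\w_t\|_\cov^2\|\nabla_\w\flh(\w_t,g_t)\|^2_{\cov^{-1}}$ and simplify using Proposition~\ref{pro:gradient_norm_expansion} together with $|\rq(\w_t)|=(\u^\top\w_t)^2/\|\w_t\|_\cov^2$; all the $\|\w_t\|_\cov$, $(\u^\top\w_t)$ and $|\rq(\w_t)|$ factors cancel, leaving the clean identity $\|\w_t\|_\cov^2\|\nabla_\w\flh(\w_t,g_t)\|^2_{\cov^{-1}}=g_t^2c_1^2\big(\rq(\w_t)-\rq(\v_1)\big)$.

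Combining the last display with the decay of $\rq(\w_t)-\rq(\v_1)$ yields the claimed rate up to the prefactor $c_1^2$, so the final step is to bound this loss-dependent constant by $\Phi^2$ using the bounded-derivative Assumption~\ref{as:reg_assumptions}. I expect the two most delicate points to be exactly here and in the stepsize matching: one must track the constants $c_1,c_2$ (and the precise form of $h$) carefully through the $\A_{\w}$-projection to confirm that the $c_2$-contribution genuinely vanishes and that the surviving prefactor is controlled by $\Phi$, and one must ensure the induction underlying Lemma~\ref{lem:convergence_metric_decay} — which needs $\rq(\w_t)\neq0$ (equivalently $\u^\top\w_t\neq0$) and hence a well-defined $s_t$ — remains valid at every iteration in which the stopping criterion $h(\w_t,g_t)\neq0$ has not yet fired.
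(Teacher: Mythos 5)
Your proposal matches the paper's own proof essentially step for step: the same reduction $\nabla_\w \flh(\w,g) = g\,c_1\,\A_\w\u = -g\,\bigl(c_1\|\w\|_\cov/(2\u^\top\w)\bigr)\nabla\rq(\w)$, the same verification that $s_t$ with $h = c_1(\u^\top\w_t)$ makes the directional iterates coincide with the Rayleigh-quotient {\sc Gd} iterates at stepsize $\eta_t = \|\w_t\|_\cov^2/(2L|\rq(\w_t)|)$, and the same final chain via the suboptimality--gradient-norm identity and the bound $c_1^2 \leq \Phi^2$ (you merely cite Lemma~\ref{lem:convergence_metric_decay} and Propositions~\ref{pro:function_value_expansion}, \ref{pro:gradient_norm_expansion} directly where the paper cites the corresponding equations of Theorem~\ref{thm:least_squares_convergence}, which is the same content). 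The delicate points you flag, including the induction keeping $\rq(\w_t)\neq 0$, are exactly the ones the paper handles, so the proposal is correct and takes the same route.
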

\begin{proof}

The key insight for this proof is a rather subtle connection between the gradient of the reparametrized least squares objective (Eq. \eqref{eq:LS_rayleigh_formulation}) and the directional gradient of the learning halfspace problem (Eq. \eqref{eq:learning_halfspace_normalized}):
\begin{equation}
\begin{aligned} \label{eq:ls_lh_gradient_connection}
\nabla_\w \flh(\w,g) & \stackrel{\eqref{eq:gradient_hl_after_normalization}}{=} g c_1(\w,g) \A_\w \u \\ 
& = gc_1(\w,g) \left( \u - \left( \frac{\w^\top \u}{\| \w \|_\cov^2 }\right) \cov \w  \right)/ \| \w \|_{\cov}\\ 
& =g \frac{c_1(\w,g)}{\u^\top \w} \left( \u^\top \w \u - \frac{(\w^\top \u)^2}{\|\w \|^2_\cov } \cov \w \right)/\| \w\|_\cov \\
& = g\frac{c_1(\w,g)\| \w \|_\cov}{\u^\top \w} \left( \B\w_t +\rho(\w) \cov \w \right)/\| \w\|_\cov^2 \\
& \stackrel{\eqref{eq:LS_gradient_w}}{=}  -g\left( \frac{c_1(\w,g)\| \w \|_\cov}{2 \u^\top \w} \right) \nabla_\w \rho(\w) .
\end{aligned} 
\end{equation}
Therefore, the directional gradients $\nabla_\w \rho(\w)$ and $\nabla_\w \flh(\w,g)$ align in the same direction for all $\w \in \mathbb{R}^d\setminus \{\mathbf{0}\}$. Based on this observation, we propose a stepsize schedule for {\sc Gdnp} such that we can exploit the convergence result established for least squares in Theorem~\ref{thm:least_squares_convergence}. The iterates $\{\w_t\}_{t\in\N^+}$ of {\sc Gdnp} on $\flh$ can be written as 
\begin{equation}
\begin{aligned} \label{eq:LHsteps_w}
\w_{t+1} & = \w_t - s_t \nabla_\w \flh(\w_t,g) \\ 
 & \stackrel{\eqref{eq:ls_lh_gradient_connection}}{=} \w_t + s_t \left( \frac{g_tc_1(\w_t,g_t)\| \w_t \|_\cov}{2 \u^\top \w_t} \right) \nabla_\w \rq(\w_t).
\end{aligned}
\end{equation}

The stepsize choice of Eq.\eqref{eq:step_size} guarantees that
\begin{align*} 
s_t \left( \frac{g_tc_1(\w_t,g_t)\| \w_t \|_\cov}{2 \u^\top \w_t} \right) = -\frac{\| \w_t \|_\cov^4}{(2 L(\w_t^\top \u)^2)} \stackrel{\eqref{eq:stepsize_LS}}{=} -\eta_t.
\end{align*}

Thus,  Eq. \eqref{eq:LHsteps_w} can be rewritten as
\begin{align*} 
\w_{t+1} = \w_t - \eta_t \nabla_\w \rq(\w_t),
\end{align*} 
which exactly matches the {\sc Gd} iterate sequence of Eq.~\eqref{eq:gd_least_squares} on $\rho(\w)$. At this point, we can invoke the result of Theorem  ~\ref{thm:least_squares_convergence} to establish the following convergence rate:
\begin{equation}
\begin{aligned}
    & \| \w_t \|^2_\cov \| \nabla_\w \flh(\w_t,g_t) \|^2_{\cov^{-1}} & \\ \stackrel{\eqref{eq:ls_lh_gradient_connection}}{=}& \| \w_t \|^2_\cov c_1^2(\w_t,g_t) g_t^2  \| \nabla_\w \rq(\w_t) \|^2_{\cov^{-1}}/\left( 2 (\u^\top \w_t)/\| \w_t \|_\cov \right)^2  \\ 
     \stackrel{\eqref{as:reg_assumptions}}{\leq}& \Phi^2  \| \w_t \|^2_\cov g_t^2 \| \nabla_\w \rq(\w_t) \|^2_{\cov^{-1}} /\left( 2(\u^\top \w_t)/\| \w_t \|_\cov \right)^2 \\ 
     \leq &\Phi^2 \| \w_t \|_\cov^2  g_t^2\| \nabla_\w\rq(\w_t) \|^2_{\cov^{-1}}/|4\rq(\w_t)| \\ 
     \stackrel{\eqref{eq:suboptimality_gradientnorm_thm}}{\leq}& \Phi^2 g_t^2\left( \rq(\w_t) - \rq^* \right) \\
    \stackrel{\eqref{eq:suboptimality_convergence_thm}}{\leq} &(1-\mu/L)^{2t} \Phi^2  g_t^2 \left( \rq(\w_0) - \rq^*  \right) . \label{eq:convergenc_direction_lh}
\end{aligned}
\end{equation}
\end{proof}

\subsubsection{Combined convergence guarantee} 
Using Proposition~\ref{prop:gradient_norm_in_different_parameterization} and combining the results obtained for optimizing the directional and scalar components, we finally obtain the following convergence guarantee:
\begin{align*}
    \| \nabla_{\Tilde{\w}} \flh(\Tilde{\w}_{\tdir}) \|^2_{\cov^{-1}} & \stackrel{\eqref{eq:gradient_norm_in_different_parameterization}}{=}  \|\w_{\tdir}\|_\cov^2 \| \nabla_\w \flh(\w_{\tdir},g_{\tdir}) \|^2_{\cov^{-1}} /g_{\tdir}^2 \\&+ \left( \partial_g \flh(\w_{\tdir},g_{\tdir}) \right)^2  \\ 
    & \stackrel{\eqref{eq:convergenc_direction_lh}}{\leq} (1-\mu/L)^{2\tdir} \Phi^2  \left( \rq(\w_0) - \rq^* \right) \\&+ \left( \partial_g \flh(\w_{\tdir},g_{\tdir}) \right)^2 \\ 
    & \stackrel{\eqref{eq:convergence_lh_in_g}}{\leq }(1-\mu/L)^{2\tdir} \Phi^2  \left( \rq(\w_0) - \rq^*  \right)\\&+ 2^{-\tsc}\zeta  | b^{(0)}_{\tdir} - a_ {\tdir}^{(0)} |/\mu^{2}.
\end{align*}

\begin{algorithm}[t]
\begin{algorithmic}[1]
\small{
\STATE \textbf{Input:} $\tsc$, $a_t^{(0)}$, $b_t^{(0)}$, $f$
\STATE Choose $a_t^{(0)}$ and $b_t^{(0)}$ such that $\partial_g f(a_t^{(0)},\w_t) \cdot \partial_g f(b_t^{(0)},\w_t)>0$. 

\FOR{$m=0,\dots,\tsc$} 
\STATE $c = (a^{(m)} + b^{(m)})/2$ 
\IF{ $\partial_g f(c,\w_t) \cdot \partial_g f(a^{(m)},\w_t) > 0 $ }
\STATE $a^{(m+1)} \leftarrow c$
\ELSE
\STATE $b^{(m+1)} \leftarrow c$
\ENDIF
\ENDFOR
\STATE $g \leftarrow a^{(\tsc)}$
\STATE \textbf{return} $g$}
\end{algorithmic}
\caption{Bisection} 
\label{alg:bisection}
\end{algorithm}

\subsubsection{A word on Weight Normalization}\label{sec:word_on_WN}
The improved convergence rate for Batch Normalization (Theorem \ref{thm:lh_convergence}) relies heavily on the fact that normalizing and backpropagating through the variance term resembles splitting the optimization task into a length- and directional component. As mentioned in the introduction, this feature is also present in Weight Normalization and it is thus an obvious question, whether {\sc Wn} can achieve a similar convergence rate. From a theoretical perspective, we were not able to prove this which is essentially due to the subtle difference in how the normalization is done: While {\sc Bn} normalizes the parameters to live on the $\cov$-sphere, {\sc Wn} brings all parameters to the unit sphere.

The fast directional convergence rate of {\sc Bn} on Learning Halfspaces is essentially inherited from the fast convergence of Gradient Descent (with adaptive stepsize) on the Rayleight Quotient. This can be seen in the proof of Lemma \ref{l:directional_conv_lh} where we specifically use the fact that $\nabla_{\w} f_{LH}$ and $\nabla_{\w} \rho$  align in the same direction. To prove this fact we need two ingredients (i) Stein's Lemma which gives us the expression of $\nabla_{\Tilde{\w}} f_{LH}$ as in Eq. \eqref{eq:grad_lh_wtild} and (ii) the specific reparametrization of {\sc Bn} as in Eq. \eqref{eq:reparametrization} which lets us express the directional part of this gradient as $\nabla_{\Tilde{\w}} f_{LH} = g \A_{\w} \nabla_{\Tilde{\w}} \flh(\Tilde{\w})$. As we shall see, the second part is very specific to the reparametrization done by {\sc Bn}, which gives certain properties of $\A_{\w}=\Im/\| \w \|_\cov  -  \cov \w \w^\top/\|\w\|_\cov^{3}$ that then yield Eq. \eqref{eq:gradient_hl_after_normalization} which is simply a scaled version of the Rayleigh Quotient gradient (see Eq. \eqref{eq:LS_gradient_w}). This fact arises particularly because (i) $\A_{\w}$ is orthogonal to $\cov\w$ (see Eq. \eqref{eq:A_in_nullspace_of_Sw}) and (ii) $\A_{\w}$ and  $\nabla_{\w} \rho$ both involve division by the $\cov$-norm. Both properties are not given for the version of $\A_{\w,\text{WN}}=\Im/\| \w \|_2  -  \w \w^\top/\|\w\|_2^{3}$ that would arise when using Weight Normalization so the proof strategy breaks because we no longer match the gradients $\nabla_{\w} f_{LH}$ and $\nabla_{\w} \rho$ . 

That said, we observe similar empirical convergence behaviour in terms of suboptimality for {\sc Bn} and {\sc Wn} (without any adaptive stepsizes, see Section \ref{sec:exp_i}) but as can be seen on the right of Figure \ref{fig:loss_glms} the path that the two methods take can be very different. We thus leave it as an interesting open question if other settings and proof strategies can be found where fast rates for {\sc Wn} are provable. 
\section{NEURAL NETWORKS} \label{sec:nn_appendix}
Recall the training objective  of the one layer MLP presented in Section~\ref{sec:neural_networks}: 
\begin{align*}
    \min_{\Tilde{\W},\Theta} \bigg( \fnn(\Tilde{\W},\Theta) := \E_{y,\x} \left[ \ell\left(-y F(\x,\Tilde{\W},\Theta)\right) \right]    \bigg), \tag{Revisited~\ref{eq:NN_problem}}
\end{align*}
where
\begin{equation*}
     F(\z, \Tilde{\W},\Theta) ):=\sum_{i=1}^m \theta^{(i)} \varphi(\z^\top \Tilde{\w}^{(i)}). 
\end{equation*}
Figure~\ref{fig:nn} illustrates the considered architecture in this paper. 

\def\layersep{2.5cm}
\begin{figure}[h!]

    \begin{center}
    \begin{tikzpicture}[shorten >=1pt,->,draw=black!50, node distance=\layersep]
        \tikzstyle{every pin edge}=[<-,shorten <=1pt]
        \tikzstyle{neuron}=[circle,fill=black!25,minimum size=17pt,inner sep=0pt]
        \tikzstyle{input neuron}=[neuron, fill=green!50];
        \tikzstyle{output neuron}=[neuron, fill=red!50];
        \tikzstyle{hidden neuron}=[neuron, fill=blue!50];
        \tikzstyle{hidden neuron 2}=[neuron, fill=blue!50];
    
        \tikzstyle{annot} = [text width=4em, text centered]
    
        \foreach \name / \y in {1,...,2}
            \node[input neuron] (I-\name) at (0,-1.35*\y) {$\z_\y$};
        \node at (0,-1.35*2.5) {$\vdots$};
        \foreach \name / \y in {3,...,3}
            \node[input neuron] (I-\name) at (0,-1.35*\y-0.4) {$\z_d$};
    
        \path[yshift=0.5cm]
                    node[hidden neuron] (H-1) at (\layersep,-25-1 cm) {$\varphi_1$};
        \path[yshift=0.5cm]
                    node[hidden neuron 2] (H-2) at (\layersep,-25-2.3 cm) {$\varphi_2$};
        
        \path[yshift=0.5cm]
                    node[hidden neuron 2] (H-3) at (\layersep,-25-4.05 cm) {$\varphi_m$};
         \node at (\layersep,-25-2.5cm) {$\vdots$};
        
        \node[output neuron,  right=of $(H-2)$]  (O) {$F$};
    
        \foreach \source in {1,...,3}
            \foreach \dest in {1,...,3}{
                    \ifnum \source=1
                        \path (I-\source)  edge (H-\dest) ;
                    \else
                        \path (I-\source)  edge
                        (H-\dest)  ;
                    \fi
                }
        \path (I-1)  edge node [above] {\textcolor{darkgray}{$\Tilde{\w}^{(1)}$}} (H-1) ;
        \path (I-3)  edge node [above] {\textcolor{darkgray}{$\Tilde{\w}^{(m)}$}} (H-3) ;
        \foreach \source in {1,...,3}{
             \ifnum \source=3
                \path (H-\source) edge node [above] {\textcolor{darkgray}{$\theta^{(m)}$}} (O);
             \else
                \path (H-\source) edge node [above] {\textcolor{darkgray}{$\theta^{(\source)}$}} (O);
             \fi
            }
        \node[annot,above of=H-1, node distance=1cm] (hl) {\footnotesize{Hidden layer}};
        \node[annot,left of=hl] {\footnotesize{Input layer}};
        \node[annot,right of=hl] {\footnotesize{Output layer}};
    \end{tikzpicture}
    \end{center}
\caption{Neural network architecture considered in this paper. }
    \label{fig:nn}
\end{figure}
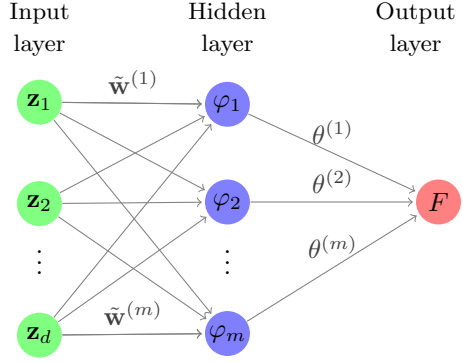
Since the activation function is assumed to be an odd function ($\tanh$), this choice allows us to equivalently rewrite the training objective as
\begin{align} \label{eq:NN_problem_z}
\min_{\Tilde{\W},\Theta} \bigg(
    \fnn(\Tilde{\W},\Theta) =  \E_{\z}  \left[ \ell( F\left(\z, \Tilde{\W},\Theta) \right) \right]
    \bigg).
\end{align}

 By means of Assumption~\ref{as:strong_distribution_assumption} and Stein's lemma (Eq. \eqref{eq:stein_lemma}) we can simplify the gradient w.r.t $\Tilde{\w}_i$ as follows
\begin{align} \label{eq:gradient_nn}
  \nabla_{\Tilde{\w}^{(i)}} \fnn(\Tilde{\W},\Theta)/\theta^{(i)} & =  \E_\z \left[ \ell^{(1)}(F(\z,\Tilde{\W},\Theta))\varphi^{(1)}(\z^\top \Tilde{\w}^{(i)}) \z \right] \\ 
  & = \alpha^{(i)} \u + \beta^{(i)} \cov \Tilde{\w}^{(i)} + \sum_{j=1}^m \gamma^{(i,j)} \cov \Tilde{\w}^{(j)},
\end{align}
where the scalars $\alpha^{(i)}$,  $\beta^{(i)}$ and $\gamma^{(i,j)}$ are defined as
\begin{align} 
\beta^{(i)} & := \E_\z \left[ \ell^{(1)}(F(\z,\Tilde{\W},\Theta)) \varphi^{(2)}(\z^\top \Tilde{\w}^{(i)}) \right] \\ 
\gamma^{(i,j)} & := \theta^{(j)} \E_\z \left[ \ell^{(2)}(F(\z,\Tilde{\W},\Theta)) \varphi^{(1)}(\z^\top \Tilde{\w}^{(i)}) \varphi^{(1)}(\z^\top \Tilde{\w}^{(j)}) \right] \\ 
\alpha^{(i)} & :=\E_\z \left[ \ell^{(1)}(F(\z,\Tilde{\W},\Theta))\varphi^{(1)}(\z^\top \Tilde{\w}^{(i)}) \right] - \sum_{j=1}^m \gamma^{(i,j)}(\u^\top \Tilde{\w}^{(j)}),
\end{align} 
where $l^{(i)}(\cdot)\in \mathbb{R}$ and $\varphi^{(i)}(\cdot)\in \mathbb{R}$ represent the $i$-th derivative of $l(\cdot)$ and $\varphi(\cdot)$ with respect to their input $(\cdot)$.

\subsection{Characterization of the objective}
Interestingly, the normality assumption induces a particular global property on $\fnn (\Tilde{\W})$. In fact, all critical weights $\Tilde{\w}_i$ align along one single line in $\mathbb{R}^d$, which only depends on \textit{incoming} information into the hidden layer. This result is formalized in the next lemma.

\globalpropertynn*

\begin{proof}
Recall the gradient of $\fnn$ as given in Eq.~\eqref{eq:gradient_nn}. Computing a first order critical point requires setting the derivatives of all units to zero which amounts to solving the following system of non-linear equations: 
\begin{equation} \label{eq:wide_layer_soe}
\begin{aligned}
(1) \quad & \alpha^{(1)} \u + \beta^{(1)} \cov \hat{\w}^{(1)} + \sum_{j=1}^m  \gamma^{(1,j)} \cov \hat{\w}^{(j)} &= 0 \\ 
(2) \quad & \alpha^{(2)} \u + \beta^{(2)} \cov \hat{\w}^{(2)} + \sum_{j=1}^m \gamma^{(2,j)} \cov \hat{\w}^{(j)}  &= 0 \\
&\hspace{3cm}\vdots &\\
(m) \quad & \alpha^{(m)} \u + \beta^{(m)} \cov \hat{\w}^{(m)} + \sum_{j=1}^m \gamma^{(m,j)} \cov \hat{\w}^{(j)}  & = 0 ,
\end{aligned}
\end{equation}

where each row (i) represents a system of $d$ equations. 

\textbf{Matrix formulation of system of equations} Let us rewrite \eqref{eq:wide_layer_soe} in matrix form. Towards this end, we define

\begin{equation*}
\begin{aligned}
    \U&=[\u,\u,\ldots,\u ] \in \mathbb{R}^{d\times m},\\ \hat{\w}&=[\hat{\w}^{(1)},\hat{\w}^{(2)},\ldots,\hat{\w}^{(m)} ] \in \mathbb{R}^{d\times m}
\end{aligned}
\end{equation*}
as well as 
\begin{equation*}
\begin{aligned}
\A &= \text{diag}\left(\alpha^{(1)},\alpha^{(2)},\ldots,\alpha^{(m)} \right) \in \mathbb{R}^{m\times m},\\  \B &= \text{diag}\left(\beta^{(1)},\beta^{(2)},\ldots,\beta^{(m)} \right) \in \mathbb{R}^{m\times m}
\end{aligned}
\end{equation*}
and
\begin{equation*}
\mathbf{\Gamma}= \begin{bmatrix} 
\gamma^{(1,1)}&  \gamma^{(1,2)} & \ldots & \gamma^{(1,m)} \\ 
\gamma^{(2,1)}&  \gamma^{(2,2)} & \ldots & \gamma^{(2,m)}\\
& &\ddots & \\
\gamma^{(m,1)}&  \gamma^{(m,2)} & \ldots & \gamma^{(m,m)}.
 \end{bmatrix}.
\end{equation*}
Note that $\mathbf{\Gamma}=\mathbf{\Gamma^\top}$ since $\gamma^{(i,j)}=\gamma^{(j,i)}, \forall i,j$.

\textbf{Solving the system of equations} Using the notation introduced above, we can write \eqref{eq:wide_layer_soe} as follows

\begin{equation}
\begin{aligned}
&\U\A+\cov \hat{\W} \B + \cov \hat{\W} \mathbf{\Gamma} = 0\\
\Leftrightarrow & \cov \hat{\W}(\B + \mathbf{\Gamma})=-\U\A \\
\Leftrightarrow & \hat{\W}=-\cov^{-1} \U \underbrace{\A(\B + \mathbf{\Gamma})^{ \dagger}}_{:=\D},
\end{aligned}
\end{equation}

where $ (\B + \mathbf{\Gamma})^{ \dagger}$ is the pseudo-inverse of $(\B + \mathbf{\Gamma})$.

As a result 
\begin{equation*}
\begin{aligned}
&[\hat{\w}^{(1)},\hat{\w}^{(2)},\ldots,\hat{\w}^{(m)}]\\=&- [\cov^{-1}\u,\cov^{-1}\u,\ldots,\cov^{-1}\u] [\d_1,\d_2,\ldots,\d_m]
\end{aligned}
\end{equation*}
and hence the critical points of the objective are of the following type 
\begin{equation}\label{eq:NN_critical_points}
\hat{\w}^{(i)} = - \cov^{-1}\U\d_1= -\left( \sum_{k=1}^m \d_i^{(k)}\right) \cov^{-1}\u.    
\end{equation}
\end{proof} 
\subsection{Possible implications for deep neural networks}\label{sec:deepnets}
From Eq.~\eqref{eq:opt_direction_NN} in the Lemma \ref{lem:critical_point_characterization_nn} we can conclude that the optimal direction of any $\hat{\w}^{(i)}$ is independent of the corresponding output weight $\theta^{(i)}$, which only affects $\hat{\w}^{(i)}$ through the scaling parameter $\hat{c}^{(i)}$. This is a very appealing property: Take a multilayer network and assume (for the moment) that all layer inputs are Gaussian. Then, Lemma \ref{lem:critical_point_characterization_nn} still holds for any given hidden layer and gives rise to a decoupling of the optimal direction of this layer with all downstream weights, which in turn simplifies the curvature structure of the network since many Hessian blocks become zero.

However, classical local optimizers such as {\sc Gd} optimize both, direction and scaling, at the same time and are therefore blind to the above global property. It is thus very natural that performing optimization in the reparametrized weight space can in fact benefit from splitting the subtasks of optimizing scaling and direction in two parts, since updates in the latter are no longer sensitive to changes in the downstream part of the network. In the next section, we theoretically prove that such a decoupling accelerates optimization of weights of each individual unit in the presence of Gaussian inputs. Of course, the normality assumption is very strong but remarkably the experimental results of Section \ref{sec:exp_ii} suggest the validity of this result beyond the Gaussian design setting and thus motivate future research in this direction.
 
\subsection{Convergence analysis}
Here, we prove the convergence result restated below.
\convergencnn*
\begin{proof}
According to the result of Lemma~\ref{lem:critical_point_characterization_nn}, all critical points of $\fnn$ are aligned along the same direction as the solution of normalized least-squares. This property is similar to the objective of learning halfspaces (with Gaussian inputs) and the proof technique below therefore follows similar steps to the convergence proof of Theorem~\ref{thm:lh_convergence}.

\textbf{Gradient in the original parameterization } Recall the gradient of $\fnn$ is defined as
\begin{align*} 
  \nabla_{\Tilde{\w}^{(i)}} f^{(i)}/\theta^{(i)} = \alpha^{(i)} \u + \beta^{(i)} \cov \Tilde{\w}^{(i)} + \sum_{j=1}^m \gamma^{(i,j)} \cov \Tilde{\w}^{(j)}. \tag{\ref{eq:gradient_nn} revisited}
\end{align*}




\textbf{Gradient in the normalized parameterization: } Let us now consider the gradient of $\fnn$ w.r.t the normalized weights, which relates to the gradient in the original parameterization in the following way
\begin{align*} 
\nabla_\w f(\w,g) &= g \A_{\w} \nabla_{\Tilde{\w}} f(\Tilde{\w}), \\ \partial_g f(\w,g) &=\frac{ \w^\top \nabla_{\Tilde{\w}} f(\Tilde{\w})}{\| \w \|_\cov}  \tag{\ref{eq:normalized_gradient} revisited}
\end{align*}
Replacing the expression given in Eq.~\eqref{eq:gradient_nn} into the above formula yields 
\begin{equation}
\begin{aligned} 
\nabla_{\w^{(i)}}  \fnn/(g^{(i)} \theta^{(i)}) & = \alpha^{(i)} \A_{\w^{(i)}} \u + \beta^{(i)} \A_{\w^{(i)}} \cov \w^{(i)} \\&+ \sum_{j=1}^m \gamma^{(i,j)} \A_{\w^{(i)}} \cov \w^{(j)},
\end{aligned}
\end{equation}
where
\begin{align}
    \A_{\w^{(i)}} & := \Im/\|\w^{(i)}\|_\cov - \cov \w^{(i)} \otimes \w^{(i)} / \| \w^{(i)} \|^3_\cov.  \label{eq:A_w_i}
\end{align}
Note that the constants $\alpha^{(i)}$, $\beta^{(i)}$ and $\gamma^{(i,j)}$ all depend on the parameters $\w^{(i)}$ and $\theta^{(j)}$ of the respective units $i$ and $j$. The orthogonality of $\cov \w^{(i)}$ to $\A_{\w^{(i)}}$ (see Eq.~\eqref{eq:cov_w_A_orthogonality}) allows us to simplify things further: 
\begin{align} \label{eq:normalized_gradient_nn}
\nabla_{\w^{(i)}}  \fnn/(g^{(i)}\theta^{(i)}) = \alpha^{(i)} \A_{\w^{(i)}} \u + \sum_{j\neq i} \gamma^{(i,j)} \A_{\w^{(i)}} \cov \w^{(j)}
\end{align} 
We now use the initialization of weights $\{ \w^{(k)} = c_k \cov^{-1}\u \}_{k<i}$ and $\{ \w^{(j)} = \zero\}_{j>i}$ into the above expression to get
\begin{align}
    \nabla_{\w^{(i)}}  \fnn &= \theta^{(i)} g^{(i)} \xi_t \A_{\w^{(i)}} \u, \quad \xi = \alpha^{(i)} +  \sum_{j < i} \gamma^{(i,j)}  c_j \nonumber \\
    &= \theta^{(i)} g^{(i)}  \xi \left(\| \w^{(i)} \|_\cov/(2 \u^\top \w^{(i)}) \right) \nabla \rq(\w^{(i)})
\end{align}
where $\nabla \rq(\w)$ is the gradient of the normalized ordinary least squares problem (Eq.~\eqref{eq:LS_rayleigh_formulation}), i.e.
\begin{align*}  
-\nabla \rq(\w)/2 = \left( \u \u^\top \w + \frac{(\u^\top \w)^2}{\| \w \|_\cov^2 } \cov \w \right)/\| \w \|_\cov^2.
\end{align*} 
We conclude that the global characterization property described in Eq.~\eqref{eq:opt_direction_NN} transfers to the gradient since the above gradient aligns with the gradient of $\rq(\w)$.

\textbf{Choice of stepsize and stopping criterion } We follow the same approach used in the proof for learning halfspaces and choose a stepsize to ensure that the gradient steps on $\fnn^{(i)}$ match the gradient iterates on $\rq$, i.e.  
\begin{align*} 
\w^{(i)}_{t+1} &= \w^{(i)}_t - s^{(i)}_t \theta^{(i)} g^{(i)}_t  \xi_t \left(\| \w^{(i)}_t \|_\cov/(2 \u^\top \w^{(i)}) \right) \nabla \rq(\w^{(i)}_t) \\&= \w^{(i)}_t - \frac{\| \w_t \|_\cov^2}{2L|\rq(\w_t^{(i)})|} \nabla \rq(\w^{(i)}_t),
\end{align*} 
which leads to the following choice of stepsize 
\begin{equation}
\begin{aligned} \label{eq:step_size_nn}
s^{(i)}_t &= \| \w^{(i)}_t \|^3_\cov /(L\theta^{(i)} g_t^{(i)} \xi_t \u^\top \w^{(i)}_t)\\
\xi_t &= \alpha^{(i)}_t + \sum_{j<i} \gamma^{(i,j)}_t c_j.
\end{aligned}
\end{equation}

If $\xi_t = 0$, then the gradient is zero. Therefore, we choose the stopping criterion as follows
\begin{align} \label{eq:stopping_time_nn}
h^{(i)}_t = \xi_t = \alpha^{(i)}_t + \sum_{j<i} \gamma^{(i,j)}_t c_j.
\end{align} 

\textbf{Gradient norm decomposition } Proposition~\ref{prop:gradient_norm_in_different_parameterization} relates the $\cov^{-1}$-norm of the gradient in the original space to the normalized space as follows 
\begin{align*}  
    \| \nabla_{\Tilde{\w}^{(i)}} f(\Tilde{\w}_t^{(i)}) \|^2_{\cov^{-1}} &=  \| \w_t^{(i)} \|_\cov^2 \| \nabla_{\w^{(i)}} f(\w_t^{(i)},g^{(i)})\|^2_{\cov^{-1}}/(g^{(i)}_t)^2\\& + \left(\partial_{g^{(i)}} f(\w_t^{(i)},g_t^{(i)})\right)^2. \tag{\ref{eq:gradient_norm_in_different_parameterization} revisited}
\end{align*}
In the following, we will establish convergence individually in terms of $g$ and $\w$ and then use the above result to get a global result. 

\textbf{Convergence in scalar $g^{(i)}$ }
Since the smoothness property defined in Assumption~\ref{as:smooth_assumptions} also holds for $\fnn^{(i)}$, we can directly invoke the result of Lemma~\ref{lem:convergence_in_g} to establish a convergence rate for $g$:  
\begin{align*} 
    \left( \partial_{g^{(i)}} f^{(i)}(\w_t^{(i)}, g_{\tsc}^{(i)}) \right)^2 \leq  2^{-\tsc^{(i)}}\zeta  | b^{(0)}_t - a_t^{(0)} |/\mu^{2}  \tag{\ref{eq:convergence_lh_in_g} revisited}
\end{align*}

\textbf{Directional convergence }
By the choice of stepsize in Eq.~\eqref{eq:step_size_nn}, the gradient trajectory on $\fnn$ reduces to the gradient trajectory on $\rq(\w)$. Hence, we can establish a linear convergence in $\w^{(i)}$ by a simple modification of Eq.~\eqref{eq:convergenc_direction_lh}: 
\begin{equation}
\begin{aligned} 
&\| \w_t^{(i)} \|^2_\cov \| \nabla_\w f^{(i)}(\w_t^{(i)}, g_t^{(i)})\|^2_{\cov^{-1}} \leq & (1-\mu/L)^{2t}  \xi_t^2 g_t^2 \left( \rq(\w_0) - \rq(\w^*) \right).
\end{aligned} 
\end{equation}

The assumption~\ref{as:reg_assumptions} on loss with the choice of activation function as $\tanh$ allows us to bound the scalar $\xi_t^2$: 
\begin{align}
    \xi_t^2 \leq  2\Phi^2 + 2 i \sum_{j<i} (\theta^{(j)} c_j)^2.
\end{align}

\textbf{Combined convergence bound } Combining the above results concludes the proof in the following way  
\begin{align*} 
    \| \nabla_{\Tilde{\w}^{(i)}} f(\Tilde{\w}^{(i)}_t) \|^2_{\cov^{-1}}  \leq &  (1-\mu/L)^{2t}  C \left( \rq(\w_0) - \rq^* \right) \\&+ 2^{-\tsc^{(i)}}\zeta  | b^{(0)}_t - a_t^{(0)} |/\mu^{2},
\end{align*}
where 
\begin{align} \label{eq:constant_c_nn}
C =  2\Phi^2 + 2 i \sum_{j<i} (\theta^{(j)} c_j)^2>0.
\end{align}
\end{proof}

\section{EXPERIMENTAL DETAILS}
\subsection{Learning Halfspaces}
\textbf{Setting }
We consider empirical risk minimization (ERM) as a surrogate for \eqref{eq:halfspace_problem} in the binary classification setting and make two different choices for $\varphi(\cdot)$:

\begin{equation*}
\begin{aligned}
&\textit{softplus}(\w^\top \z):=\E_\z\left[ \log(1+\exp(\Tilde{\w}^\top\z))\right], \\
&\textit{sigmoid}(\w^\top \z):=\E_\z\left[1/(1+\exp(-\Tilde{\w}^\top \z))\right]
\end{aligned}
\end{equation*}

The first resembles classical convex logistic regression when $\y_i\in \{-1,1\}$. The second is a commonly used non-convex, continuous approximation of the zero-one loss in learning halfspaces \citep{zhang2015learning}

As datasets we use the common realworld dataset \textit{a9a} ($n=32'561, d=123$) as well a synthetic data set drawn from a multivariate gaussian distribution such that $\z \sim \mathcal{N}(\u,\cov)$ ($n=1'000, d=50$).

\textbf{Methods }
We compare the convergence behavior of {\sc Gd} and Accelerated Gradient Descent ({\sc Agd}) \citep{nesterov2013introductory} to Batch Normalization plus two versions of {\sc Gd} as well as Weight Normalization. Namely, we assess 
\begin{itemize}
    \item  {\sc Gdnp} as stated in Algorithm \ref{alg:gdnp} but with the Bisection search replaced by multiple Gradient Descent steps on $g$ (10 per outer iteration)
    \item Batch Norm plus standard {\sc Gd} which simultaneously updates $\w$ and $\g$ with one gradient step on each parameter.
    \item Weight Normalization plus standard {\sc Gd} as above. \citep{salimans2016weight}
\end{itemize}

\begin{figure}[h!]
\centering          
          \begin{tabular}{c@{}c@{}}
            \adjincludegraphics[width=0.45\linewidth, trim={22pt 22pt 30pt 30pt},clip]{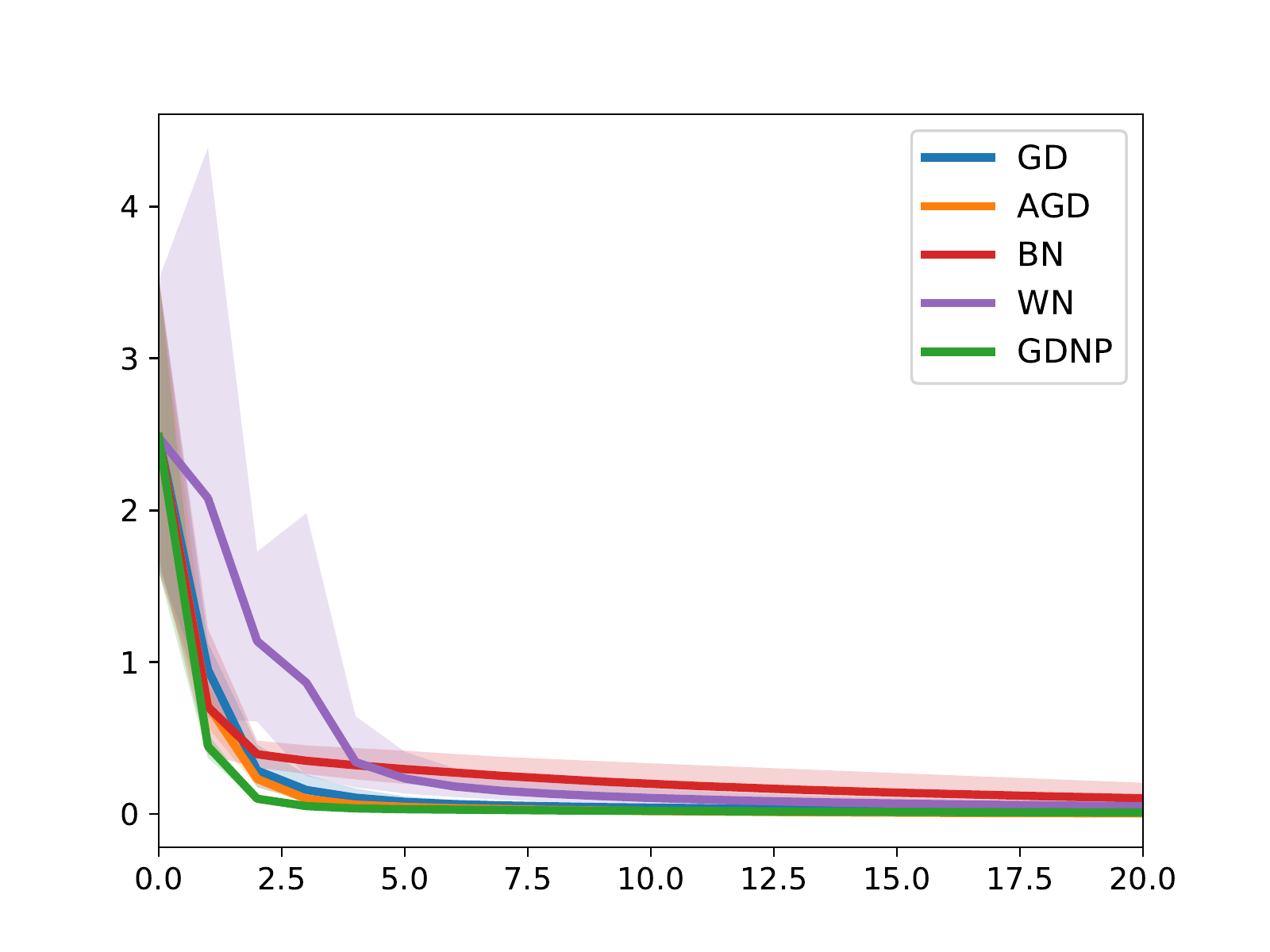} &
             \adjincludegraphics[width=0.45\linewidth, trim={22pt 22pt 30pt 30pt},clip]{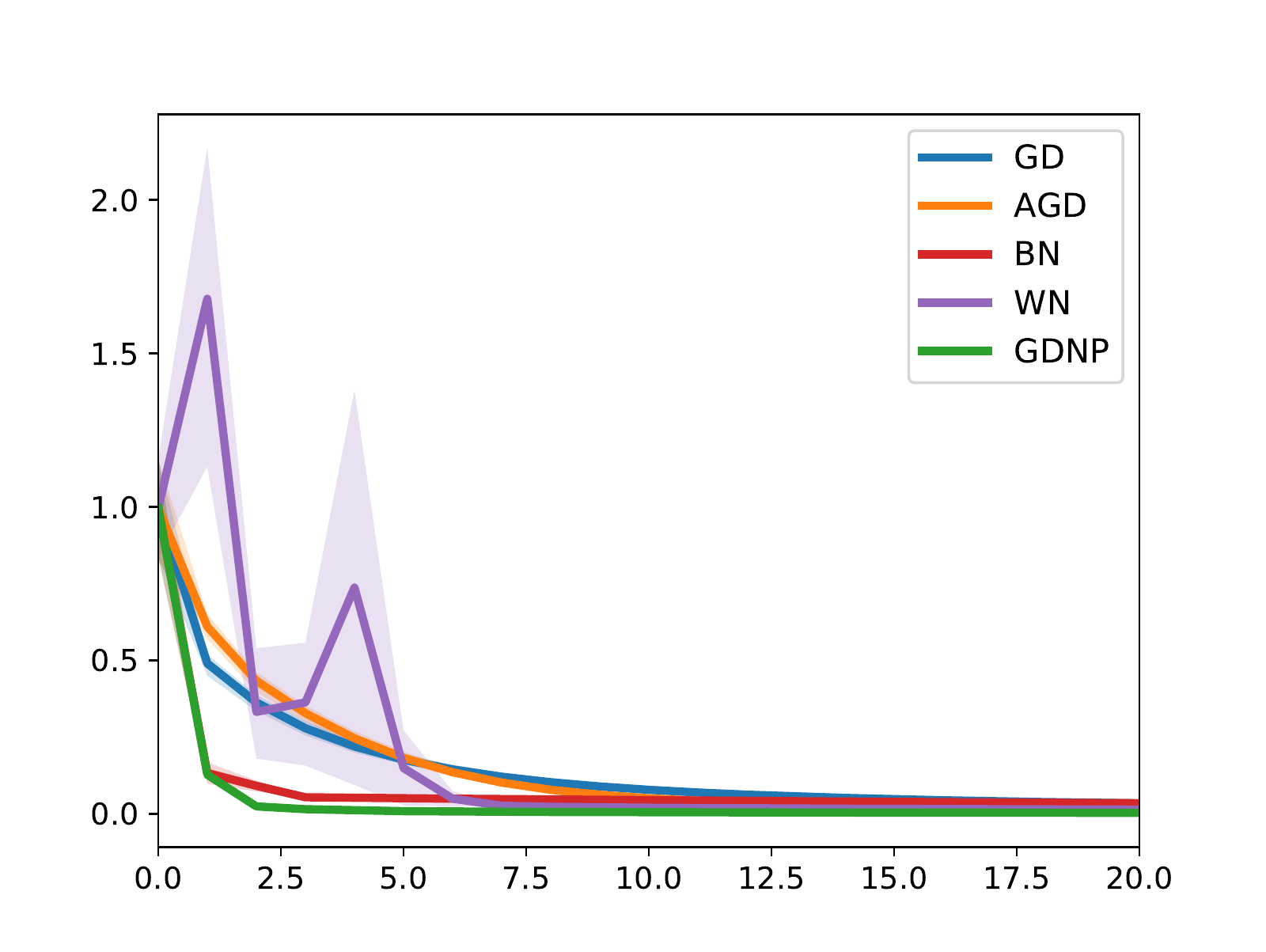}\\
              \footnotesize{{ gaussian softplus}} &
             \footnotesize{{ a9a softplus}}\\ \adjincludegraphics[width=0.45\linewidth, trim={12pt 22pt 35pt 37pt},clip]{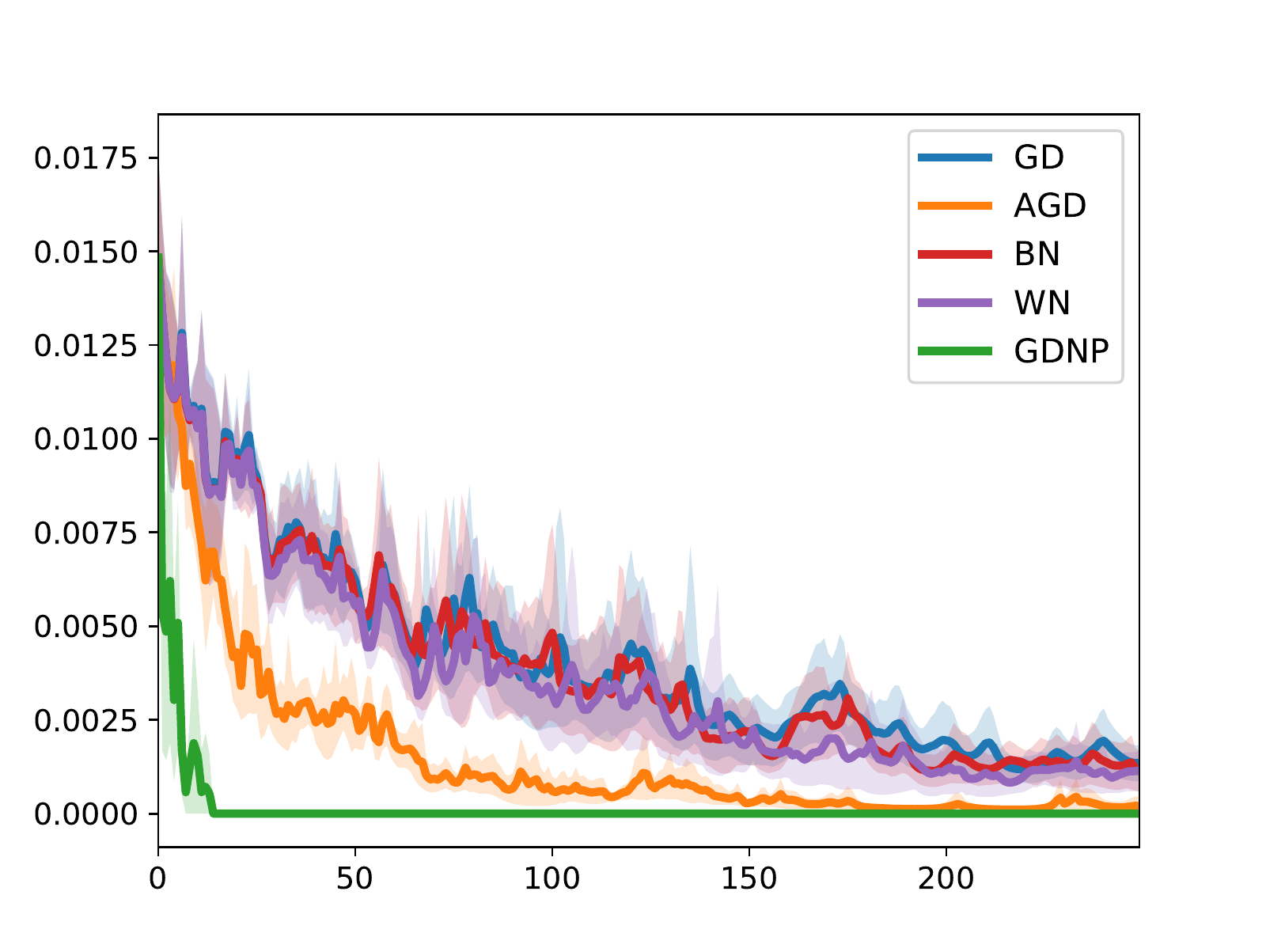}&
              \adjincludegraphics[width=0.45\linewidth, trim={12pt 22pt 35pt 37pt},clip]{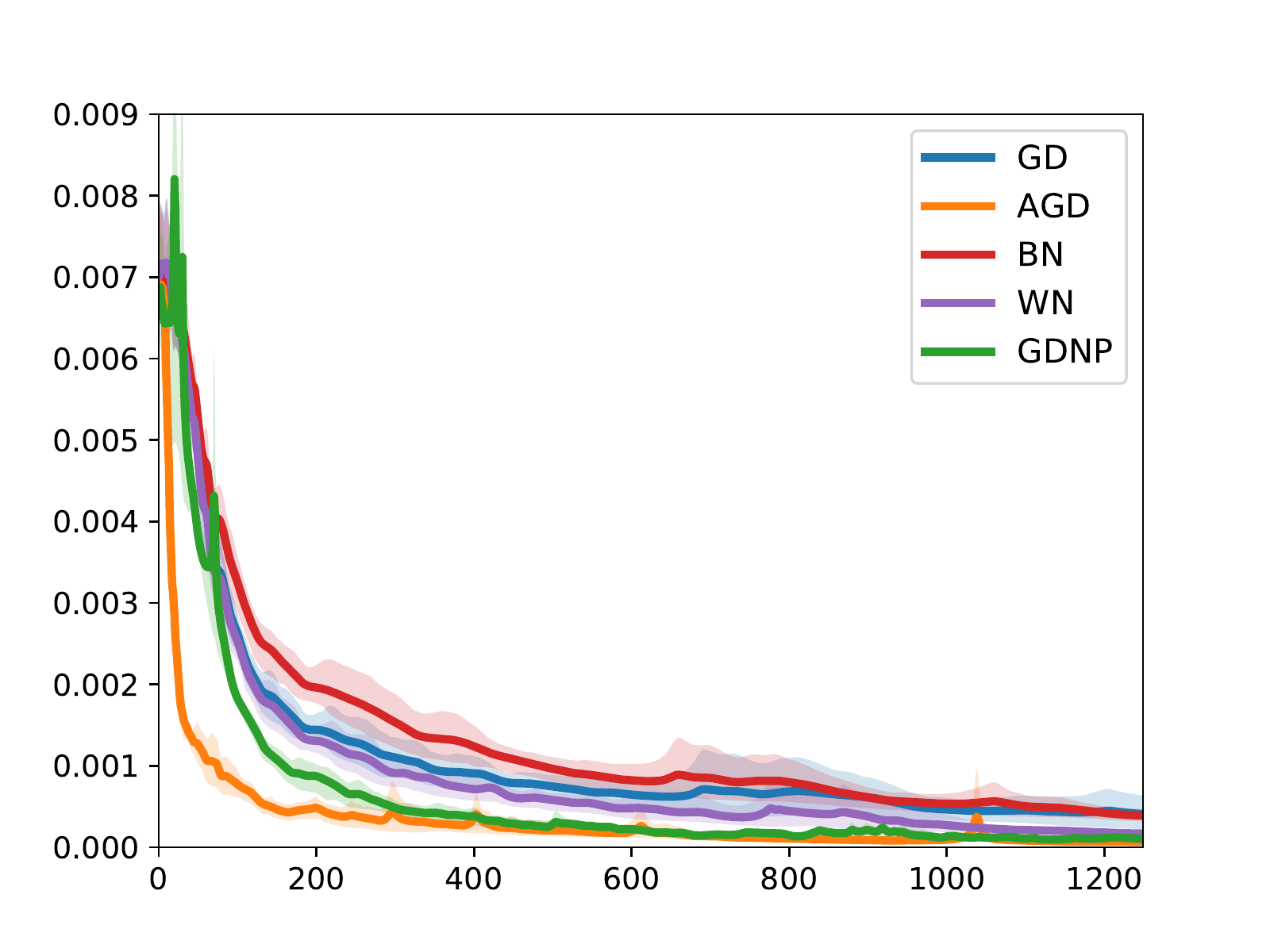} \\  
            \vspace{-1pt}
        
       \hspace{10pt}  \footnotesize{{gaussian sigmoid}}&
        \hspace{5pt}     \footnotesize{{a9a sigmoid}}
	  \end{tabular}
          \caption{The plots are the same as in Figure~\ref{fig:exp_halfspace_log} but show results in linear instead of log terms: Results of an average run (solid line) in terms of log suboptimality (softplus) and log gradient norm (sigmoid) over iterations as well as 90\% confidence intervals of 10 runs with random initialization. }
          \label{fig:exp_halfspace_linear_app}
\end{figure}

All methods use full batch sizes. {\sc Gdnp} uses stepsizes according to the policy proposed in Theorem \ref{thm:lh_convergence}. On the softplus, {\sc Gd}, {\sc Agd}, {\sc Wn} and {\sc Bn} are run with their own, constant, grid-searched stepsizes. Weight- and Batch Norm use a different stepsize for direction and scaling but only take one gradient step in each parameter per iteration. 
Since the sigmoid setting is non-convex and many different local minima and saddle points may be approched by the different algorithms in different runs, there exist no meaningful performance measure to gridsearch the stepsizes. We thus pick the inverse of the gradient Lipschitz constant $\zeta$ for all methods and all parameters, except {\sc Gdnp}. To estimate $\zeta$, we compute $\zeta_{\sup}:=  \|\Z^\top\Z\|_2/10 \geq  \sum_i \varphi(\w^\top\z_i)^{(2)} \|\Z^\top\Z\|_2/n$  where $\Z:=[z_1,\ldots,z_n]^\top \in \mathbb{R}^{n\times d}$ and $\varphi(\cdot)^{(2)}\leq 0.1$. After comparison with the largest eigenvalue of the Hessian at a couple of thousand different parameters $\Tilde{\w}$ we found the bound to be pretty tight. Note that for {\sc Gdnp}, we use $L:= \|\Z^\top\Z\|_2$ which can easily be computed as a pre-processing step and is -- contrary to $\zeta$ -- independent of $\w$. {\sc Agd} computes the momentum parameter $\beta_t=(t-2)/(t+1)$ in the convex case and uses a (grid-searched) constant $\beta_t=\beta \in [0,1]$ in the non-convex setting. We initialize randomly, i.e. we draw $\Tilde{\w}_0 \sim \mathcal{N}(0,1)$, set $\w_0:=\Tilde{\w}_0$ and choose $g_0$ sucht that $\Tilde{\w}_0=g\w_0/\|\w_0\|_\cov$.

\textbf{Results} The Gaussian design experiments clearly confirm Theorem~\ref{thm:lh_convergence} in the sense that the loss in the convex-, as well as the gradient norm in the non-convex case decrease at a linear rate. The results on \textit{a9a} show that {\sc Gdnp} can accelerate optimization even when the normality assumption does not hold and in a setting where no covariate shift is present. This motivates future research of non-linear reparametrizations even in convex optimization.

\begin{figure}[h!]\label{fig:NN_result_log}
	\begin{center}
          \begin{tabular}{c@{}c@{}}
            \adjincludegraphics[width=0.45\linewidth, trim={22pt 22pt 30pt 30pt},clip]{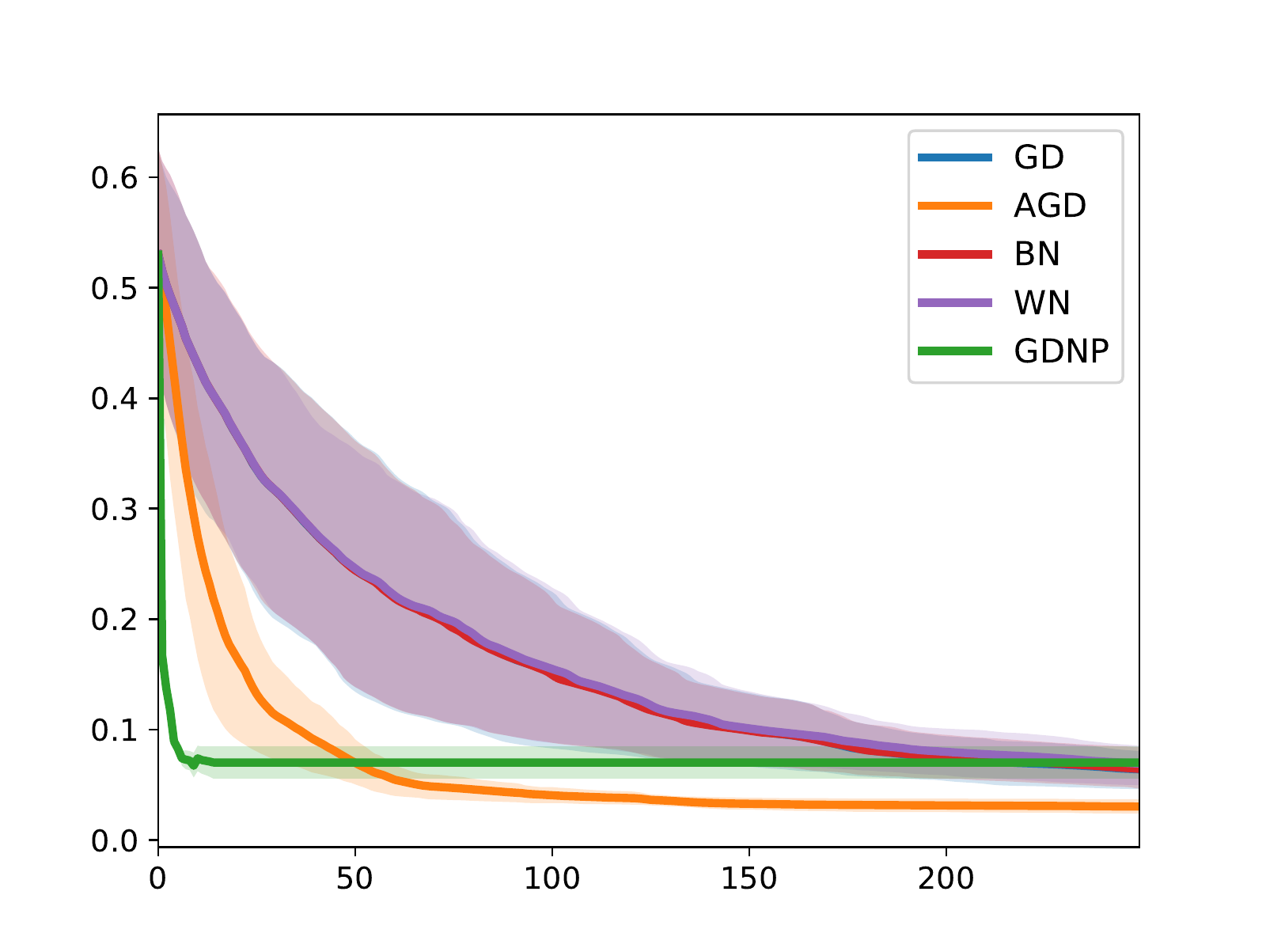} &
            \adjincludegraphics[width=0.45\linewidth, trim={22pt 22pt 30pt 30pt},clip]{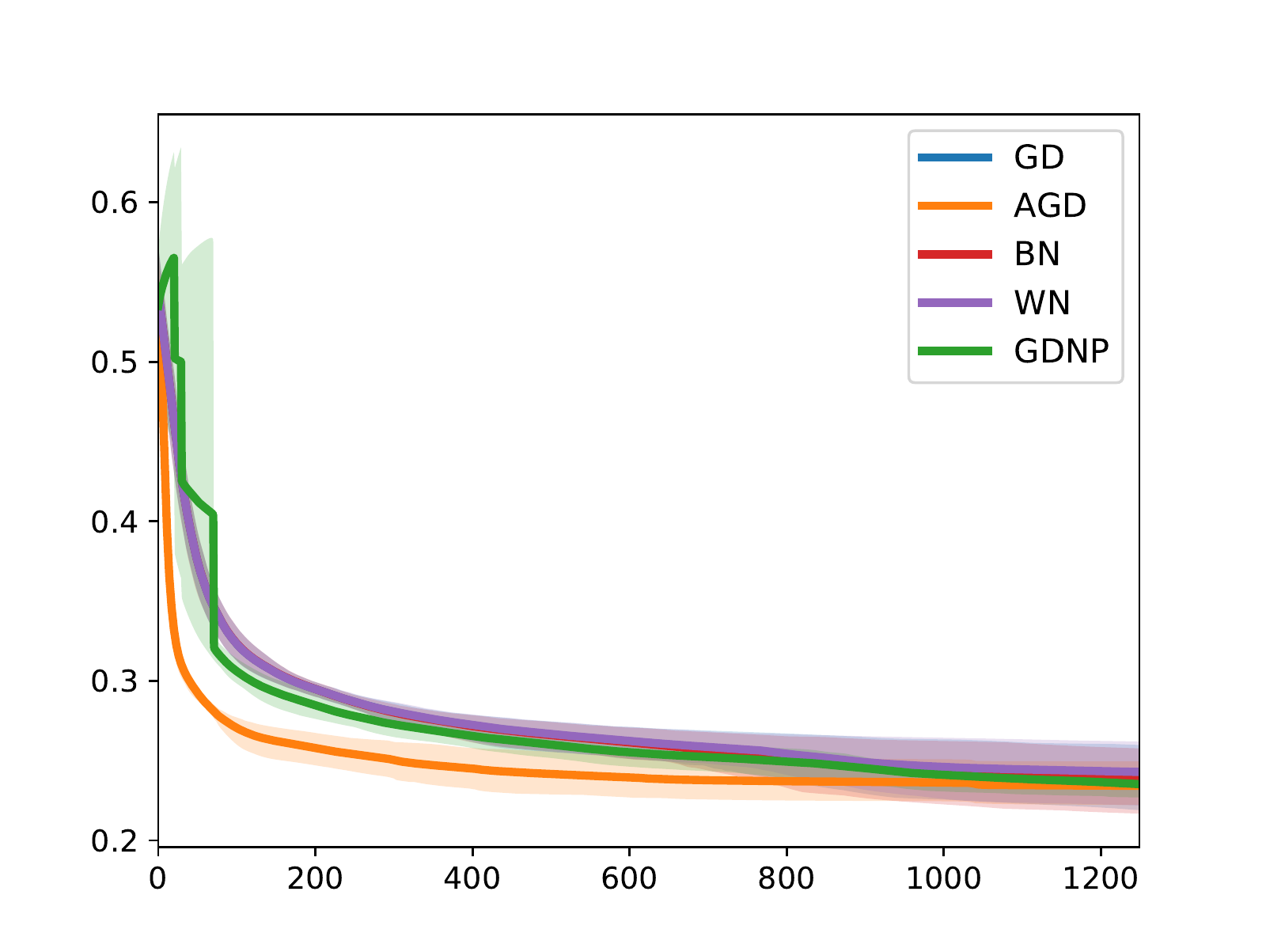} \\  \footnotesize{{ gaussian sigmoid}}&
            
                 \footnotesize{{ a9a sigmoid}}
            

	  \end{tabular}          

          \caption{ Addition to Figure  \ref{fig:exp_halfspace_log} and \ref{fig:exp_halfspace_linear_app}: Suboptimality on the non-convex sigmoid problems in linear terms.}
	\end{center}
\end{figure}

Regarding {\sc Bn} and {\sc Wn} we found a clear trade-off between making fast progress in the beginning and optimizing the last couple of digits. In the above results of Figure \ref{fig:exp_halfspace_log} and \ref{fig:exp_halfspace_linear_app} we report runs with stepsizes that were optimized for the latter case but we here note that early progress can easily be achieved in normalized parametrizations (which the linear a9a softplus plot actually confirms) e.g. by putting a higher learning rate on $\g$. In the long run similar performance to that of {\sc Gd} sets in, which suggests that the length-direction decoupling does not fully do the trick. The superior performance of {\sc Gdnp} points out that either an increased number of steps in the scaling factor $g$ or an adaptive stepsize scheme such as the one given in Eq.~\eqref{eq:stepsize_LS} (or both) may significantly increase the performance of Batch Normalized Gradient Descent {\sc Bn}. 

It is thus an exciting open question whether such simple modifications to {\sc Gd} can also speed up the training of Batch Normalized neural networks. Finally, since {\sc Gdnp} performs similar to {\sc Agd} in the non-gaussian setting, it is a logical next step to study how accelerated gradient methods like {\sc Agd} or Heavy Ball perform in normalized coordinates.

As a side note, Figure \ref{fig:loss_glms} shows how surprisingly different the paths that Gradient Descent takes before and after normalization can be. 
 
\begin{figure}[h!]
\centering          
          \begin{tabular}{c@{}c@{}}
            \adjincludegraphics[width=0.4\linewidth, trim={22pt 22pt 30pt 30pt},clip]{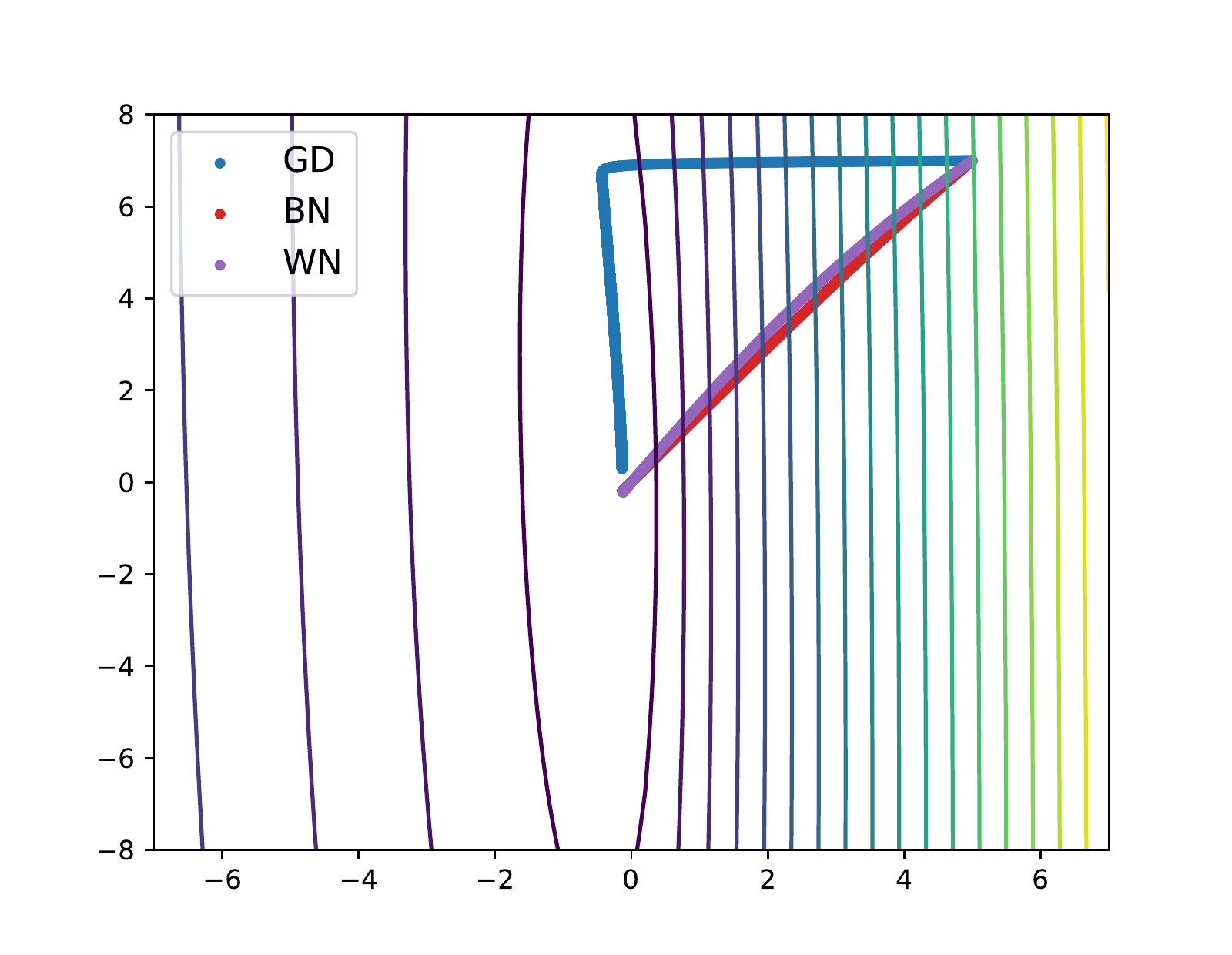} &
             \adjincludegraphics[width=0.4\linewidth, trim={22pt 22pt 30pt 30pt},clip]{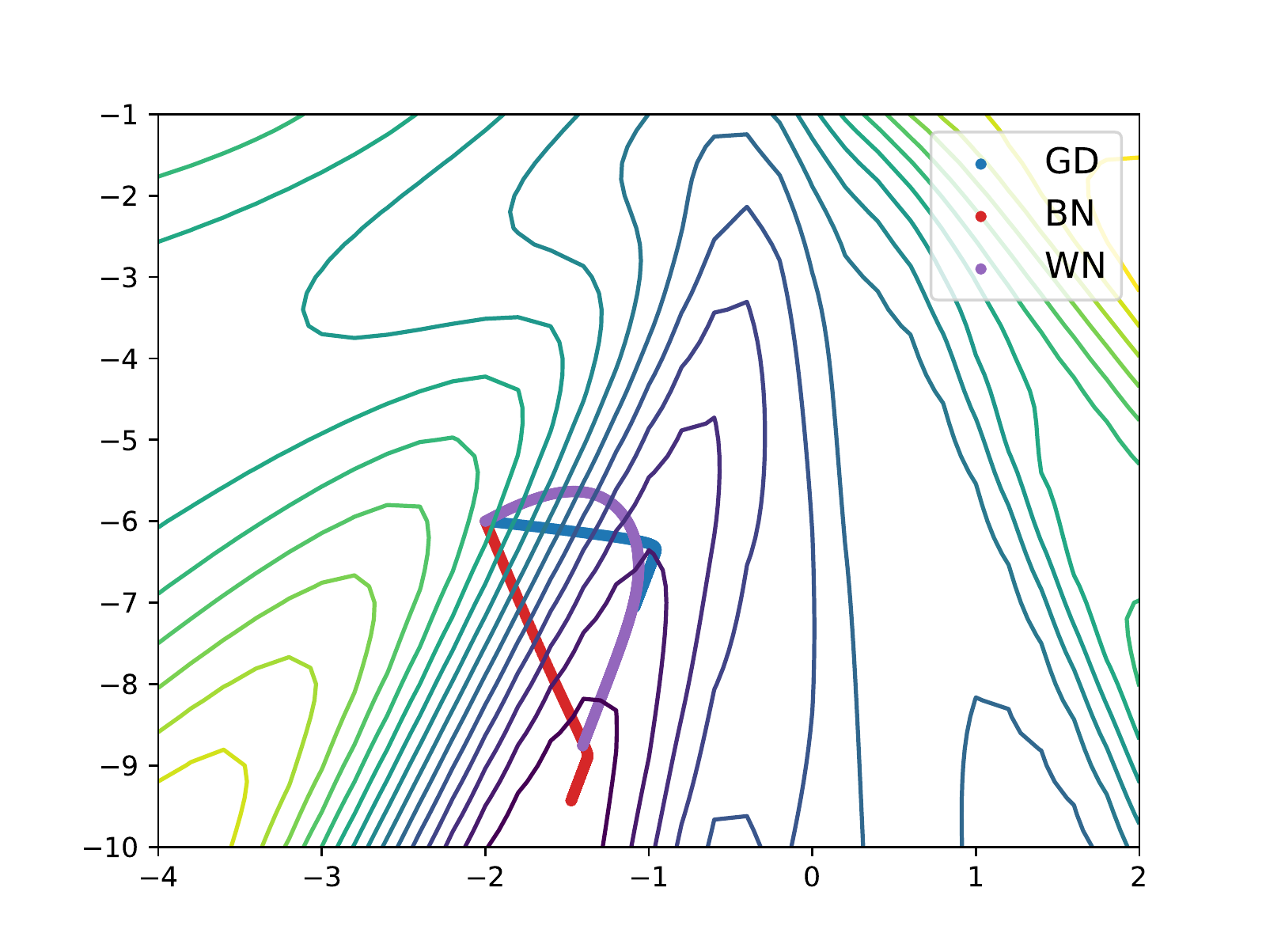}\\
             
            \adjincludegraphics[width=0.4\linewidth, trim={22pt 22pt 30pt 30pt},clip]{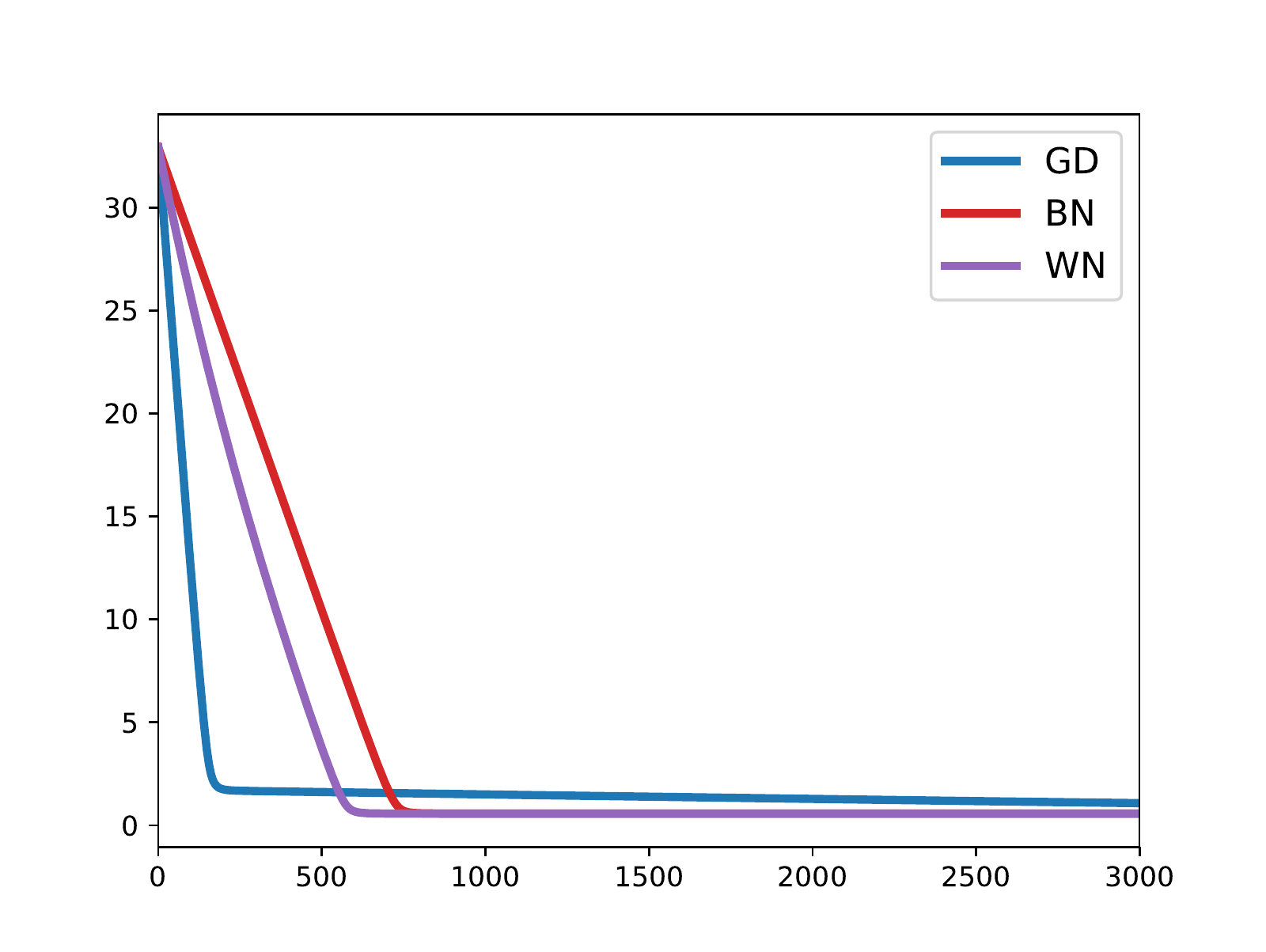} &
             \adjincludegraphics[width=0.4\linewidth, trim={22pt 22pt 30pt 30pt},clip]{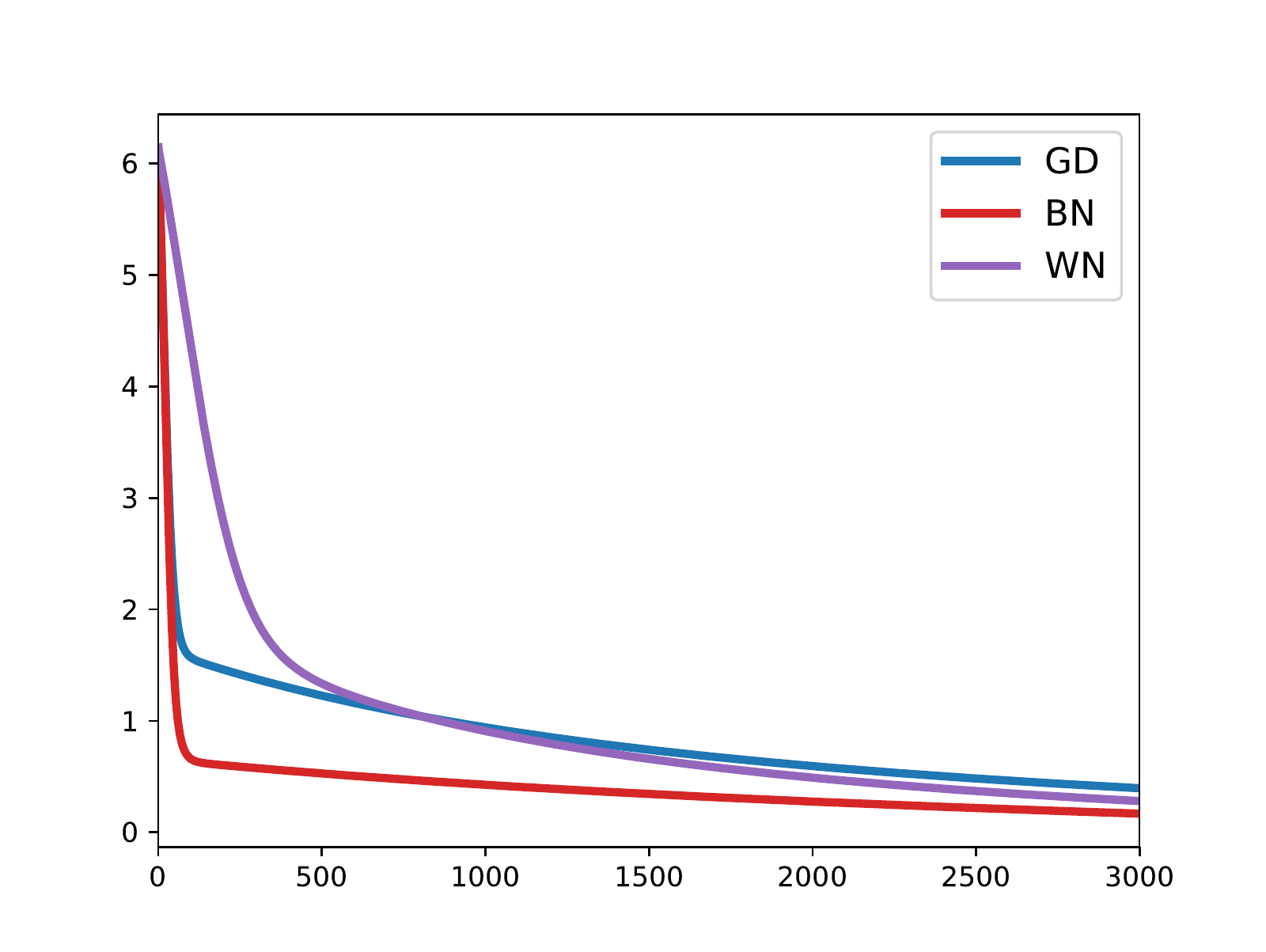}
	  \end{tabular}
          \caption{ \footnotesize{Normalization can lead to suprisingly different paths: Level sets and path (top) as well as sub-optimality (bottom) of {\sc Gd}, {\sc Bn} and {\sc Wn} (with constant step size and fixed number of iterations) on two instances of learning halfspaces with Gaussian data ($n=5000$, $d=2$). Left: convex logistic regression, right: non-convex sigmoidal regression.}}
          \label{fig:loss_glms}
\end{figure}
\subsection{Neural networks}
\textbf{Setting and methods }
We test the validity of Theorem \ref{lem:convergence_nn} and Lemma \ref{lem:critical_point_characterization_nn} outside the Gaussian setting and a normalized and an unnormalized feedforward networks on the CIFAR10 image classification task. This dataset consists of 60000 32x32 images in 10 classes, with 6000 images per class  \citep{krizhevsky2009learning}. The networks have six hidden layers with 50 hidden units in each of them. Each hidden unit has a $\textit{tanh}$ activation function, except for the very last layer which is linear. These scores are fed into a cross entropy loss layer which combines softmax and negative log likelihood loss. The experiments are implemented using the PyTorch framework \citep{paszke2017automatic}.

The first network is trained by standard {\sc Gd} and the second by {\sc Gd} in normalized coordinates (i.e. {\sc Bn}) with the same fixed stepsize on $\title{\w}$ and $\w$, but we increase the learning rate on $\g$ by a factor of 10 which accelerates training significantly. The second network thus resembles performing standard {\sc Gd} in a network where all hidden layers are Batch Normalized. We measure the cross-dependency of the central with all other layers in terms of the Frobenius norm of the second partial derivatives $\frac{\partial^2 \fnn}{\partial \W_4 \partial \W_i}$. This quantity signals how the gradients of layer 4 change when we alter the direction of any other layer. From an optimization perspective, this is a sound measure for the cross-dependencies: If it is close to zero (high), that means that a change in layer $i$ induces no (a large) change in layer 4. Compared to gradient calculations, computing second derivatives is rather expensive $O(nd^2)$ (where $d=66700$), which is why we evaluate this measure every only 250 iterations.

\begin{figure}[h!]\label{fig:NN_result_log}
	\begin{center}
          \begin{tabular}{c@{}c@{}}
            \adjincludegraphics[width=0.45\linewidth, trim={22pt 22pt 30pt 30pt},clip]{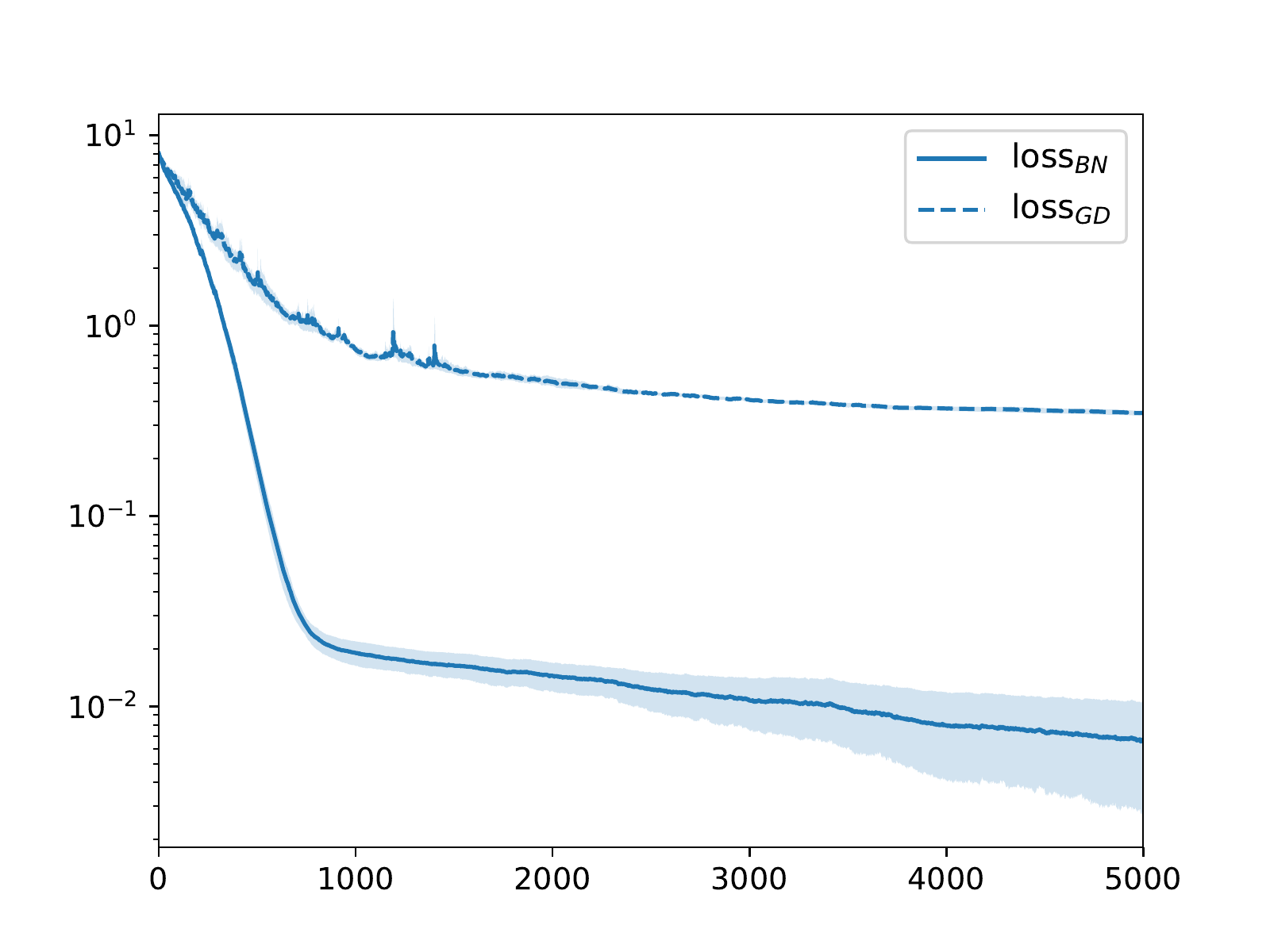} &
            \adjincludegraphics[width=0.45\linewidth, trim={22pt 22pt 30pt 30pt},clip]{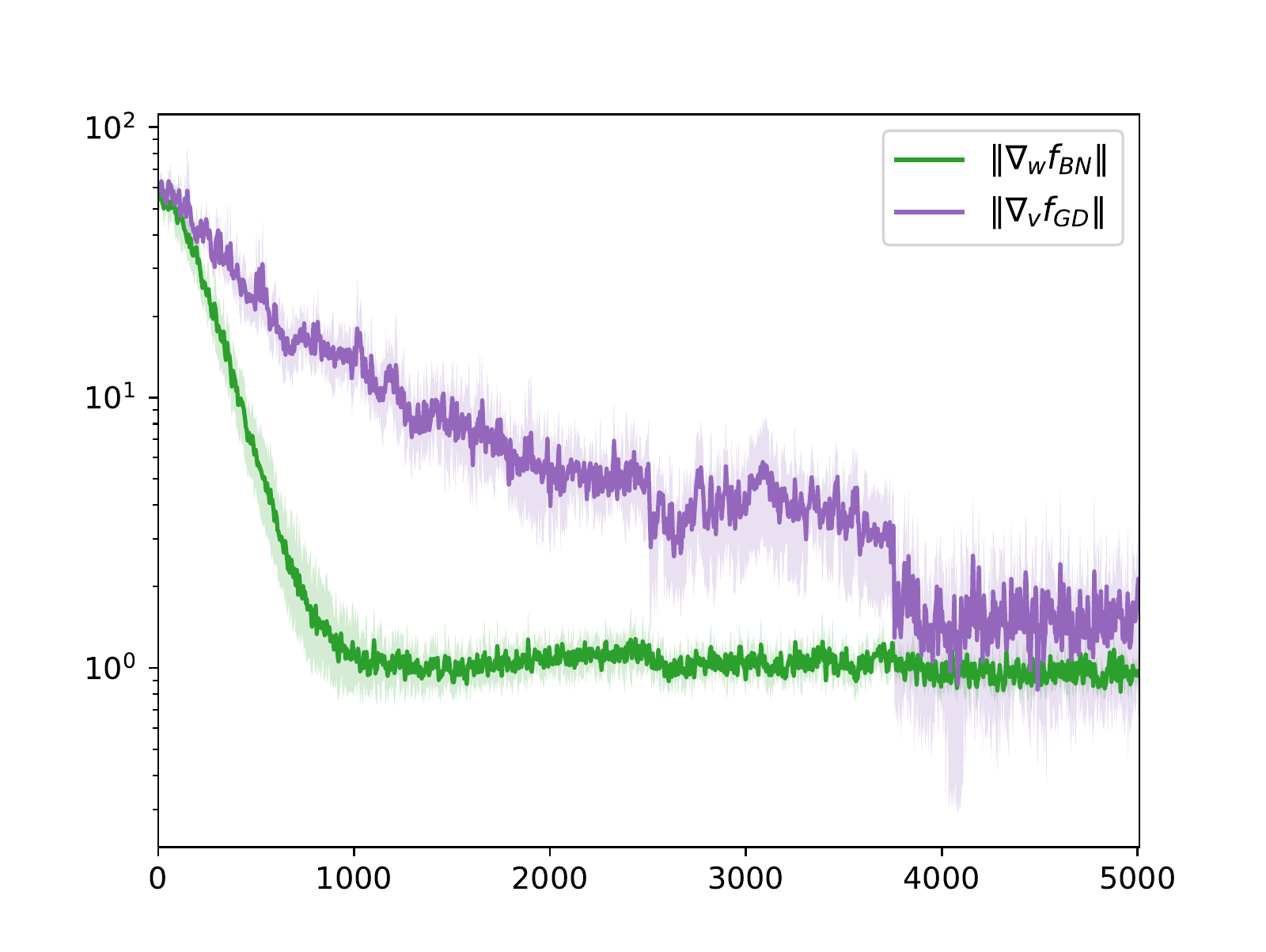} \\ \adjincludegraphics[width=0.45\linewidth, trim={22pt 22pt 30pt 30pt},clip]{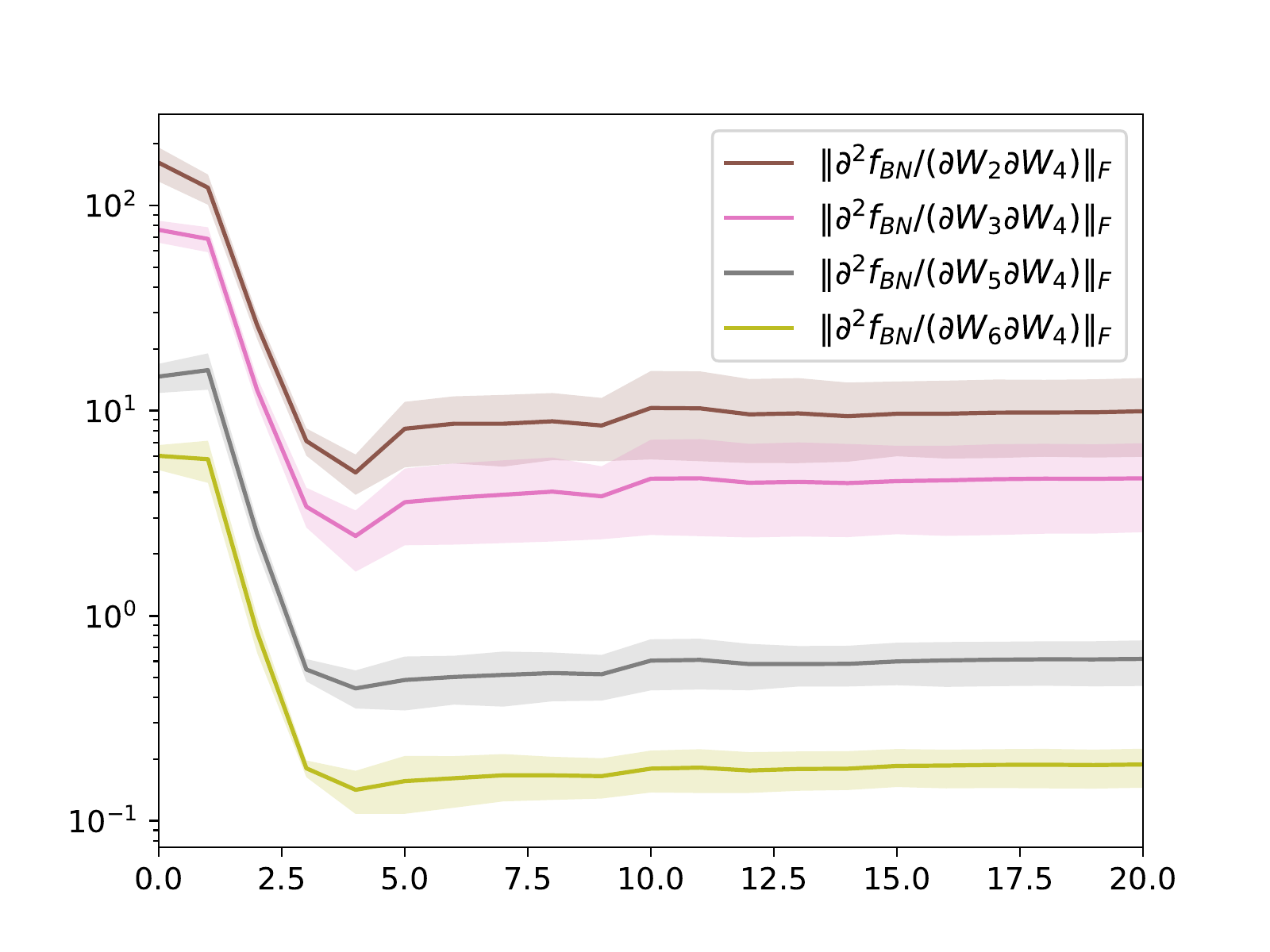}&
            
                \adjincludegraphics[width=0.45\linewidth, trim={22pt 22pt 30pt 30pt},clip]{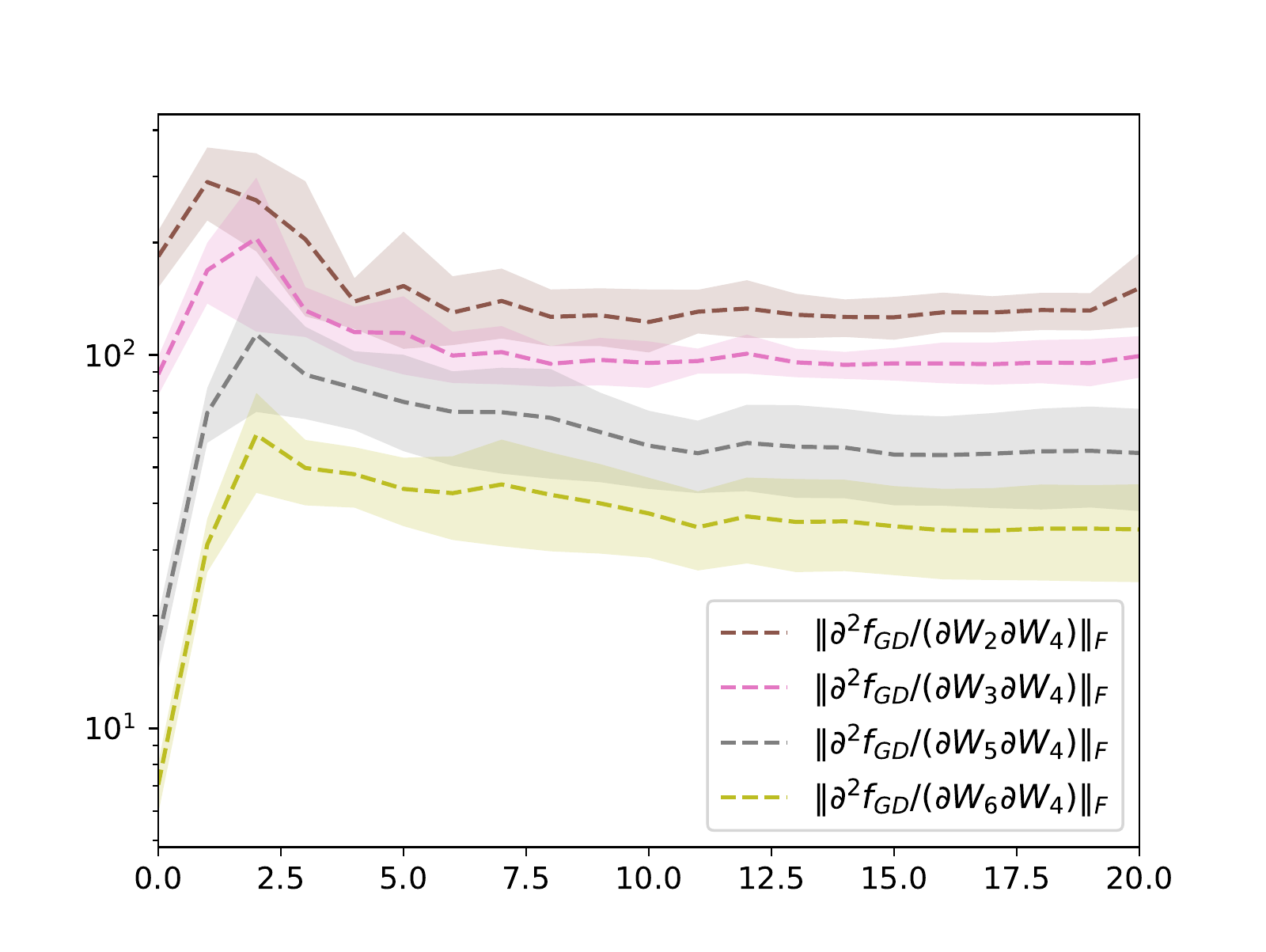}
            

	  \end{tabular}          

          \caption{ The plots are the same as in Figure \ref{fig:exp_NN_linear} but show results in log instead of linear terms: (i) Loss, (ii) gradient norm and dependencies between central- and all other layers for BN (iii) and GD (iv) on a 6 hidden layer network with 50 units (each) on the CIFAR10 dataset.}\label{fig:exp_NN_log}
	\end{center}
\end{figure}

\textbf{Results } Figure \ref{fig:exp_NN_linear} and \ref{fig:exp_NN_log} confirm that the directional gradients of the central layer are affected far more by the upstream than by the downstream layers to a surprisingly large extent. Interestingly, this holds even before reaching a critical point. The cross-dependencies are generally decaying for the Batch Normalized network ({\sc Bn}) while they remain elevated in the un-normalized network ({\sc Gd}), which suggest that using Batch Normalization layers indeed simplifies the networks curvature structure in $\w$ such that the length-direction decoupling allows Gradient Descent to exploit simpler trajectories in these normalized coordinates for faster convergence. Of course, we cannot untangle this effect fully from the covariate shift reduction that was mentioned in the introduction. Yet, the fact that the (de-)coupling increases in the distance to the middle layer (note how earlier (later) layers are more (less) important for the $\W_4$) emphasizes the relevance of this analysis particularly for deep neural network structures, where downstream dependencies might vanish completely with depth. This does not only make gradient based training easier but also suggests the possibility of using partial second order information, such as diagonal Hessian approximations (e.g. proposed in \citep{martens2012estimating}).

\end{document}